\renewcommand{\@noticestring}{Accepted at NeurIPS 2025}
\newcommand{\MLE}{\text{MLE}}
\newcommand{\F}{\mathcal{F}}
\newcommand{\A}{\mathcal{A}}
\newcommand{\real}{\mathbb{R}}
\newcommand{\DD}{\mathcal{D}}
\newcommand{\dd}{\mathrm{d}}
\newcommand{\OO}{\mathcal{O}}
\newcommand{\DKL}[2]{\DD_{\text{KL}}\left(#1\middle\|#2\right)}
\newcommand{\II}[1]{\mathbb{I}_{\left\{#1\right\}}}
\newcommand{\PP}[1]{\mathbb{P}\left[#1\right]}
\newcommand{\EE}[1]{\mathbb{E}\left[#1\right]}
\newcommand{\EEs}[2]{\mathbb{E}_{#2}\left[#1\right]}
\newcommand{\PPt}[1]{\mathbb{P}_t\left[#1\right]}
\newcommand{\EEc}[2]{\mathbb{E}\left[#1\middle|#2\right]}
\def\argmin{\mathop{\mbox{ arg\,min}}}
\def\argmax{\mathop{\mbox{ arg\,max}}}
\DeclareMathOperator\supp{supp}
\newcommand{\siprod}[2]{\langle#1,#2\rangle}
\newcommand{\biprod}[2]{\bigl\langle#1,#2\bigr\rangle}
\newcommand{\norm}[1]{\left\|#1\right\|}
\newcommand{\abs}[1]{\left|#1\right|}
\newcommand{\ev}[1]{\left\{#1\right\}}
\newcommand{\pa}[1]{\left(#1\right)}
\newcommand{\bpa}[1]{\bigl(#1\bigr)}
\newcommand{\Bpa}[1]{\Bigl(#1\Bigr)}
\newcommand{\wh}{\widehat}
\newcommand{\wt}{\widetilde}
\definecolor{PalePurp}{rgb}{0.66,0.57,0.66}
\definecolor{whatever}{rgb}{0.13,0.57,0.36}
\newcommand{\IR}{\text{IR}\xspace}
\newcommand{\IG}{\text{IG}\xspace}
\newcommand{\SIG}{\overline{\IG}\xspace}
\newcommand{\SIR}{\overline{\IR}\xspace}
\newcommand{\SIRs}{\overline{\IR}^{(2)}\xspace}
\newcommand{\SIRc}{\overline{\IR}^{(3)}\xspace}
\newcommand{\lambdas}{\lambda^{(2)}}
\newcommand{\lambdac}{\lambda^{(3)}}
\newcommand{\Rs}{R^{(2)}}
\newcommand{\Rc}{R^{(3)}}
\newlength{\leftstackrelawd}
\newlength{\leftstackrelbwd}
\def\leftstackrel#1#2{\settowidth{\leftstackrelawd}%
{${{}^{#1}}$}\settowidth{\leftstackrelbwd}{$#2$}%
\addtolength{\leftstackrelawd}{-\leftstackrelbwd}%
\leavevmode\ifthenelse{\lengthtest{\leftstackrelawd>0pt}}%
{\kern-.5\leftstackrelawd}{}\mathrel{\mathop{#2}\limits^{#1}}}
\newcommand{\Llik}{L^{(1)}}
\newcommand{\Lest}{L^{(2)}}
\newcommand{\Deltah}{\wh{\Delta}}
\newcommand{\pIDS}{\pi^{(\mathbf{SOIDS})}}
\newcommand{\pFGTS}{\pi^{(\mathbf{FGTS})}}
\newcommand{\pmix}{\pi^{(\mathbf{mix})}}
\theoremstyle{definition}
\title{Sparse Optimistic Information Directed Sampling}
\author{Ludovic Schwartz ~~~Hamish Flynn ~~~Gergely Neu\\
Universitat Pompeu Fabra, Barcelona, Spain\\
\texttt{\{ludovic.schwartz, hamishedward.flynn, gergely.neu\}@upf.edu}
}
\date{}
\begin{document}

\maketitle
\begin{abstract}

Many high-dimensional online decision-making problems can be modeled as stochastic sparse linear bandits. Most existing 
algorithms are designed to achieve optimal worst-case regret in either the data-rich regime, where polynomial dependence 
on the ambient dimension is unavoidable, or the data-poor regime, where dimension-independence is possible at the cost 
of worse dependence on the number of rounds. In contrast, the sparse Information Directed Sampling (IDS) algorithm
satisfies a Bayesian regret bound that has the optimal rate in both regimes simultaneously. In 
this work, we explore the use of Sparse Optimistic Information Directed Sampling (SOIDS) to achieve the same adaptivity in 
the worst-case setting, without Bayesian assumptions. Through a novel analysis that enables the use of a time-dependent 
learning rate, we show that SOIDS can optimally balance information and regret. Our 
results extend the theoretical guarantees of IDS, providing the first algorithm that simultaneously achieves optimal 
worst-case regret in both the data-rich and data-poor regimes. We empirically demonstrate the good performance of SOIDS.

\end{abstract}

\begin{small}
\end{small}
\section{Introduction}

In stochastic linear bandits, one assumes that the mean reward associated with each action is linear in an unknown 
$d$-dimensional parameter vector \citep{abe1999associative, auer2002using, dani2008stochastic, abbasi2011improved}.
Under standard conditions, it is known that the minimax regret in this setting is of the order $ \OO(d \sqrt{T}) $ \citep{dani2008stochastic, rusmevichientong2010linearly}. Numerous follow-up works have investigated the possibility of reduced regret under various structural assumptions on the unknown parameter vector, the noise, or the shape of the decision 
set \citep{Valko_M_K_K14, Chu_L_R_S11, Kirsc_K18}, \citep[Chapter 22]{lattimore2020bandit}. One such assumption is that the unknown parameter vector is \emph{sparse}, which means that it has only $ s \ll d $ non-zero components. This setting is called \emph{sparse linear bandits} and $s$ is referred to as 
the \emph{sparsity level}. In this setting, previous work has established the existence of algorithms with regret 
scaling as $ \OO(\sqrt{sdT}) $ \citep{Abbas_P_S12}. This result is complemented by a lower bound, which says that this rate cannot be improved as long as $ T \geq d^{\alpha} $ for some $ \alpha>0$ \citep{lattimore2020bandit}. We refer to this scenario as the
\textit{data-rich regime}. Since this bound scales polynomially with the dimension $d$, many researchers have 
considered this to be a negative result, interpreting it as a sign that sparsity cannot be effectively exploited 
in linear bandit problems. This interpretation has been challenged by a more recent observation that, when the action
set admits an \textit{exploratory distribution}, simple ``explore-then-commit'' algorithms enjoy regret bounds of order
$ \OO((sT)^\frac{2}{3})$ \citep{Hao_L_W20, jang2022popart}. These bounds scale only logarithmically with 
the dimension, and constitute a major improvement over the previously mentioned rate in the \textit{data-poor
regime}, where $ T \ll \left( \frac{d}{s} \right)^3 $. 
Most known algorithms are specialized to either the data-poor or data-rich regime, and perform poorly in the other one. A notable exception is the \emph{sparse Information Directed Sampling} algorithm introduced in \citet{Hao_L_D21a}, which 
performs almost optimally in both regimes. However, \citet{Hao_L_D21a} only provide \emph{Bayesian} performance guarantees for sparse IDS. These results hold on average, assuming that the problem instance is drawn at random from a known prior distribution.


In this work, 
we lift this assumption and develop an algorithm that can adapt to both regimes in a ``frequentist'' sense: we 
assume that the true parameter is fixed and unknown to the learner, and provide guarantees that hold for any given instance.
The algorithm is an adaptation of the recently proposed Optimistic Information Directed
Sampling (OIDS) algorithm of \citet*{Neu_P_S24}, which itself is an adaptation of the classic Bayesian IDS algorithm originally proposed by \citet{Russo_V17}. Within the Bayesian setting, it has been shown that IDS can exploit various 
types of problem structure, and adapt to the hardness of the given instance \citep{Hao_L22a, Hao_L_Q22}. These results have been
complemented by the recent work of \citet*{Neu_P_S24}, which showed that similar improvements can be 
achieved without Bayesian assumptions, via a simple adjustment of the standard IDS method. In this paper, we continue 
this line of work. We show that a sparse version of OIDS enjoys a worst-case regret bound that matches the optimal rate in both regimes simultaneously.


Our contribution is as follows:
\begin{itemize}

	\item We extend the analysis of the optimistic posterior to allow the use of 
	time-dependent learning rates and history-dependent learning rates. Time-dependent learning rates allow us to drop the requirement that the horizon is known in advance, and are essential for worst-case regret bounds that can adapt to both regimes. History-dependent learning rates allow us to update the 
	learning rate based on data observed by the agent instead 
of some loose
		theoretical constant, a necessity for efficient algorithms.
	\item We demonstrate that the Sparse Optimistic Information Directed Sampling (SOIDS) algorithm
		recovers almost optimal rates in
		both the data-poor and data-rich regimes. This is the first algorithm to do so in a frequentist setting.
\end{itemize}

\section{Preliminaries}


\paragraph{Sparse linear bandits.} We consider the following decision-making game, in which a learning agent interacts with an environment over a sequence of $T$ rounds. At the start of each round $t$, the learner selects an action $A_t \in \mathcal{A} \subset \mathbb{R}^d$ according to a randomized policy $\pi_t \in \Delta(\A)$. In response, the environment generates a stochastic reward $Y_t = r(A_t) + \epsilon_t$, where $r: \A \to \real$ is a fixed reward function and $\epsilon_t$ is zero-mean, conditionally 1-sub-Gaussian noise. We assume that the action set $\mathcal{A}$ is finite, and that the reward function can be written as
\begin{equation*}
r(a) = \siprod{\theta_0}{a}\,,
\end{equation*}
where $\theta_0 \in \mathbb{R}^d$ is an unknown parameter vector. We make the mild boundedness assumptions that $\max_{a \in \A}\|a\|_{\infty} \leq 1$ and $\|\theta_0\|_1 \leq 1$. We study the special case of this problem in which the parameter vector $\theta_0$ is 
$s$-sparse in the sense that at most $s \ll d$ of its components are non-zero. In other words, we assume that $\theta_0$ belongs to the following \emph{sparse 
parameter space}:
\begin{equation*}
\Theta = \left\{ \theta \in \mathbb{R}^d : {\textstyle\sum_{j=1}^{d}} \II{\theta_j \neq 0} \leq s,\ \|\theta\|_1 
\leq 1 \right\}\,.
\end{equation*}
We assume that the sparsity level $s$ is known to the agent. The performance of the agent is evaluated in terms of the \emph{regret}, which is defined as
\begin{equation}
\label{eq:regret}
R_T = T \max_{a \in \mathcal{A}} \siprod{\theta_0}{a} - \mathbb{E} \left[ \sum_{t=1}^{T} r(A_t, \theta_0) \right]\,,
\end{equation}
where the expectation is taken with respect to both the random choices of the agent and the random noise in the 
observed rewards. We note that the regret is implicitly a function of the true parameter $ \theta_0$. Our focus is on proving regret bounds that hold for arbitrary choices of $\theta_0 \in \Theta $.

\paragraph{The data-rich and data-poor regimes.} As mentioned in the introduction, it is known there exist algorithms for sparse linear bandits with worst-case regret of the order $\OO(\sqrt{sdT})$ \citep{Abbas_P_S12}. This regret bound is only meaningful when the dimension $d$ is smaller than the number of rounds $T$, a situation referred to as the data-rich regime. Under the assumption that there exists an exploratory policy, \citet{Hao_L_W20} showed that there is a simple algorithm that satisfies a problem-dependent regret bound, which can be meaningful in the so-called data-poor regime, where $d$ is much larger than $T$. Formally, we say that there exists an exploratory policy if the action set $\A$ is such that
\begin{equation*}
C_{\min} := \max_{\mu \in \Delta(\A)} \sigma_{\min}\left(\int_{\A} a a^{\top} \, d \mu(a)\right) > 0\,,
\end{equation*}

which is equivalent to the condition that $\A$ spans $\mathbb{R}^d$. The exploratory policy is the distribution on $\A$ that achieves the maximum (which is guaranteed to exist when $\A$ is finite). The Explore the Sparsity Then Commit (ESTC) algorithm was shown to satisfy a regret bound of the order $\OO(s^{2/3}T^{2/3}C_{\min}^{-2/3})$ \citep{Hao_L_W20}. The transition between the $T^{2/3}$ rate in the data-poor regime and the $\sqrt{T}$ rate in the data-rich regime also appears in an existing lower bound of the order $\Omega(\min(s^{1/3}T^{2/3}C_{\min}^{-1/3}), \sqrt{dT})$ \citep{Hao_L_W20}.

\paragraph{Adapting to both regimes.} Recently, \citet{Hao_L_D21a} showed that the sparse Information Directed Sampling (IDS) algorithm performs well in both regimes. Under the sparse optimal action condition (Definition~\ref{def:sparse_optimal_action}), IDS satisfies a regret bound of the order $\OO(\min(\sqrt{dT\Delta}, (sT)^{2/3}\Delta^{1/3}C_{\min}^{-1/3}))$, where $\Delta \propto \min(\log(|\A|), s\log(dT/s))$. This is simultaneously optimal in both the data-rich and data-poor regimes. However, this result is limited to the 
Bayesian setting. This is because IDS uses the Bayesian posterior to quantify uncertainty, which is only meaningful if $\theta_0$ really is a random draw from the prior.

\paragraph{The sparse optimal action condition.}
Part of our analysis requires that a certain technical condition is satisfied. This condition comes from prior work \citep{Hao_L_D21a}, and is used to bound the
regret in the data-poor regime (cf.~Lemma \ref{lemma:IR_bound}).
\begin{definition}
\label{def:sparse_optimal_action}
For a given prior $Q_1^+$, an action set $\A$ has sparse optimal actions if with probability 1 over the random draw of $\theta$ from $Q_1^+$, there exists $a^{\prime} \in \argmax_{a \in \A}r(a, \theta)$ such that $\|a^{\prime}\|_0 \leq s$.
\end{definition}

We use a prior that only assigns positive probability to $s$-sparse vectors, which means the sparse
optimal action property
is satisfied whenever the action set is an $\ell_p$-ball.
Note that the hard instances in both the $\sqrt{sdT}$ lower bound in Theorem 24.3 of \cite{lattimore2020bandit} and
the $s^{2/3}T^{2/3}$ lower bound in Theorem 5 of \citet{jang2022popart} satisfy the sparse optimal action
property\footnote{The optimal actions in the hard instance used to prove Theorem 5 in \citet{jang2022popart} are
$2s$-sparse, which still allows us to prove the same bound on the surrogate 3-information ratio, up to constant
factors.}. Therefore, imposing this additional condition does not trivialize the problem.

\paragraph{Notation.} We conclude this section by introducing some additional notation that will be used in the subsequent sections. For any candidate parameter vector (or model) $\theta \in \mathbb{R}^d$, we let $r(a, \theta) = \siprod{\theta}{a}$ denote the corresponding linear reward function. In addition, we define $a^*(\theta) = \argmax_{a \in \A} r(a, \theta)$ (with ties broken arbitrarily) and $r^*(\theta) = r(a^*(\theta), \theta)$ to be the optimal action and maximum reward for the model $\theta$. The gap of an action $a$ for a model $\theta$ is $ \Delta (a, \theta) = r^{*}(\theta) - r(a, \theta) $.
Similarly, the gap for a policy $\pi \in \Delta(\A)$ and a model distribution $ Q\in \Delta(\Theta) $ is $\Delta(\pi,Q)
= \int_{\A\times \Theta}\Delta (a, \theta)\,\mathrm{d}\pi\otimes Q(a,\theta)$, and we let $\Delta_t = \Delta(\pi_t, \theta_0)$ denote the gap of the policy played by the agent in round $t$ under the true model $\theta_0$. Using this notation, the regret can be written as $R_T = \mathbb{E}[\sum_{t=1}^{T}\Delta_t]$. We define the unnormalized Gaussian likelihood function $p(y|\theta, a) = \exp(-\frac{(y - \siprod{\theta}{a})^2}{2})$. Finally, we let $\mathcal{F}_t = \sigma(A_1, Y_1, \ldots, A_t, Y_t)$ denote the $\sigma$-algebra generated by the interaction between the agent and the environment up to the end of round $t$. 

\section{Sparse Optimistic Information Directed Sampling}

We develop an extension of the Optimistic Information Directed Sampling (OIDS) algorithm proposed by 
\citet*{Neu_P_S24}. The main difference between OIDS and IDS is that the
Bayesian posterior is replaced by an appropriately adjusted \emph{optimistic posterior}. For an arbitrary prior 
$ Q_1^{+} \in
\Delta(\Theta) $, the optimistic posterior is defined by the following update rule:
\begin{equation}
\label{eq:update_Optimistic_posterior}
	\frac{dQ_{t+1}^{+}}{dQ_1^{+}}(\theta) \propto \prod_{s=1}^{t}(p(Y_s\mid \theta, A_s))^ \eta \cdot \exp\Bpa{\lambda_t
	\sum_{s=1}^{t} \Delta(A_s, \theta)}. 
\end{equation}
Here, $ \eta $ is a positive constant that should be thought of as ``large'', and $ (\lambda_t)_t $ is a decreasing sequence of
positive real numbers that decays to $ 0 $, and should be thought of as ``small''. 
We allow $ \lambda_t $ to be computed by the 
algorithm at the end of the round $t$. In other words, any $ \F_t $-measurable $ \lambda_t $
is admissible. Note that when $ \eta=1 $ and $
\lambda_t = 0 $, the optimistic posterior coincides with the Bayesian posterior.
While this construction is closely related to the optimistic posterior update described in \citet{Zhang_22} and 
\citet*{Neu_P_S24}, there are a few important differences. First, the $\Delta(A_s,\theta)$ term appearing in the 
adjustment serves as an alternative to their proposal of using $r^*(\theta)$ for the same purpose. 
Intuitively this serves to ``overestimate'' the true gaps with the optimistic posterior, driving exploration towards 
parameters that promise rewards much higher than whatever would have been accrued by the agent. In contrast, the 
adjustment of \citet{Zhang_22} drives exploration towards parameters $\theta$ with high optimal reward regardless of 
how well the agent would have performed under the same $\theta$---meaning that it unduly assigns mass to 
uninteresting parameter choices, where any policy is guaranteed to work well anyway. 
Intuition aside, this adjustment greatly simplifies our analysis of the optimistic posterior as compared to the analysis of \citet{Zhang_22} and 
\citet*{Neu_P_S24}. An important additional novelty is that our update features a time-dependent exploration parameter 
$\lambda_t$, which is crucial for the adaptive regret bounds that we seek in this work.
%
To describe the OIDS algorithm, we must first define the \textit{surrogate
information gain} and the \textit{surrogate regret}. For any round $t$ and any policy $\pi \in \Delta(\A)$, the surrogate information gain is defined as
\begin{equation*}
\SIG_t(\pi) = \frac{1}{2}\sum_{a \in \A} \pi(a) \int_{\Theta}\pa{\siprod{\theta- \bar{\theta}(Q_t^+)}{a} }^2 \, 
\dd Q_t^+(\theta)\,,\label{eq:SIG}
\end{equation*}
where for any $Q \in \Delta(\Theta)$, $ \bar{\theta}(Q) = \EEs{\theta}{\theta \sim Q}$ is the mean parameter under distribution $Q$. The surrogate regret is defined as
\begin{equation*}
\label{eq:SR}
\Deltah_t(\pi) = \sum_{a \in \A} \pi(a) \int_{\Theta} \Delta(a, \theta) \, \dd Q_t^+(\theta)\,.
\end{equation*}

For
any policy $ \pi $ and any $\gamma \geq 2 $, 
we define the \textit{surrogate generalized information ratio} as
\begin{equation}
\label{eq:SIR}
	\SIR_t^{(\gamma)}(\pi) = \frac{(\Deltah_t(\pi))^{\gamma}}{\SIG_t(\pi)} = 2\cdot  \frac{\left( \sum_{a \in \A} 
\pi(a)
	\int_{\Theta}\siprod{\theta}{a^{*}(\theta) - a} \, dQ_t^{+}(\theta)\right)^{\gamma}}{\sum_{a\in \A} \pi(a)
\int_{\Theta} (\siprod{\theta- \bar{\theta}(Q_t^{+})}{a})^2 \, dQ_t^{+}(\theta)}.
\end{equation}
We can at last define the Sparse Optimistic Information Directed Sampling (SOIDS) algorithm. In each 
round $t$, the policy played by SOIDS is defined to be the distribution on $\A$ that minimizes the $2$-information ratio:
\begin{equation}
\pIDS_t = \argmin_{\pi \in \Delta(\A)} \SIRs_t(\pi)\,.\label{eq:soids_policy}
\end{equation}
The choice of $ \gamma=2 $ is motivated by the fact that the minimizer of the $ 2
$-information ratio is an approximate minimizer of surrogate generalized information ratio for all $ \gamma \geq 2 $.
\begin{lemma}
\label{lemma:GIR_minimizer}
For all $ \gamma \geq 2 $,
\begin{equation*}
	\SIR_t^{(\gamma)}(\pIDS_t)\leq 2^ {\gamma-2}\min_{\pi\in\Delta(\A)}\SIR_t^{(\gamma)}(\pi)\,.
\end{equation*}
\end{lemma}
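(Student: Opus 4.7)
The plan is to directly compare $\pi_2 := \pIDS_t$, the minimizer of $\SIR_t^{(2)}$, with an arbitrary minimizer $\pi^\star \in \argmin_{\pi \in \Delta(\A)} \SIR_t^{(\gamma)}(\pi)$, exploiting the fact that both $\Deltah_t$ and $\SIG_t$ are linear functionals on $\Delta(\A)$. Abbreviate $D_2 := \Deltah_t(\pi_2)$, $G_2 := \SIG_t(\pi_2)$, $D^\star := \Deltah_t(\pi^\star)$, $G^\star := \SIG_t(\pi^\star)$, and discard the degenerate cases in which some $G$ vanishes (these make the stated bound trivial). The $2$-information-ratio optimality of $\pi_2$ gives $D_2^2/G_2 \leq D^{\star 2}/G^\star$, equivalently $G^\star/G_2 \leq D^{\star 2}/D_2^2$. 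Substituting this into the ratio of $\gamma$-information ratios yields
\begin{equation*}
\frac{\SIR_t^{(\gamma)}(\pi_2)}{\SIR_t^{(\gamma)}(\pi^\star)} = \left(\frac{D_2}{D^\star}\right)^{\!\gamma} \frac{G^\star}{G_2} \leq \left(\frac{D_2}{D^\star}\right)^{\!\gamma-2},
\end{equation*}
so the lemma reduces to the surrogate-regret bound $D_2 \leq 2 D^\star$.

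If $D_2 \leq D^\star$ there is nothing to prove, so the focus is on the case $D_2 > D^\star$. Consider the line segment $\pi_\beta := (1-\beta)\pi^\star + \beta \pi_2$ for $\beta \in [0,1]$. By linearity, both $\Deltah_t(\pi_\beta)$ and $\SIG_t(\pi_\beta)$ are affine in $\beta$, and the map $f(\beta) := \Deltah_t(\pi_\beta)^2/\SIG_t(\pi_\beta)$ is smooth. Since $\pi_2 = \pi_{\beta=1}$ is a global minimizer of $\SIR_t^{(2)}$, the restriction of $f$ to $[0,1]$ attains its minimum at the right endpoint, yielding the first-order condition $f'(1) \leq 0$. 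A direct computation gives
\begin{equation*}
f'(1) = \frac{D_2 \bigl[\, 2 (D_2 - D^\star) G_2 - D_2 (G_2 - G^\star) \,\bigr]}{G_2^2},
\end{equation*}
and in the regime $D_2 > D^\star$ the $2$-IR inequality also implies $G_2 \geq (D_2/D^\star)^2 G^\star > G^\star$. Dividing through by $D_2 G_2 > 0$ and rearranging $f'(1) \leq 0$ produces $D_2 \leq 2 D^\star G_2 / (G_2 + G^\star) < 2 D^\star$, as desired.

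I do not anticipate any substantive obstacle. The mildly delicate step is the first-order argument, whose only non-trivial ingredient is the observation $G_2 > G^\star$ when $D_2 > D^\star$, which itself follows immediately from the $2$-IR optimality of $\pi_2$. Everything else is routine algebra.
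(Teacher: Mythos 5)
Your proof is correct and follows essentially the same route as the paper's: you restrict the first-order optimality condition for the convex 2-information-ratio minimization to the segment between $\pi^\star$ and $\pIDS_t$ to obtain $\Deltah_t(\pIDS_t)\bigl(1 + \SIG_t(\pi^\star)/\SIG_t(\pIDS_t)\bigr) \leq 2\Deltah_t(\pi^\star)$, which is exactly the key inequality the paper derives from the gradient pairing $\siprod{\pi - \pIDS_t}{\nabla\SIR_t^{(2)}(\pIDS_t)} \geq 0$, and then close with the $2$-IR inequality. The only presentational difference is phrasing it as a one-dimensional derivative along the line segment rather than via the full gradient, which is the same condition by the chain rule.
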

This fact was discovered for the Bayesian IDS policy by \citet{Latti_G21} and remains true for the SOIDS policy. We provide a proof in Appendix~\ref{app:GIR_minimizer} for completeness.
Finally, we remark that the "sparse" part of the name SOIDS refers to the choice of the prior $Q_1^+$. We use the subset selection prior from Section 3 of \citet{alquier2011pac}, which is described in Appendix~\ref{app:prior_comparator_bound}.

\section{Main results}\label{sec:main}
In this section, we state our main results. First, we relate the true regret of any policy
sequence to the surrogate regret of the same policy sequence. We then use the fact that the surrogate regret is controlled by both the $ 2 $- and $ 3 $-information ratios. This, combined with Lemma \ref{lemma:GIR_minimizer}, allows us to show that with properly tuned parameters, SOIDS has optimal worst-case regret in both the data-poor and data-rich regimes.
Finally, we show that SOIDS can be tuned in a data-dependent manner, such that its regret bound scales with the cumulative observed
information ratio instead of the time horizon.

\subsection{General bound for the optimistic posterior}
We start with a generic worst-case regret bound relating the true regret of any algorithm to its surrogate regret. Since the surrogate regret is defined with respect to the optimistic posterior, which is
known to the learner, it can be controlled with standard Bayesian techniques. This result is an extension of the bounds stated in
\citet{Neu_P_S24}, \citet{Zhang_22}. To our knowledge, it is the first result of
its kind which is compatible with time-dependent or data-dependent learning rates. The
stated result is specialized to the setting of sparse linear bandits, but the techniques used to deal with
time-dependent and data-dependent learning rates are applicable beyond this setting.
\begin{thm}
\label{thm:Optimistic_posterior_RB}
Assume that the optimistic posterior is computed with $ \eta = \frac{1}{4} $ and a sequence of decreasing learning rates
$ \lambda_t $ satisfying $ \forall t\geq 1, \lambda_t \leq \frac{1}{2} $. Set $ \lambda_0 = \frac{1}{2} $.
If the learning rates do not depend on the history, then the regret of any sequence of policies $ \pi_t $ satisfies
\begin{equation}
\label{eq:Optimistic_posterior_RB_indep}
R_T \leq \EE{\frac{5 + 2s \log \frac{edT}{s}}{\lambda_{T-1}} -\sum_{t=1}^{T}
\frac{3}{32}\cdot \frac{\SIG_t(\pi_t)}{\lambda_{t-1}} + 2\sum_{t=1}^{T} \Deltah_t(\pi_t)}.  
\end{equation}

Otherwise, if the learning rates depend on the history, let $ C_{1,T} $
be a deterministic upper bound on $ \frac{1}{\lambda_t} - \frac{1}{\lambda_{t-1}} $ valid for all $ t \leq T $, and $
C_{2,T} $ be a deterministic upper bound on $ \frac{1}{\lambda_{T-1}} $. The regret of any sequence
of policies $ \pi_t $ satisfies
\begin{equation}
\label{eq:Optimistic_posterior_RB_dep}
R_T \leq \EE{\frac{2 + s \log \frac{4e^3d^2 T^3 C_{1,T}^2 C_{2,T} }{s^2}}{\lambda_{T-1}} -\sum_{t=1}^{T}
\frac{3}{32}\cdot \frac{\SIG_t(\pi_t)}{ \lambda_{t-1}} + 2\sum_{t=1}^{T} \Deltah_t(\pi_t)} + 2.  
\end{equation}
\end{thm}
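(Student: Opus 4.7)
My plan is to prove both bounds of Theorem~\ref{thm:Optimistic_posterior_RB} through a PAC-Bayesian change-of-measure anchored at a comparator $P_*$ concentrated near $\theta_0$, controlling the resulting log-partition function via an exponential supermartingale built from the Gaussian likelihood and the sub-Gaussian noise. For each fixed $\theta\in\Theta$, direct computation of the conditional moment generating function of $\epsilon_s$ gives, for any $\eta\in(0,1)$,
\begin{equation*}
\EEc{\Bpa{p(Y_s|\theta,A_s)/p(Y_s|\theta_0,A_s)}^{\eta}\exp\Bpa{\tfrac{\eta(1-\eta)}{2}\langle\theta_0-\theta,A_s\rangle^2}}{\F_{s-1},A_s}\leq 1;
\end{equation*}
choosing $\eta=1/4$ maximises $\eta(1-\eta)/2=3/32$, which is the constant appearing in the final bound. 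Iterating and integrating against $\dd Q_1^+(\theta)$ yields a nonnegative supermartingale $M_t$ with $\EE{M_t}\leq 1$, hence $\EE{\log M_t}\leq 0$ by Jensen.

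I take $P_*$ to be the uniform law on a small box around $\theta_0$ with the same $s$-sparse coordinate pattern and radius $\OO(1/(dT))$. Under the subset-selection prior of \citet{alquier2011pac} this yields $\DKL{P_*}{Q_1^+}\leq s\log(edT/s)+\OO(1)$, matching the leading term of \eqref{eq:Optimistic_posterior_RB_indep}. Writing $\log(\dd P_*/\dd Q_{T+1}^+)$ in terms of the prior log-ratio, the likelihood ratios, and the optimistic gap factor from \eqref{eq:update_Optimistic_posterior}, then applying Donsker--Varadhan, converts $\lambda_{T-1}\sum_t\Delta(A_t,\theta_0)$ into the KL term, the surrogate regrets, a log-partition contribution controlled by the supermartingale of the previous paragraph, and an information penalty. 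The variance structure of $\SIG_t(\pi_t)$ combined with an AM-GM splitting of the residual $\Delta(A_t,\theta_0)-\Deltah_t(\pi_t)$ against the supermartingale's quadratic factor $\langle\theta_0-\theta,A_t\rangle^2$ then produces the final coefficients $2\Deltah_t(\pi_t)$ and $-\tfrac{3}{32}\SIG_t(\pi_t)/\lambda_{t-1}$; the radius $\OO(1/(dT))$ ensures that the approximation errors $\EEs{\Delta(A_s,\theta)-\Delta(A_s,\theta_0)}{P_*}$ and $\EEs{\langle\theta_0-\theta,A_s\rangle^2}{P_*}$ contribute only an additive constant that is absorbed into the numerator $5$.

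With a constant $\lambda$ the above reproduces the optimistic-posterior bounds of \citet{Zhang_22} and \citet*{Neu_P_S24}. For a deterministic decreasing sequence $(\lambda_t)$, I perform Abel summation on the per-round log-partition increments: monotonicity of $\lambda_t$ together with convexity of $\lambda\mapsto\log Z_t(\lambda)$ (a cumulant generating function) makes the residuals harmless, so the log-partition budget is paid once at $\lambda_{T-1}$ as in \eqref{eq:Optimistic_posterior_RB_indep}. For history-dependent $\lambda_t$, the comparator $P_*$ cannot be chosen as a measurable function of the random $\lambda_{T-1}$, so I discretise the admissible range of $1/\lambda_{T-1}\in[2,C_{2,T}]$ on a multiplicative grid of size $\OO(\log(dT C_{1,T}^2 C_{2,T}))$ determined by the deterministic bounds $C_{1,T},C_{2,T}$, apply the deterministic bound at each grid point, and union-bound; this produces the extra $\log(dT C_{1,T}^2 C_{2,T}/s^2)$ factor and the additive constant $+2$ in \eqref{eq:Optimistic_posterior_RB_dep}. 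The main obstacle is precisely this step: the standard constant-$\lambda$ potential does not telescope cleanly under varying $\lambda_t$, and the correct per-round information ``price'' $\tfrac{3}{32}\SIG_t(\pi_t)/\lambda_{t-1}$ only emerges after carefully aligning the per-step supermartingale increments with the Abel-summed change-of-measure term.
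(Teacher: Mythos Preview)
Your sketch contains the right high-level ingredients (PAC-Bayes change of measure, subset-selection prior, per-round supermartingale), but the central difficulty of the theorem---accommodating a \emph{time-varying} $\lambda_t$---is not actually handled by the argument you describe.

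The issue is your Abel-summation step. You claim that ``convexity of $\lambda\mapsto\log Z_t(\lambda)$ makes the residuals harmless.'' Concretely, the relevant potential is
\[
\frac{\Phi^*\!\bigl(-\eta(\Llik_t(\cdot)-c_t)-\lambda\,\Lest_t(\cdot)\bigr)}{\lambda},
\]
and to telescope across $t$ you need this to be non-decreasing in $\lambda$ (so that replacing $\lambda_t$ by the larger $\lambda_{t-1}$ only helps). That monotonicity holds \emph{only if} the $\lambda$-independent part $\eta(\Llik_t(\cdot)-c_t)$ is nonnegative; convexity of the CGF alone is insufficient. With the natural shift $c_t=\Llik_t(\theta_0)$ this fails, because $\Llik_t(\theta)<\Llik_t(\theta_0)$ for many $\theta$ (indeed for the empirical minimizer). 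And with no shift at all ($c_t=0$) the companion term $\frac{\eta}{\lambda_T}\langle P_*,\Llik_T\rangle\approx \frac{\eta}{\lambda_T}\sum_t\epsilon_t^2/2$ is of order $T/\lambda_T$, which destroys the bound. The paper resolves this by shifting with the \emph{maximum-likelihood estimator} $\theta_t=\argmin_\theta\Llik_t(\theta)$: this makes $\Llik_t(\cdot)-\Llik_t(\theta_t)\ge 0$ so the monotonicity lemma applies, and the induced remainder $\sum_t(\tfrac{1}{\lambda_t}-\tfrac{1}{\lambda_{t-1}})\bigl(\Llik_t(\theta_0)-\Llik_t(\theta_t)\bigr)$ is then controlled via a covering argument on $\Theta$ that bounds $\EE{\Llik_t(\theta_0)-\Llik_t(\theta_t)}\le O(s\log\frac{edT}{s})$. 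This MLE device is the main new idea of the proof and is entirely absent from your plan.

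Two further points. First, for the history-dependent case the paper does \emph{not} discretise the learning-rate range and union-bound; a union bound is not meaningful for an in-expectation statement. Instead it upgrades the MLE control above to a high-probability version (Ville's inequality on the likelihood-ratio supermartingale over a cover), which allows bounding $\EE{X\cdot(\Llik_t(\theta_0)-\Llik_t(\theta_t))}$ for the random $X=\tfrac{1}{\lambda_t}-\tfrac{1}{\lambda_{t-1}}\le C_{1,T}$; this is where $C_{1,T},C_{2,T}$ enter. Second, your remark that ``choosing $\eta=1/4$ maximises $\eta(1-\eta)/2$'' is incorrect (the maximum is at $\eta=1/2$). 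The reason $\eta=1/4$ is used is that the likelihood and gap factors inside the log-partition increment are first separated by H\"older with exponents $\alpha=\beta=2$, so the effective exponent on the likelihood ratio becomes $\eta\alpha=1/2$; this H\"older split is also what produces the factor $2$ on $\Deltah_t(\pi_t)$, not an AM--GM argument on $\Delta(A_t,\theta_0)-\Deltah_t(\pi_t)$.
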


\subsection{Adapting to both regimes}
We show that the SOIDS algorithm with properly tuned parameters achieves the optimal regret rate in both the data-rich and data-poor
regimes.

\begin{thm}
\label{thm:regret_main}
Assume that the sparse optimal action condition in
Definition~\ref{def:sparse_optimal_action} is satisfied.
Let $ \lambdas_t =  \sqrt{\frac{3C_{t+1}}{128d(t+1)}} $ and $ \lambdac_t = \frac{1}{4\cdot  6 ^\frac{1}{3}} \left(
	\frac{C_{t+1}\sqrt{C_{\min}}}{(t+1) \sqrt{s}}
\right)^{\frac{2}{3}} $, with $ C_t = 5 + 2s \log \frac{ ed t}{s} $.
If $ \lambda_t = \min(\frac{1}{2},
\max(\lambdas_t, \lambdac_t)) $, then the regret of SOIDS satisfies
\begin{align}
\label{eq:regret_main}
R_T &\leq \min \left( 27 \sqrt{\left( 5 + 2s \log \frac{edT}{s} \right)d T}, 30 \left( 5 + 2s \log\frac{edT}{s}
\right)^{\frac{1}{3}} \left( \frac{T \sqrt{s}}{\sqrt{C_{\min}}} \right)^{\frac{2}{3}} \right) + \OO(\sqrt{s} \log
\frac{d}{\sqrt{s}})\\
\nonumber	 &= \min\left(\OO\left(\sqrt{sdT \log \frac{edT}{s}}\right), \OO\left((s T)^{\frac{2}{3}}\left( \log \frac{edT}{s} \right)^\frac{1}{3}\right)\right),
\end{align}
where $ \OO(\sqrt{s} \log \frac{d}{\sqrt{s}}) $ represents an absolute constant independent of T.
\end{thm}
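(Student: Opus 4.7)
The plan is to apply Theorem~\ref{thm:Optimistic_posterior_RB} (equation~\eqref{eq:Optimistic_posterior_RB_indep}, which applies because the chosen $\lambda_t$ depends only on $t$, $d$, $s$, $C_{\min}$ and is thus history-independent), yielding the base bound
\begin{equation*}
R_T \leq \EEb{\frac{C_T}{\lambda_{T-1}} - \sum_{t=1}^T \frac{3}{32} \cdot \frac{\SIG_t(\pi_t)}{\lambda_{t-1}} + 2\sum_{t=1}^T \Deltah_t(\pi_t)}
\end{equation*}
with $C_T = 5 + 2s\log(edT/s)$. I would then control the surrogate-regret term using the information-ratio identities $\Deltah_t(\pi_t)^2 = \SIRs_t(\pi_t)\,\SIG_t(\pi_t)$ and $\Deltah_t(\pi_t)^3 = \SIRc_t(\pi_t)\,\SIG_t(\pi_t)$ that come directly from definition~\eqref{eq:SIR}.

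Applying Young's inequality in two different ways (equivalently, maximizing $2\Deltah_t(\pi_t) - \frac{3\SIG_t(\pi_t)}{32\lambda_{t-1}}$ over $\SIG_t(\pi_t)$ after substituting each identity) produces the two per-round estimates
\begin{equation*}
2\Deltah_t(\pi_t) - \frac{3\SIG_t(\pi_t)}{32\lambda_{t-1}} \leq \frac{32\lambda_{t-1}\SIRs_t(\pi_t)}{3} \quad\text{and}\quad 2\Deltah_t(\pi_t) - \frac{3\SIG_t(\pi_t)}{32\lambda_{t-1}} \leq \frac{32}{9}\sqrt{\lambda_{t-1}\SIRc_t(\pi_t)}.
\end{equation*}
Combining Lemma~\ref{lemma:GIR_minimizer} with the forthcoming Lemma~\ref{lemma:IR_bound} then gives $\SIRs_t(\pi_t) = \OO(d)$ and, under the sparse optimal action condition, $\SIRc_t(\pi_t) = \OO(s/C_{\min})$ for the SOIDS policy.

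From here I would prove two separate regret bounds and take their minimum. For the $\sqrt{sdT\log(edT/s)}$ bound I would use the $2$-information-ratio estimate in every round; the tuning $\lambda^{(2)}_t \propto \sqrt{C_{t+1}/(dt)}$ is precisely the one that balances $C_T/\lambda^{(2)}_{T-1}$ against $d\sum_t \lambda^{(2)}_{t-1}$ via $\sum_{t=1}^T t^{-1/2} = \OO(\sqrt{T})$. For the $(sT)^{2/3}(\log(edT/s))^{1/3}/C_{\min}^{1/3}$ bound I would use the $3$-information-ratio estimate; $\lambda^{(3)}_t$ analogously balances $C_T/\lambda^{(3)}_{T-1}$ against $\sqrt{s/C_{\min}}\sum_t \sqrt{\lambda^{(3)}_{t-1}}$ via $\sum_{t=1}^T t^{-2/3} = \OO(T^{1/3})$. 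Since the actual $\lambda_t = \min(\tfrac{1}{2},\max(\lambda^{(2)}_t,\lambda^{(3)}_t))$ satisfies $\lambda_t \geq \lambda^{(i)}_t$ for every $t$ outside a constant initial window (where the $1/2$ truncation contributes at most a constant), the leading $C_T/\lambda_{T-1}$ term is simultaneously controlled by either tuning.

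The main obstacle is handling the ``cross-regime'' contributions that arise from the elementary bound $\lambda_{t-1} \leq \lambda^{(2)}_{t-1} + \lambda^{(3)}_{t-1}$: the $2$-analysis picks up a stray $d \sum_t \lambda^{(3)}_{t-1}$ term and the $3$-analysis picks up a stray $\sqrt{s/C_{\min}}\sum_t \sqrt{\lambda^{(2)}_{t-1}}$ term. The key observation is that $\lambda^{(3)}_t$ decays as $t^{-2/3}$, asymptotically faster than $\lambda^{(2)}_t \sim t^{-1/2}$, so there is a well-defined crossover time $t^\star \sim d^3 C_T C_{\min}^2/s^2$ marking the boundary between the data-poor and data-rich regimes; a direct computation shows that each stray term is of the same order as the tight bound of the opposite regime, so both strays are absorbed once we take the minimum over the two analyses. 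The additive $\OO(\sqrt{s}\log(d/\sqrt{s}))$ term originates from the prior-comparator argument applied to the subset selection prior used to construct $Q_1^+$ (Appendix~\ref{app:prior_comparator_bound}).
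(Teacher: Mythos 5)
Your plan is a genuinely different way of combining the two regimes. You bound $\lambda_{t-1} \leq \lambdas_{t-1} + \lambdac_{t-1}$ and run the full $2$-information-ratio analysis and $3$-information-ratio analysis in parallel, each picking up a cross-regime ``stray'' sum, and argue that the minimum of the two resulting bounds matches $\min(\Rs_T, \Rc_T)$. The paper (Appendix~\ref{app:joint_learning_rate}) instead keeps the per-round minimum from Lemma~\ref{lemma:regret_GIR_bound}, partitions the sum over $t$ into the rounds where $\lambdas_{t-1} \geq \lambdac_{t-1}$ (bounded with the $2$-IR estimate) and where $\lambdac_{t-1} \geq \lambdas_{t-1}$ (bounded with the $3$-IR estimate), and then exploits the algebraic fact that $\lambdas_{t-1} \geq \lambdac_{t-1}$ is equivalent, after rearrangement, to $\Rs_t \leq \frac{6^{5/6}}{4}\Rc_t$ (and conversely), so that by monotonicity of $t\mapsto\Rs_t$ and $t\mapsto\Rc_t$ both partial sums can be converted to a constant multiple of \emph{either} target rate, with no stray terms at all.

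Two caveats. First, the intermediate statement that ``each stray term is of the same order as the tight bound of the opposite regime'' is not literally correct: the $2$-analysis stray $d\sum_t \lambdac_{t-1}$ grows like $T^{1/3}$, while $\Rc_T$ grows like $T^{2/3}$. What is in fact provable — and what your argument relies on — is that in each regime the stray is dominated by the tight bound \emph{of that same regime}, with equality of orders exactly at the crossover time $t^\star \asymp d^3 C_T C_{\min}^2/s^2$. Second, because the strays enter additively and inflate through Young's inequality, the factor $\sqrt{54s/C_{\min}}$, the harmonic sums, and the truncation constant, your route yields the asymptotic statement in the second line of~\eqref{eq:regret_main} but not the explicit constants $27$ and $30$ in the first: a direct computation puts the $3$-analysis stray near $13.6\,\Rc_T$ at the crossover, on top of the $\approx 21.8\,\Rc_T$ main term from Lemma~\ref{lemma:history_indep_learning_rates}, overshooting the constant $30$. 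So this is a valid proof of the $\mathcal{O}$-version of the theorem by a different and somewhat lossier combination argument, rather than a proof of the stated numerical constants.
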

We observe that our algorithm enjoys both the $ \widetilde{\OO}(\sqrt{sdT}) $ and the $ \widetilde{\OO}((s T)^\frac{2}{3})
$ regret rates.
Unlike the Bayesian regret bound for the sparse IDS algorithm of \citet{Hao_L_D21a}, our regret bound holds in a ``worst-case'' sense for any value of $\theta_0 \in \Theta$. To our knowledge, this makes our method the first algorithm to achieve optimal worst-case regret in both 
the data-poor and data-rich regimes

\subsection{Instance-dependent guarantees}
The bounds presented in the previous sections are minimax in nature, meaning they hold uniformly over all problem
instances. We present a bound in which
the scaling with respect to the horizon $T$ is replaced with the cumulative surrogate-information ratio, which could be much smaller than $T$ in ``easier'' instances, leading to better guarantees.
\begin{thm}
\label{thm:instance_dependent_regret}
Assume that the sparse optimal action condition in
Definition~\ref{def:sparse_optimal_action} is satisfied, and that $ s \leq \frac{d}{2} $. Let $ \lambdas_t = \sqrt{\frac{s}{2d +
	\sum_{s=1}^{t} \SIRs_s(\pi_s)}} $ and $ \lambdac_t = \bigg(
\frac{s}{ \frac{3 \sqrt{6} s}{ \sqrt{C_{\min}}} + \sum_{s=1}^{t} \sqrt{\SIRc_s(\pi_s)}} \bigg)^\frac{2}{3} $. If $ \lambda_t = \max (\lambdac_t, \lambdas_t)$, then the regret of SOIDS satisfies
\begin{align}
\label{eq:instance_dependent_regret}
R_T &\leq \left(\frac{2}{s} + \frac{80}{3} + 5\log \frac{edT}{s}\right) \min \left( \sqrt{s \left( 2d + \sum_{t=1}^{T-1}
\SIRs_t(\pi_t) \right)}, s^\frac{1}{3}  \left( \frac{3 \sqrt{6} s}{ \sqrt{C_{\min}}} + \sum_{t=1}^{T}
\sqrt{\SIRc_t(\pi_t)} \right)^\frac{2}{3}  \right)\\
    \nonumber&= \OO\left( \log \frac{edT}{s} \min \left( \sqrt{s \left( 2d + \sum_{t=1}^{T-1}
\SIRs_t(\pi_t) \right)}, s^\frac{1}{3}  \left( \frac{3 \sqrt{6} s}{ \sqrt{C_{\min}}} + \sum_{t=1}^{T}
\sqrt{\SIRc_t(\pi_t)} \right)^\frac{2}{3}  \right)\right).
\end{align}
\end{thm}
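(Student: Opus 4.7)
My approach is to instantiate the history-dependent regret bound~\eqref{eq:Optimistic_posterior_RB_dep} of Theorem~\ref{thm:Optimistic_posterior_RB} at the rates $\lambda_t = \max(\lambdas_t, \lambdac_t)$ and then control each resulting term with two complementary Young's inequalities combined with self-confident telescoping. I would first check admissibility: under $s \le d/2$ both $\lambdas_t$ and $\lambdac_t$ are non-increasing, $\mathcal{F}_t$-measurable sequences bounded by $\tfrac{1}{2}$, hence so is their maximum. The deterministic ceilings $C_{1,T}, C_{2,T}$ required by~\eqref{eq:Optimistic_posterior_RB_dep} are extracted from the worst-case bounds $\SIRs_t \lesssim d$ and $\SIRc_t \lesssim s^2/C_{\min}$ supplied by Lemma~\ref{lemma:IR_bound}; these are polynomial in $d, s, T, 1/C_{\min}$, so after being fed through the logarithm they produce exactly the $5\log(edT/s)$ contribution to the prefactor.

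Next I would derive two Young's inequalities designed to absorb the $-\tfrac{3\SIG_t}{32\lambda_{t-1}}$ term. From $\Deltah_t \leq \sqrt{\SIG_t\SIRs_t}$ with AM-GM parameter $\mu = \tfrac{3}{32\lambda_{t-1}}$ one obtains the first bound below; from $\Deltah_t \leq (\SIG_t\SIRc_t)^{1/3}$ with weighted AM-GM (exponents $3, 3/2$ and parameter $\tfrac{9}{64\lambda_{t-1}}$) one obtains the second:
\begin{equation*}
2\Deltah_t - \tfrac{3\SIG_t}{32\lambda_{t-1}} \leq \tfrac{32}{3}\lambda_{t-1}\SIRs_t \quad\text{and}\quad 2\Deltah_t - \tfrac{3\SIG_t}{32\lambda_{t-1}} \leq \tfrac{32}{9}\sqrt{\lambda_{t-1}\SIRc_t}.
\end{equation*}
Per round I would apply the first on the $\mathcal{F}_{t-1}$-measurable event $\{\lambdas_{t-1} \geq \lambdac_{t-1}\}$, on which $\lambda_{t-1} = \lambdas_{t-1}$, and the second on its complement, on which $\lambda_{t-1} = \lambdac_{t-1}$. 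Extending each partial sum to all rounds by positivity and invoking the standard self-confident identities $\sum_t a_t/\sqrt{B + \sum_{u<t}a_u} \leq 2\sqrt{B + \sum a_t}$ and $\sum_t a_t/(B + \sum_{u<t}a_u)^{1/3} \leq \tfrac{3}{2}(B + \sum a_t)^{2/3}$ yields $\sum_t \lambdas_{t-1}\SIRs_t \leq 2\sqrt{s(2d + \sum \SIRs_t)}$ and $\sum_t \sqrt{\lambdac_{t-1}\SIRc_t} \leq \tfrac{3}{2}s^{1/3}\bigl(\tfrac{3\sqrt{6}s}{\sqrt{C_{\min}}} + \sum \sqrt{\SIRc_t}\bigr)^{2/3}$.

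For the penalty term, since $\lambda_{T-1} = \max(\lambdas_{T-1}, \lambdac_{T-1})$ we immediately get $\tfrac{1}{\lambda_{T-1}} = \min\bigl(\tfrac{1}{\lambdas_{T-1}}, \tfrac{1}{\lambdac_{T-1}}\bigr)$, and direct substitution of the closed forms converts this into $\tfrac{C'}{s}\min(A, B)$, where $A$ and $B$ are the two arguments of the min in~\eqref{eq:instance_dependent_regret} and $C' = 2 + s\log(\ldots)$. Combining this with the Young's residuals, of orders $\tfrac{64}{3}A$ and $\tfrac{16}{3}B$, and collecting constants recovers the $\tfrac{2}{s} + \tfrac{80}{3} + 5\log\tfrac{edT}{s}$ prefactor. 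The main obstacle lies in this last collapse: the naive combination of the two Young's contributions gives an additive $\tfrac{64}{3}A + \tfrac{16}{3}B$, which is not a priori dominated by $\tfrac{80}{3}\min(A, B)$. The rescuing ingredient is the structural equivalence $\lambdas_t \geq \lambdac_t \iff s(2d + \sum \SIRs_{\leq t})^3 \geq \bigl(\tfrac{3\sqrt{6}s}{\sqrt{C_{\min}}} + \sum \sqrt{\SIRc_{\leq t}}\bigr)^4 \iff A \leq B$ evaluated at time $t$, which aligns the active branch of the maximum with the tighter branch of the final minimum, so that each Young's inequality is applied on precisely the rounds where its telescoped output is already the dominant quantity.
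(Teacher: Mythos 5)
Your proposal follows essentially the same route as the paper's Appendix~D proof: instantiate the history-dependent branch of Theorem~\ref{thm:Optimistic_posterior_RB}, control the $\Phi^*$-residual by Young's inequalities calibrated to $3/(32\lambda_{t-1})$ (equivalent to the paper's $\lambda=\tfrac{64}{3}\lambda_{t-1}$ choice in Lemma~\ref{lemma:regret_GIR_bound}), split the sum by the $\mathcal F_{t-1}$-measurable event $\{\lambdas_{t-1}\geq\lambdac_{t-1}\}$, self-confidently telescope, and exploit the equivalence between the active learning-rate branch and the tighter branch of the final minimum. Two small points deserve attention, however. First, your intermediate equivalence has a flipped inequality: raising to the sixth power gives $\lambdas_t\geq\lambdac_t \iff s\bigl(2d+\sum\SIRs\bigr)^3 \leq \bigl(\tfrac{3\sqrt6 s}{\sqrt{C_{\min}}}+\sum\sqrt{\SIRc}\bigr)^4$, which then correctly chains to $A\leq B$; your statement as written has both middle reversals cancelling, so the endpoints are right but the display is wrong. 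Second, ``extending each partial sum to all rounds'' and then telescoping over $[1,T]$ lands you exactly at the additive $\tfrac{64}{3}A+\tfrac{16}{3}B$ that you correctly flag as not dominated by $\tfrac{80}{3}\min(A,B)$; the paper instead extends only up to $T_2=\max\{t\leq T:\lambdas_{t-1}\geq\lambdac_{t-1}\}$ and $T_3=\max\{t\leq T:\lambdac_{t-1}\geq\lambdas_{t-1}\}$, telescopes to $\Rs_{T_2}$ and $\Rc_{T_3}$, and then uses the equivalence \emph{at the specific times $T_2,T_3$} together with monotonicity of $T\mapsto\Rs_T,\Rc_T$ to convert both residuals into a single branch, proving the two one-sided bounds $R_T\leq(\tfrac{C_T}{s}+\tfrac{80}{3})\Rc_T$ and $R_T\leq(\tfrac{C_T}{s}+\tfrac{80}{3})\Rs_T$ separately. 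Your closing sentence does gesture at this alignment, but as stated your recipe would not yield the $\min$ without that precise $T_2/T_3$ restriction.
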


This type of result is only possible because our novel analysis of the optimistic posterior (cf. Theorem \ref{thm:Optimistic_posterior_RB}) can handle history-dependent learning rates. A full proof is provided in Appendix~\ref{app:instance_dependent_regret}. This result shows that (with appropritate choices of the learning rates) SOIDS is
fully adaptive to which of the two regimes is best. Because our analysis requires decreasing learning rates, we are
forced to leave the $ \log(T) $ terms out of the learning rates, and our logarithmic term has a worse power than in the bound
of Theorem~\ref{thm:regret_main}. An
interesting open question is whether it is possible to improve the dependency on this logarithmic term while still using
data-dependent learning rates.

\section{Analysis}
We now provide an outline of the proofs of the main results.

\subsection{Proof of Theorem~\ref{thm:Optimistic_posterior_RB}}

A key observation is that the optimistic posterior can be interpreted as a learner playing an auxiliary online
learning game over distributions $ \Delta(\Theta) $. The loss of that game is a weighted sum
of negative log-likelihood and estimation error losses. We define
\begin{equation*}
\Llik_t(\theta) = \sum_{s=1}^{t} \log \pa{ \frac{1}{p(Y_s|\theta, A_s)}}= \sum_{s=1}^{t}\frac{1}{2}
		\bpa{\siprod{\theta}{A_s} - Y_s}^2
\end{equation*}
to be the \emph{cumulative negative log-likelihood loss} of $ \theta $ and
\begin{equation*}
\Lest_t(\theta) = \sum_{s=1}^{t} - \Delta(A_s, \theta)
\end{equation*}
to be the \emph{cumulative estimation error loss} of $ \theta $. In addition, we define the regularizer $\Phi :\Delta(\Theta) \to \mathbb{R}$ by the mapping $P \mapsto \DKL{P}{Q_1^{+}}$, 
which is the KL-divergence with respect to the prior $ Q_1^{+} $. With those notations, the optimistic posterior corresponds to an instance of the Follow the
Regularized Leader (FTRL) algorithm introduced by \citet{Hazan_K10} and \citet{Abern_H_R08}. FTRL is a standard method
in online convex optimization that balances cumulative loss minimization with a regularization term to enforce stability
and guarantee controlled regret. The update can be
reframed as
\begin{equation*}
	Q_{t+1}^{+} = \argmin_{P\in \Delta(\Theta)} \siprod{P}{\eta \Llik_t + \lambda_t \Lest_t} + \Phi(P).
\end{equation*}

This formulation enables the application of tools from convex analysis and online learning, such as Fenchel duality, to derive
regret bounds for this auxiliary online learning game and to understand the interplay between the two losses under the learning rates $ \eta $ and $ \lambda_t
$. We now focus on the case in which the learning rates $ \lambda_t $ don't depend on the history and relegate the
analysis of history-dependent learning rates to
Appendix~\ref{app:history_dep_Optimistic_posterior}. The following lemma provides a bound on the average regret when the model $\theta_0$ is drawn from an arbitrary comparator distribution $ P $.

\begin{lemma}
\label{lemma:FTRL_decomposition}
Let $ P\in \Delta(\Theta) $ be any comparator, then the following bound holds	
\begin{equation}
\label{eq:FTRL_decomposition}
	 \sum_{t=1}^{T} \Delta(A_t, P) \leq \frac{\Phi(P) + \Phi^{*}(\eta(\Llik_T(\theta_T) - 
\Llik_T(\cdot )) -
	\lambda_T \Lest_T(\cdot ))}{\lambda_T} + \frac{\eta}{\lambda_T}(\siprod{P}{\Llik_T} - \Llik_T(\theta_T))\,.
\end{equation}
\end{lemma}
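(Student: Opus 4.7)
The plan is to exploit the FTRL reformulation of the optimistic posterior update given in the excerpt: at round $T$, $Q_{T+1}^{+}$ is by definition the minimizer of $P \mapsto \siprod{P}{\eta \Llik_T + \lambda_T \Lest_T} + \Phi(P)$ over $P \in \Delta(\Theta)$. Since $\Lest_T(\theta) = -\sum_{t=1}^T \Delta(A_t,\theta)$, we have
\[
-\siprod{P}{\Lest_T} = \sum_{t=1}^T \Delta(A_t, P),
\]
so the claim reduces to controlling the linear term against $P$ inside that FTRL objective.

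First, I would invoke the variational characterization of the Fenchel conjugate $\Phi^{*}(u) = \sup_{Q \in \Delta(\Theta)} \{\siprod{Q}{u} - \Phi(Q)\}$. Using any comparator $P$ as a feasible point with $u = -\eta \Llik_T - \lambda_T \Lest_T$ yields the single-round inequality
\[
-\Phi^{*}(-\eta \Llik_T - \lambda_T \Lest_T) \leq \siprod{P}{\eta \Llik_T + \lambda_T \Lest_T} + \Phi(P).
\]
Rearranging and dividing by $\lambda_T > 0$ gives
\[
\sum_{t=1}^T \Delta(A_t, P) \leq \frac{\Phi(P) + \Phi^{*}(-\eta \Llik_T - \lambda_T \Lest_T)}{\lambda_T} + \frac{\eta}{\lambda_T}\siprod{P}{\Llik_T},
\]
which already has essentially the right shape; only the centering by $\Llik_T(\theta_T)$ inside the conjugate is still missing.

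Second, I would re-centre using the constant-shift identity $\Phi^{*}(u + c) = \Phi^{*}(u) + c$ valid for any scalar $c \in \mathbb{R}$, which follows either from $\siprod{P}{c}=c$ for every $P \in \Delta(\Theta)$ or, equivalently, from the log-partition representation $\Phi^{*}(u) = \log \int e^{u} \dd Q_1^{+}$ (the Donsker--Varadhan formula). Applying this with $c = \eta \Llik_T(\theta_T)$ gives
\[
\Phi^{*}(-\eta \Llik_T - \lambda_T \Lest_T) = \Phi^{*}\bigl(\eta(\Llik_T(\theta_T) - \Llik_T(\cdot)) - \lambda_T \Lest_T(\cdot)\bigr) - \eta \Llik_T(\theta_T),
\]
and substituting back and regrouping yields exactly \eqref{eq:FTRL_decomposition}.

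There is no substantive obstacle in this particular lemma: it is a one-step consequence of Fenchel duality applied to the FTRL objective at time $T$. In particular, the time-dependence of the sequence $(\lambda_t)_t$ plays no role here, because the argument only uses optimality of the final iterate $Q_{T+1}^{+}$ and never telescopes across rounds. The real work is pushed to the next stage of the proof of Theorem~\ref{thm:Optimistic_posterior_RB}, which must upper-bound the conjugate term $\Phi^{*}(\eta(\Llik_T(\theta_T) - \Llik_T) - \lambda_T \Lest_T)$ and convert the negative log-likelihood contribution into the surrogate information gain $\SIG_t(\pi_t)$; that is where the time-varying learning rates must actually be tracked and where the distinction between history-dependent and history-independent $\lambda_t$ becomes substantive.
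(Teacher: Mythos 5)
Your proof is correct and takes essentially the same route as the paper: a single application of Fenchel--Young to the FTRL objective at time $T$, followed by the identification $\siprod{P}{\Lest_T} = -\sum_{t=1}^T \Delta(A_t,P)$. The only cosmetic difference is that the paper applies Fenchel--Young directly to the re-centered linear term $L(\cdot) = -\eta(\Llik_T(\cdot)-\Llik_T(\theta_T)) - \lambda_T \Lest_T(\cdot)$, whereas you apply it to the uncentered term and then recenter afterwards via the constant-shift invariance $\Phi^{*}(u+c) = \Phi^{*}(u)+c$, which are equivalent.
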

Here $ \theta_t = \argmin_{\theta \in \Theta} \Llik_t(\theta) $ denotes the maximum likelihood estimator at time t, and 
$ \Phi^{*}(L) = \log
\int_{\Theta}\exp(L(\theta))dQ_1^{+}(\theta) $ is the Fenchel dual of the regularizer $ \Phi $. A complete proof of this result is provided in appendix~\ref{app:FTRL_decomposition}. We aim to chose a comparator $ P$ and the prior $ Q_1^{+} $ such that $ P $ is concentrated around $
\theta_0 $ and the KL divergence $ \DKL{P}{Q_1^{+}} $
is controlled. If the parameter space were finite, the natural choice would be to take $ P $ as a
Dirac on $ \theta_0 $ and $ Q_1^{+} $ as a uniform distribution on the whole parameter space; more care is necessary here. 
Choosing $ Q_1^{+} $ as a subset-selection prior and $P$ as a uniform distribution on a sparse 
neighborhood of $ \theta_0 $ satisfies both requirements.

\begin{lemma}
\label{lemma:prior_comparator_bound}
The subset-selection prior $ Q_1^{+} \in \Delta(\Theta)$ verifies that for any $ \epsilon>0 $  and $ \theta \in \Theta$, there
is a comparator $ P_{\theta}
\in \Delta(\Theta)$ satisfying both 
\begin{equation*}
	\forall \theta' \in \supp(P_{\theta}), ~\norm{\theta - \theta'}_1 \leq \epsilon \quad \text{and} \quad
\DKL{P_{\theta}}{Q_1^+} \leq s \log \frac{2ed}{\epsilon s} .
\end{equation*}
\end{lemma}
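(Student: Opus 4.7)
The plan is to exploit the fact that the subset-selection prior (described in Appendix~\ref{app:prior_comparator_bound}) factors as choosing a support $S$ of size $s$ uniformly at random and then placing a uniform distribution on an $s$-dimensional $\ell_1$-ball, and to choose $P_\theta$ as a rescaled copy of that ball sitting around $\theta$. With such a construction, the KL divergence reduces to a simple log-volume ratio plus a $\log\binom{d}{s}$ term accounting for the support selection. The main subtlety will be enforcing the boundary constraint $\|\theta'\|_1 \leq 1$ on the support of $P_\theta$.

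Concretely, under $Q_1^+$ the support $S \subseteq \{1,\ldots,d\}$ with $|S| = s$ is drawn uniformly from the $\binom{d}{s}$ possibilities, and the parameter is then sampled uniformly from $B_S = \{\theta'' \in \real^d : \supp(\theta'') \subseteq S,\ \|\theta''\|_1 \leq 1\}$, which is an $s$-dimensional convex body of Lebesgue volume $V = 2^s/s!$ independent of $S$. Given $\theta \in \Theta$, I would first fix any $S^\star$ with $|S^\star| = s$ and $\supp(\theta) \subseteq S^\star$, then define $P_\theta$ as the uniform distribution on
\begin{equation*}
T_\theta = \bigl\{ (1 - \epsilon/2)\theta + (\epsilon/2)\theta' : \theta' \in B_{S^\star} \bigr\},
\end{equation*}
which is a rescaled and translated copy of $B_{S^\star}$ with Lebesgue volume $(\epsilon/2)^s V$. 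Convexity of $B_{S^\star}$ together with $\theta \in B_{S^\star}$ implies $T_\theta \subseteq B_{S^\star} \subseteq \Theta$, and any $\theta'' \in T_\theta$ satisfies $\|\theta'' - \theta\|_1 = (\epsilon/2) \|\theta' - \theta\|_1 \leq (\epsilon/2)(\|\theta\|_1 + \|\theta'\|_1) \leq \epsilon$ as required.

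For the KL bound, since both $P_\theta$ and $Q_1^+$ are uniform on their respective supports and $T_\theta \subseteq B_{S^\star}$, the density ratio is constant on $T_\theta$, yielding
\begin{equation*}
\DKL{P_\theta}{Q_1^+} = \log \frac{(\epsilon/2)^{-s} V^{-1}}{\binom{d}{s}^{-1} V^{-1}} = s \log \frac{2}{\epsilon} + \log \binom{d}{s} \leq s \log \frac{2ed}{\epsilon s},
\end{equation*}
where the final inequality uses the standard bound $\binom{d}{s} \leq (ed/s)^s$. The main obstacle is the boundary issue noted above: a naive uniform distribution on an $\ell_1$-ball of radius $\epsilon/2$ around $\theta$ need not lie in $\Theta$ when $\|\theta\|_1$ is close to $1$. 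The shrink-and-interpolate construction handles this cleanly via convexity, and the factor $1/2$ in the scaling is precisely the one needed to absorb the worst case $\|\theta\|_1 = 1$ while still producing a set of the correct volume for the KL estimate.
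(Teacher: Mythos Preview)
Your argument is correct and follows essentially the same route as the paper: pick the support $S^\star \supseteq \supp(\theta)$, take $P_\theta$ uniform on a shrunk-and-shifted copy of the $\ell_1$-ball $B_{S^\star}$, and reduce the KL to a log-volume ratio plus the subset-selection cost. Two small remarks. First, the paper's subset-selection prior is not quite uniform over size-$s$ subsets: it places weight $\Pi(S) \propto 2^{-|S|}/\binom{d}{|S|}$ on every $S$ with $|S|\le s$. The KL bound still goes through by lower-bounding the mixture density by the single term $\Pi(S^\star)\cdot V^{-1}$, which is exactly what the paper does; with your simpler prior the constant happens to land precisely on $s\log(2ed/(\epsilon s))$. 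Second, the paper uses the scaling $(1-\epsilon)\theta + \epsilon\Theta_{S_0}$, which in fact only guarantees $\|\theta'-\theta\|_1\le 2\epsilon$; your choice of $\epsilon/2$ is the more careful one that delivers the $\epsilon$-radius support exactly as stated in the lemma.
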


The proof of this lemma, as well as the exact choice of the prior $ Q_1^+ $ and the comparator $ P(\theta_0) $, are provided
in Appendix~\ref{app:prior_comparator_bound}. In Appendix~\ref{app:technical_results} (cf.
Lemma~\ref{lemma:lipschitzness}), we establish that both $ \Lest_T(\cdot ) $ and $
\EE{\Llik_T(\cdot)} $ are $ 2T $-Lipschitz with respect to the $ \ell_1
$-norm.
Hence, 
\begin{equation*}
	\EE{\frac{|P\cdot \Llik_T - \Llik_T(\theta_0)|}{\lambda_{T}}} \leq  \frac{2T \epsilon}{\lambda_T} , \quad \text{and} \quad \sum_{t=1}^T|\Delta(\theta_0,
	A_t) - \Delta(P, A_t) | \leq 2T \epsilon.
\end{equation*}

Combining these with Lemma~\ref{lemma:FTRL_decomposition}, we obtain the following bound on the
cumulative regret:
\begin{align*}
	R_T \leq&  \EE{\frac{s\log \frac{2 e d}{ \epsilon s} + 2T(\lambda_T + \eta) \epsilon }{\lambda_T} + 
\frac{\Phi^{*}(- \eta(\Llik_T(\cdot ) - \Llik_T(\theta_T)) -
	\lambda_T \Lest_T(\cdot ))}{\lambda_T}}\\
	&+ \EE{\frac{\eta}{\lambda_T}(\Llik_T(\theta_0) - \Llik_T(\theta_T))}\,.
\end{align*}

The first term balances model complexity and approximation via $ \epsilon $. In the usual FTRL analysis, $ \lambda \rightarrow \frac{\phi^{*}(\lambda L)}{\lambda} $ is non
decreasing for any $ L \in \real^{\Theta} $, and the term involving $ \Phi^{*} $ can be telescoped. Things are more
complex here because only some part of the loss is weighted by the time varying learning rate $ \lambda_T $. Through a careful
analysis involving the maximum likelihood estimator, we can decompose the $ \Phi^{*} $ term into a telescoping sum and a
remainder term.

\begin{lemma}
\label{lemma:pre_telescoping_phi_s}
\begin{align}
\label{eq:pre_telescoping_phi_s}
\nonumber&\phantom{=}\EE{\frac{\Phi^{*}(\eta (\Llik_T(\theta_T) - \Llik_T(\cdot )) - \lambda_T \Lest_T(\cdot ) 
)}{\lambda_T}}\\
	&\leq \EE{\sum_{t=1}^{T}\frac{\Phi^{*}(\eta (\Llik_t(\theta_{0}) \Llik_t(\cdot )) - \lambda_{t-1} \Lest_t(\cdot )
	)}{\lambda_{t-1}}
	-\frac{\Phi^{*}(\eta (\Llik_{t-1}(\theta_0)-\Llik_{t-1}(\cdot )) - \lambda_{t-1} \Lest_{t-1}(\cdot )
	)}{\lambda_{t-1}}}\\
	&\qquad+ \frac{\eta (6 + s \log \frac{edT}{s})}{\lambda_T}.
\end{align}
\end{lemma}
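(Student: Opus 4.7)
The plan is to decompose the left-hand side into two pieces that match the two natural components of the telescoping right-hand side, then control a rate-change correction term on each piece. The deterministic remainder will absorb the residual mismatch between the MLE $\theta_T$ and the true parameter $\theta_0$.

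First I would use the identity $\Phi^*(c+L) = c + \Phi^*(L)$, valid for any constant $c$ and function $L$, to write the LHS as $\eta \Llik_T(\theta_T)/\lambda_T + \xi_T(\lambda_T)/\lambda_T$, where $\xi_t(\mu) := \Phi^*(-\eta \Llik_t(\cdot) - \mu \Lest_t(\cdot))$. Applying the same identity inside the telescoping sum on the right reveals the clean decomposition $\Sigma_1 + \Sigma_2$, with $\Sigma_1 := \sum_{t=1}^T \eta (\Llik_t - \Llik_{t-1})(\theta_0) / \lambda_{t-1}$ and $\Sigma_2 := \sum_{t=1}^T [\xi_t(\lambda_{t-1}) - \xi_{t-1}(\lambda_{t-1})] / \lambda_{t-1}$.

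Second, I would show that $\xi_T(\lambda_T)/\lambda_T \leq \Sigma_2$ pointwise. A standard telescoping using $\xi_0 \equiv 0$ expresses $\xi_T(\lambda_T)/\lambda_T$ as a sum of per-round differences $\xi_t(\lambda_t)/\lambda_t - \xi_{t-1}(\lambda_{t-1})/\lambda_{t-1}$. Each difference splits into the matching term $[\xi_t(\lambda_{t-1}) - \xi_{t-1}(\lambda_{t-1})]/\lambda_{t-1}$ (summing to $\Sigma_2$) plus a rate-change correction $\xi_t(\lambda_t)/\lambda_t - \xi_t(\lambda_{t-1})/\lambda_{t-1}$. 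The correction is non-positive: $\xi_t$ is convex in $\mu$ as a log-partition function, and $\xi_t(0) = \log \int e^{-\eta \Llik_t(\theta)} \dd Q_1^+(\theta) \leq 0$ since $\Llik_t \geq 0$. Writing $\xi_t = \xi_t(0) + \tilde\xi_t$ with $\tilde\xi_t(0) = 0$ makes $\mu \mapsto \tilde\xi_t(\mu)/\mu$ non-decreasing by convexity; combined with $\lambda_t \leq \lambda_{t-1}$ and $\xi_t(0) \leq 0$, both summands in the correction are $\leq 0$.

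Third, I would bound $\eta \Llik_T(\theta_T)/\lambda_T$ in terms of $\Sigma_1$ plus the deterministic remainder. By MLE optimality, $\Llik_T(\theta_T) \leq \Llik_T(\theta_0) = \sum_t (\Llik_t - \Llik_{t-1})(\theta_0)$; Abel summation shows $\eta \Llik_T(\theta_0)/\lambda_T - \Sigma_1 = \eta \sum_t (\Llik_t - \Llik_{t-1})(\theta_0)(1/\lambda_T - 1/\lambda_{t-1}) \geq 0$. Controlling its expectation relies on sparse-model MLE concentration: the supermartingale property of $Z_t := \int \exp(\eta(\Llik_t(\theta_0) - \Llik_t(\theta))) \dd Q_1^+(\theta)$, which follows from sub-Gaussianity of the noise and $\eta \leq \tfrac{1}{2}$, combined with a union bound over an $\epsilon$-cover of the $s$-sparse subset of $\Theta$, yields $\EE{\Llik_T(\theta_0) - \Llik_T(\theta_T)} \leq 6 + s \log(edT/s)$ and hence the stated deterministic remainder $\eta(6 + s \log(edT/s))/\lambda_T$ outside the expectation.

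The main obstacle is the asymmetric weighting between $\Llik_t$ (fixed weight $\eta$) and $\Lest_t$ (time-varying weight $\lambda_t$): classical FTRL telescoping presumes a single learning rate and does not apply here directly. The asymmetry forces the separate treatment of the two components described above, and the MLE-related rate-change residual is precisely what gets absorbed into the remainder via sparse concentration---the supermartingale argument for $Z_t$ is what makes this absorption quantitatively tight.
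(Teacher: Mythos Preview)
Your step 3 is fine and is essentially the same monotonicity argument as the paper's Lemma~\ref{lemma:increasing_dual}: since $\Llik_t(\cdot)\ge 0$ pointwise, the map $\mu\mapsto \Phi^*(-\eta\Llik_t(\cdot)-\mu\Lest_t(\cdot))/\mu$ is non-decreasing, so the rate-change correction in the $\xi$-telescope is non-positive. No issue there.

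The gap is in step 4. After writing $\eta\Llik_T(\theta_T)/\lambda_T\le \eta\Llik_T(\theta_0)/\lambda_T$ and computing
\[
\frac{\eta\Llik_T(\theta_0)}{\lambda_T}-\Sigma_1
=\eta\sum_{t=1}^T\bigl(\Llik_t-\Llik_{t-1}\bigr)(\theta_0)\Bigl(\frac{1}{\lambda_T}-\frac{1}{\lambda_{t-1}}\Bigr)
=\frac{\eta}{2}\sum_{t=1}^T\epsilon_t^2\Bigl(\frac{1}{\lambda_T}-\frac{1}{\lambda_{t-1}}\Bigr),
\]
you try to control this in expectation via ``sparse-model MLE concentration''. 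But this quantity involves only $\theta_0$ and the noise $\epsilon_t^2$; it has nothing to do with the MLE. Its expectation is $\tfrac{\eta}{2}\sum_t\EE{\epsilon_t^2}(1/\lambda_T-1/\lambda_{t-1})$, which for the paper's learning rates $\lambda_t\asymp t^{-1/2}$ is of order $T/\lambda_T$, not $s\log(dT/s)/\lambda_T$. The bound $\EE{\Llik_T(\theta_0)-\Llik_T(\theta_T)}\le 6+s\log(edT/s)$ controls a different object and cannot be invoked here; in fact using $\Llik_T(\theta_T)\le \Llik_T(\theta_0)$ in the first place already threw away the only occurrence of the MLE.

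The paper's route avoids this by inserting the \emph{per-round} MLE $\theta_t$, not just $\theta_T$, before the monotonicity step: it telescopes with the shift $\Llik_t(\theta_t)$ (so that $L_1=\eta(\Llik_t(\cdot)-\Llik_t(\theta_t))\ge 0$), applies monotonicity to replace $\lambda_t$ by $\lambda_{t-1}$, and only then swaps $\theta_t\to\theta_0$ via the constant-shift identity. The resulting correction Abel-sums to $\eta\sum_t[\Llik_t(\theta_0)-\Llik_t(\theta_t)](1/\lambda_t-1/\lambda_{t-1})$, and \emph{each} factor $\Llik_t(\theta_0)-\Llik_t(\theta_t)$ now has expectation bounded by $6+s\log(edT/s)$ (Lemma~\ref{lemma:MLE_bound_complete}), while $\sum_t(1/\lambda_t-1/\lambda_{t-1})$ telescopes to at most $1/\lambda_T$. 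This is precisely where the MLE concentration enters, and why centering at $\theta_t$ for every $t$ rather than once at $T$ is essential.
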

A detailed proof of this result is provided in Appendix~\ref{app:pre_telescoping_phi_s}. Finally, the remaining sum can be 
handled by looking at the explicit formula for $ \Phi^{*} $. The terms related to the likelihood and the gap estimates 
can be separated using
Hölder's inequality, as is done in \citet{Zhang_22} and \citet*{Neu_P_S24}. More explicitly, by now choosing $ \eta
= \frac{1}{4}$, we obtain the following lemma.

\begin{lemma}
\label{lemma:telescoping_phi_s}

\begin{align}
\label{eq:telescoping_phi_s}
\nonumber&\phantom{=} \EE{\sum_{t=1}^{T}\frac{\Phi^{*}(\eta (\Llik_t(\theta_{0}) - \Llik_t(\cdot )) - \lambda_{t-1} 
\Lest_t(\cdot )
	)}{\lambda_{t-1}}
	-\frac{\Phi^{*}(\eta (\Llik_{t-1}(\theta_0)- \Llik_{t-1}(\cdot )) - \lambda_{t-1} \Lest_{t-1}(\cdot )
	)}{\lambda_{t-1}}}\\
	 &\qquad \leq \EE{-  \sum_{t=1}^{T}\frac{3\SIG_t(\pi_t)}{32\lambda_{t-1}} + 2\sum_{t=1}^{T} \Deltah(\pi_t)}.
\end{align}
\end{lemma}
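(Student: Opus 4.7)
The plan is to establish a per-round bound $\EEcc{(\Phi_t^* - \Phi_{t-1}^*)/\lambda_{t-1}}{\mathcal{F}_{t-1}} \leq 2\Deltah_t(\pi_t) - \tfrac{3}{32\lambda_{t-1}}\SIG_t(\pi_t)$ -- using shorthand $\Phi_t^*$, $\Phi_{t-1}^*$ for the two log-partitions in the $t$-th summand of \eqref{eq:telescoping_phi_s} -- and then sum over $t$. The first step is purely algebraic: since $dQ_t^+ / dQ_1^+ \propto \exp(-\eta\Llik_{t-1} - \lambda_{t-1}\Lest_{t-1})$, the ratio of normalizing constants inside $\Phi_t^* - \Phi_{t-1}^*$ is an expectation under $Q_t^+$. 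Plugging in $\eta = 1/4$ and $Y_t = \siprod{\theta_0}{A_t} + \epsilon_t$, and expanding the one-step increments $\Llik_t - \Llik_{t-1}$ and $\Lest_t - \Lest_{t-1}$, I would rewrite this as $\Phi_t^* - \Phi_{t-1}^* = \log \EEs{\exp(Z_t)}{\theta \sim Q_t^+}$ with $Z_t = -\tfrac{\epsilon_t}{4}\siprod{\theta_0 - \theta}{A_t} - \tfrac{1}{8}\siprod{\theta_0 - \theta}{A_t}^2 + \lambda_{t-1}\Delta(A_t, \theta)$.

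Next, I would apply Jensen's inequality conditional on $\mathcal{F}_{t-1}$ to push the outer expectation past the logarithm, then integrate out the $1$-sub-Gaussian noise $\epsilon_t$ conditional on $\mathcal{F}_{t-1}, A_t, \theta$. The sub-Gaussian MGF contributes $\exp(\tfrac{1}{32}\siprod{\theta_0-\theta}{A_t}^2)$, which combines with $-\tfrac{1}{8}\siprod{\theta_0-\theta}{A_t}^2$ to leave a net penalty of $-\tfrac{3}{32}\siprod{\theta_0-\theta}{A_t}^2$ (the constant being exactly $\eta(\eta-1)/2$ at $\eta = 1/4$). This reduces the task to bounding $\tfrac{1}{\lambda_{t-1}}\log \EEs{\exp(\lambda_{t-1}\Delta(A,\theta) - \tfrac{3}{32}\siprod{\theta_0-\theta}{A}^2)}{A\sim\pi_t, \theta \sim Q_t^+}$.

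The main obstacle will be preserving the $3/32$ coefficient on $\SIG_t$ while keeping the gap coefficient as small as $2$. A direct Hölder split with $p = q = 2$ is too lossy: it degrades the gap coefficient to $(e^2 - 1)/2 \approx 3.19 > 2$. Instead, I would multiply two elementary chord upper bounds on convex functions: $e^x \leq 1 + 2x$ on $[0, 1]$ applied to $\lambda_{t-1}\Delta \in [0, 1]$ (valid since $\lambda_{t-1} \leq 1/2$ and $\Delta \leq 2$), and $\exp(-\tfrac{3u}{32}) \leq 1 - \beta u$ on $[0, 4]$ with $\beta = (1 - e^{-3/8})/4$ applied to $u = \siprod{\theta_0-\theta}{A}^2 \leq 4$. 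Since $\Delta, u \geq 0$, the cross term in the product is non-positive and can be dropped, yielding $\exp(\lambda_{t-1}\Delta - \tfrac{3u}{32}) \leq 1 + 2\lambda_{t-1}\Delta - \beta u$.

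Finally, I would take expectations over $(A, \theta) \sim \pi_t \otimes Q_t^+$, apply $\log(1+x) \leq x$, divide by $\lambda_{t-1}$, and use the bias-variance decomposition $\EEs{\siprod{\theta_0-\theta}{A}^2}{\theta \sim Q_t^+} = \siprod{\theta_0 - \bar\theta(Q_t^+)}{A}^2 + \EEs{\siprod{\bar\theta(Q_t^+) - \theta}{A}^2}{\theta \sim Q_t^+}$ to lower-bound the penalty by $2 \SIG_t(\pi_t)$ (discarding the non-negative bias term). This gives the per-round bound $2\Deltah_t(\pi_t) - (2\beta/\lambda_{t-1})\SIG_t(\pi_t)$, and the numerical check $2\beta = (1 - e^{-3/8})/2 \approx 0.156 \geq 3/32$ closes the argument after summation over $t$.
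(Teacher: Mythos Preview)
Your proof is correct and takes a genuinely different route from the paper's. Both arguments begin by recognizing the $t$-th summand as
\[
\frac{1}{\lambda_{t-1}}\log\int_\Theta \Bigl(\tfrac{p(Y_t\mid\theta,A_t)}{p(Y_t\mid\theta_0,A_t)}\Bigr)^{\eta}\exp\bigl(\lambda_{t-1}\Delta(A_t,\theta)\bigr)\,\dd Q_t^+(\theta),
\]
but diverge thereafter. The paper applies H\"older's inequality with exponents $\alpha=\beta=2$ \emph{inside the logarithm} to decouple the likelihood factor from the gap factor, and then invokes two separate lemmas: Lemma~\ref{lemma:likelihood_IG_bound} (Jensen, the sub-Gaussian MGF, and a quadratic expansion of $e^{-x}$) handles the likelihood piece and delivers the $-\tfrac{3}{32\lambda_{t-1}}\SIG_t$ term, while Lemma~\ref{lemma:gap_bound} (the data-dependent Hoeffding bound of Lemma~\ref{lemma:data_dep_hoeffding}) handles the gap piece and delivers the $2\Deltah_t$ term. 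You instead push the conditional expectation over $(A_t,\epsilon_t)$ past the logarithm via Jensen, integrate out the noise in one shot to obtain the clean exponent $\lambda_{t-1}\Delta-\tfrac{3}{32}\siprod{\theta_0-\theta}{A}^2$, and then bound the \emph{product} of the two resulting exponentials by a product of chord inequalities, exploiting nonnegativity to discard the cross term. This avoids H\"older entirely and is more elementary; it also yields a strictly larger constant on the information-gain term ($2\beta=(1-e^{-3/8})/2\approx 0.156$ versus the paper's $3/32\approx 0.094$), which you then relax to match the stated bound. The paper's decoupled route is more modular---the two auxiliary lemmas are reusable in isolation---whereas your argument is tighter and self-contained for this specific claim. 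One minor correction: your remark that a H\"older split necessarily degrades the gap coefficient to $(e^2-1)/2$ presumes one uses only the linear chord bound on $[0,2]$ for the gap MGF; the paper instead uses the quadratic bound $e^x\le 1+x+x^2$ to recover coefficient $2$, so H\"older per se is not the obstruction you describe.
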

A full proof of this result is provided in Appendix~\ref{app:telescoping_phi_s}.
Combining Lemmas~\ref{lemma:FTRL_decomposition}, \ref{lemma:prior_comparator_bound}, \ref{lemma:pre_telescoping_phi_s},
\ref{lemma:telescoping_phi_s} and setting $ \epsilon = \frac{2}{T} $, we obtain the desired regret bound stated in
Theorem~\ref{thm:Optimistic_posterior_RB}. 

\subsection{Proof of Theorem~\ref{thm:regret_main}}
We show how Theorem \ref{thm:Optimistic_posterior_RB} can be combined with bounds on the surrogate regret to control the
true regret.
The first important fact is that the surrogate regret of any policy can always be controlled in terms of the $ 2 $ or
the $ 3
$-surrogate information
ratio of that policy.

\begin{lemma}
\label{lemma:regret_GIR_bound}
Let $ \lambda>0 $, then we have that for any policy $ \pi \in \Delta(\A) $
\begin{equation*}
	\Deltah_t(\pi) \leq \frac{\SIG_t(\pi)}{\lambda} + \min \left(\frac{1}{4} \lambda \SIRs_{t}(\pi), c_3^{*}\sqrt{\lambda
	\SIRc_t(\pi)}\right),
\end{equation*}
where $ c_3^{*} < 2 $ is an absolute constant defined in Lemma~\ref{lemma:Gen_AM_GM}.
\end{lemma}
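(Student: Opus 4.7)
The plan is to prove the two halves of the minimum separately, each by an appropriate form of AM-GM, and then take the smaller of the two upper bounds. Both sides use only the defining identities $\SIRs_t(\pi) = \Deltah_t(\pi)^2/\SIG_t(\pi)$ and $\SIRc_t(\pi) = \Deltah_t(\pi)^3/\SIG_t(\pi)$, so the lemma is essentially a scalar exercise once these substitutions are made.

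For the $2$-information ratio piece, I would apply the standard two-term AM-GM inequality $x + y \geq 2\sqrt{xy}$ to $x = \SIG_t(\pi)/\lambda$ and $y = \lambda \SIRs_t(\pi)/4$. Their product is $\Deltah_t(\pi)^2/4$, so $2\sqrt{xy} = \Deltah_t(\pi)$, yielding $\SIG_t(\pi)/\lambda + \lambda \SIRs_t(\pi)/4 \geq \Deltah_t(\pi)$ in one line. For the $3$-information ratio piece, setting $u = \SIG_t(\pi)/\lambda$ and $w = \Deltah_t(\pi)$, and observing that $\sqrt{\lambda \SIRc_t(\pi)} = \sqrt{\lambda w^3/\SIG_t(\pi)} = w^{3/2}/\sqrt{u}$, the claim reduces to the scalar inequality $w \leq u + c_3^* w^{3/2}/\sqrt{u}$ for all $u, w > 0$. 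Normalising by $w$ and setting $s = u/w$, this becomes $1 \leq s + c_3^*/\sqrt{s}$; by calculus, the minimum of the right-hand side over $s > 0$ is attained at $s = (c_3^*/2)^{2/3}$ and equals $3 (c_3^*)^{2/3} 2^{-2/3}$, so any $c_3^* \geq 2/(3\sqrt{3})$ (in particular some $c_3^* < 2$) suffices. I would import this scalar fact directly from Lemma~\ref{lemma:Gen_AM_GM}, which presumably states it in a $\gamma$-parametric form.

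The one subtle point is that one cannot derive the $3$-information ratio bound by a naive weighted AM-GM $x^{1/3} y^{2/3} \leq x/3 + 2y/3$ applied to the factorisation $\Deltah_t(\pi) = (\SIG_t(\pi)/\lambda)^{1/3} (\lambda \Deltah_t(\pi)^3/\SIG_t(\pi))^{1/3}$, because that would force the coefficient of $\SIG_t(\pi)/\lambda$ down to $1/3$ rather than $1$. Preserving coefficient $1$ requires the one-variable analysis above instead of three-term AM-GM, which is the only non-mechanical obstacle. Once both halves are established, the lemma follows immediately by taking the minimum of the two upper bounds.
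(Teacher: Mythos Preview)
Your proposal is correct and matches the paper's proof essentially step for step: the paper also uses the identities $\Deltah_t(\pi)=\sqrt{\SIG_t(\pi)\,\SIRs_t(\pi)}=(\SIG_t(\pi)\,\SIRc_t(\pi))^{1/3}$, applies ordinary AM--GM for the $2$-case, and invokes Lemma~\ref{lemma:Gen_AM_GM} (the generalized AM--GM $\sqrt[p]{xy}\le x/\lambda + c_p^{*}(\lambda y)^{1/(p-1)}$ with $p=3$) for the $3$-case. Your computed threshold $c_3^{*}=2/(3\sqrt{3})$ is exactly the paper's constant, so the ``$c_3^{*}<2$'' in the statement is in fact tight at this value.
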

This is a consequence of a simple generalization of the AM-GM inequality and is proved in
Appendix~\ref{app:regret_GIR_bound}. Combining the previous lemma with $ \lambda = \frac{64}{3} \lambda_{t-1} $ and Theorem~\ref{thm:Optimistic_posterior_RB}, we can further upper bound the regret
of a sequence of policies $ (\pi_t)_t $ as
\begin{align}
	\nonumber R_T &\leq \EE{\frac{5 + 2s \log \frac{edT}{s}}{\lambda_{T-1}} -\sum_{t=1}^{T}
\frac{3\SIG_t(\pi_t)}{32 \lambda_{t-1}} + 2\sum_{t=1}^{T} \Deltah_t(\pi_t)} \\
					  \label{eq:RB_max}&\leq \EE{\frac{C_T}{\lambda_{T-1}} + \sum_{t=1}^{T}
						  \min\left(\frac{32}{3} \lambda_{t-1}\SIRs_t(\pi_t), \frac{16}{3}
					  c_3^{*} \sqrt{3\lambda_{t-1}\SIRc_t(\pi_t)}\right)},
\end{align}
where $ C_T = 5 + 2s \log \frac{edT}{s} $.
Usually, bounds on the $ 2 $-information ratio can be converted to $ \OO(\sqrt{T}) $ bounds and bounds on the $ 3
$-information ratio can be converted to $ \OO(T^{\frac{2}{3}}) $ bounds. Hence we will use the $ 2 $-information ratio
to control the regret in the data-rich regime and the $ 3 $-information ratio to control the regret in
the data-poor regime.
Due to Lemma \ref{lemma:GIR_minimizer}, the SOIDS policy minimizes the $ 2 $-information ratio and approximately minimizes the $ 3 $-information ratio. As a
result, if there exists a "forerunner" algorithm with bounded $ 2 $-information ratio or $
3 $-information ratio, SOIDS inherits these bounds automatically. In particular, we can use a different forerunner for each regime and SOIDS will
match the regret guarantees of the best forerunner in each regime.

This forerunner-based technique is widely used to analyze IDS based
algorithms and has been applied to a variety of Bayesian settings \citep{Russo_V17, Hao_L_D21a, Hao_L22a} and some frequentist settings \citep{Kirsc_K18, Kirsc_L_K20, Kirsc_L_V_S21}.
An advantage of the OIDS framework is that since the surrogate quantities are defined with respect to the
optimistic posterior, the analysis of the surrogate information ratio is virtually identical to the corresponding
analysis of the information ratio in the Bayesian setting.

The forerunner we consider for the $ 2 $-information ratio is the \textit{Feel-Good Thompson Sampling} (FGTS) algorithm of
\citet{Zhang_22}.
For the $ 3
$-information ratio, we consider a mixture of the FGTS policy and an exploratory policy. The following lemma provides bounds on the surrogate information ratios of the SOIDS algorithm.

\begin{lemma}
\label{lemma:IR_bound}
	The $ 2 $- and $ 3 $-surrogate-information ratio of the SOIDS algorithm satisfy for any $ t\geq 0 $
\begin{equation}
\label{eq:2_IR_bound}
	\SIRs_t(\pIDS_t) \leq \SIRs_t(\pFGTS_t) \leq 2 d	
\end{equation}
and
\begin{equation}
\label{eq:3_IR_bound}
\SIRc_t(\pIDS_t) \leq 2 \SIRc_t(\pmix_t) \leq \frac{54s}{C_{\min}}.
\end{equation}
\end{lemma}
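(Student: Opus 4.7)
Both chains begin with a reduction using the (approximate) minimality of $\pIDS_t$ among policies in $\Delta(\A)$ with respect to the relevant surrogate information ratio. For \eqref{eq:2_IR_bound}, the defining property \eqref{eq:soids_policy} of $\pIDS_t$ as the exact minimizer of $\SIRs_t$ immediately yields $\SIRs_t(\pIDS_t) \leq \SIRs_t(\pFGTS_t)$. For \eqref{eq:3_IR_bound}, applying Lemma~\ref{lemma:GIR_minimizer} with $\gamma=3$ gives $\SIRc_t(\pIDS_t) \leq 2\min_{\pi\in\Delta(\A)} \SIRc_t(\pi) \leq 2\SIRc_t(\pmix_t)$. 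Thus it suffices to separately bound the two forerunner information ratios $\SIRs_t(\pFGTS_t)$ and $\SIRc_t(\pmix_t)$.

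The bound $\SIRs_t(\pFGTS_t) \leq 2d$ adapts the classical Thompson-sampling trace argument of \citet{Russo_V17} to the optimistic posterior $Q_t^+$. Under FGTS, $A_t = a^*(\theta')$ with $\theta' \sim Q_t^+$ drawn independently of a fresh $\theta \sim Q_t^+$; expanding the definitions and using this independence yields $\Deltah_t(\pFGTS_t) = \mathbb{E}_\theta[\langle \theta - \bar\theta(Q_t^+), a^*(\theta)\rangle]$ and $2\SIG_t(\pFGTS_t) = \mathbb{E}_\theta[a^*(\theta)^\top \Sigma\, a^*(\theta)]$, where $\Sigma = \mathrm{Cov}_{Q_t^+}(\theta)$. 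Writing $\Sigma = LL^\top$ and $\theta - \bar\theta(Q_t^+) = L\zeta$ on $\mathrm{range}(\Sigma)$, the whitened variable $\zeta$ satisfies $\mathbb{E}\|\zeta\|_2^2 \leq d$. Two successive Cauchy--Schwarz applications (first in $\mathbb{R}^d$, then in $L^2(Q_t^+)$) then give
\begin{equation*}
|\Deltah_t(\pFGTS_t)| = \left|\mathbb{E}[\zeta^\top L^\top a^*(\theta)]\right| \leq \sqrt{\mathbb{E}\|\zeta\|_2^2}\,\sqrt{\mathbb{E}\|L^\top a^*(\theta)\|_2^2} \leq \sqrt{d}\,\sqrt{2\SIG_t(\pFGTS_t)},
\end{equation*}
which rearranges to $\SIRs_t(\pFGTS_t) \leq 2d$.

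For $\SIRc_t(\pmix_t) \leq 27s/C_{\min}$, I would let $\pmix_t = (1-\alpha)\pFGTS_t + \alpha\,\pi^{\text{exp}}$ for a carefully chosen $\alpha \in (0,1)$, where $\pi^{\text{exp}}$ is the exploratory policy with $\int a a^\top\,d\pi^{\text{exp}}(a) \succeq C_{\min}\,I$. Linearity of $\SIG_t$ and $\Deltah_t$ in $\pi$ gives $\SIG_t(\pmix_t) \geq \alpha\,\tfrac{C_{\min}}{2}\mathbb{E}_{\theta \sim Q_t^+}\|\theta - \bar\theta(Q_t^+)\|_2^2$. The sparse optimal action condition, together with the subset-selection prior (which is supported only on $s$-sparse $\theta$), ensures that $a^*(\theta)$ is $s$-sparse for every $\theta \in \mathrm{supp}(Q_t^+)$; Cauchy--Schwarz restricted to the $\leq s$ nonzero coordinates of $a^*(\theta)$ together with $\|a\|_\infty \leq 1$ then gives $\Deltah_t(\pFGTS_t)^2 \leq s\,\mathbb{E}\|\theta - \bar\theta(Q_t^+)\|_2^2$. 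Combining these ingredients with $\Deltah_t(\pmix_t) \leq 2$ (from boundedness of rewards) and optimizing over $\alpha$ delivers the claimed bound.

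The main obstacle is the last step: tracking the exact constant $27$ requires a careful case analysis (for instance, on whether $\mathbb{E}\|\theta - \bar\theta(Q_t^+)\|_2^2$ is large or small) and a tight choice of $\alpha$ in each case. Structurally, the argument mirrors the Bayesian sparse-IDS analysis of \citet{Hao_L_D21a}: the surrogate quantities under $Q_t^+$ play precisely the role of the Bayesian ones in that proof, so the calculation transfers with minimal modification once the FGTS identities above are established.
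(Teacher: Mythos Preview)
Your proposal is correct in outline, and for the $2$-ratio your whitening argument is a genuinely different route from the paper's. The paper builds the $|\A|\times|\A|$ matrix $M_{a,a'} = \sqrt{\alpha_a\alpha_{a'}}\bigl(\EEt{r(a,\wt\theta_t)\mid a^*(\wt\theta_t)=a'} - r(a,\bar\theta(Q_t^+))\bigr)$, shows $\Deltah_t(\pFGTS_t)=\trace{M}$ and $\SIG_t(\pFGTS_t)\ge\tfrac12\|M\|_F^2$, and then factorizes $M$ as a product of a $|\A|\times d$ and a $d\times|\A|$ matrix to conclude $\mathrm{rank}(M)\le d$. Your decomposition $\Sigma=LL^\top$ with $\zeta=L^+(\theta-\bar\theta(Q_t^+))$ and the double Cauchy--Schwarz is slicker and avoids conditioning on $a^*(\wt\theta_t)$; the paper's version has the advantage of making the bound $2\cdot\mathrm{rank}(M)$ explicit, which can be strictly smaller than $2d$.

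For the $3$-ratio your setup is essentially the paper's (with the minor difference that you work directly with $V=\EEs{\|\theta-\bar\theta(Q_t^+)\|_2^2}{Q_t^+}$ rather than conditional means $\mu_{a'}=\EEt{\wt\theta_t\mid a^*(\wt\theta_t)=a'}$; both yield the same key inequality $\Deltah_t(\pFGTS_t)^2 \le \tfrac{2s}{C_{\min}}\SIG_t(\mu)$). One small correction: the ingredient you call ``$\Deltah_t(\pmix_t)\le 2$'' is too crude by itself to reach the constant $27$. The case analysis you correctly flag as the obstacle is not on the size of $V$ but on the comparison between $\Deltah_t(\mu)$ and $\Deltah_t(\pFGTS_t)$: when $\Deltah_t(\mu)\le\Deltah_t(\pFGTS_t)$ one takes the forerunner to be $\mu$ itself; otherwise one solves for the optimal mixture weight $\gamma$ (which may or may not lie in $(0,1]$, giving two further subcases). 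This is exactly the structure of the Bayesian argument in \citet{Hao_L_D21a} that you cite, so your overall plan goes through.
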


The explicit definition of both forerunner algorithms, as well as the proof of this lemma, are deferred to
Appendix~\ref{app:IR_bound}. Finally, it remains to pick the learning rate $ \lambda_t $. The following lemma describes the appropriate learning rate
for the data-poor and the data-rich regimes separately.

\begin{lemma}
\label{lemma:history_indep_learning_rates}
The choice of learning rate $ \lambdas_t =  \sqrt{\frac{3C_{t+1}}{128d(t+1)}} $
guarantees

\begin{equation*}
	\frac{C_T}{\lambdas_{T-1}} + \frac{32}{3}  \sum_{t=1}^{T} \lambdas_{t-1} \SIRs_{t}(\pIDS_t) \leq 16 \sqrt{
	\frac{2}{3} C_T d T}.
\end{equation*}
The choice of learning rate $ \lambdac_t = \frac{1}{4 \cdot 6^{\frac{1}{3}}} \left( \frac{ C_{t+1}\sqrt{C_{\min}}}{(t+1)\sqrt{s}} \right)^{\frac{2}{3}} $
guarantees
\begin{equation*}
	\frac{C_T}{\lambdac_{T-1}} + \frac{16}{3} c_3^{*} \sum_{t=1}^{T} \sqrt{3 \lambdac_{t-1} \SIRc_{t}(\pIDS_t)} \leq
	12 \cdot 6^{\frac{1}{3}} \left(\frac{s \cdot C_T }{C_{\min}}\right)^{\frac{1}{3}}	
	T^{\frac{2}{3}}.
\end{equation*}
\end{lemma}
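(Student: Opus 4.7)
The plan is to handle the two parts separately, but via the same two-step strategy: first apply the worst-case bounds on the surrogate information ratios from Lemma~\ref{lemma:IR_bound} to convert each sum into a purely analytic expression depending only on the learning-rate schedule, and then evaluate that expression by substituting the explicit form of $\lambda_t$ and controlling the resulting power sums by integral comparison. A useful structural observation to keep in mind is that both learning rates are tuned so that the ``prior term'' $C_T/\lambda_{T-1}$ and the cumulative-ratio term end up of the same order; one should expect each of the two halves to contribute roughly half of the final bound.

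For the $\gamma=2$ (data-rich) case, I would first invoke $\SIRs_t(\pIDS_t) \le 2d$, reducing the sum to $\tfrac{64 d}{3}\sum_{t=1}^T \lambdas_{t-1}$. Substituting $\lambdas_{t-1}=\sqrt{3C_t/(128 d t)}$ and using that $C_t$ is non-decreasing, I can pull $\sqrt{C_T}$ out and bound what remains by $\sum_{t=1}^T t^{-1/2}\le 2\sqrt T$. A short computation then yields a sum-term bound of the form $c\sqrt{C_T d T}$. A direct substitution produces the same expression for $C_T/\lambdas_{T-1}$. Adding the two pieces recovers the factor $16\sqrt{2/3}$ claimed in the lemma. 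The calculation is routine once the integral bound and the monotonicity of $C_t$ are used.

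For the $\gamma=3$ (data-poor) case, I would analogously use $\SIRc_t(\pIDS_t)\le 54s/C_{\min}$ so that each summand is at most $\sqrt{162\,s\,\lambdac_{t-1}/C_{\min}}$. Plugging in $\lambdac_{t-1}=\tfrac{1}{4\cdot 6^{1/3}}\bigl(C_t\sqrt{C_{\min}}/(t\sqrt s)\bigr)^{2/3}$ and taking the square root reduces each term to a constant multiple of $t^{-1/3} C_t^{1/3} (s/C_{\min})^{1/3}$. Applying $C_t\le C_T$ once more and the integral bound $\sum_{t=1}^{T} t^{-1/3}\le \tfrac{3}{2}T^{2/3}$ collapses the sum into a constant times $(s C_T/C_{\min})^{1/3} T^{2/3}$. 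Substituting directly also shows $C_T/\lambdac_{T-1}=4\cdot 6^{1/3}(sC_T/C_{\min})^{1/3}T^{2/3}$, matching the order of the sum term, and combining them gives the target $12\cdot 6^{1/3}$.

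I expect the only delicate point to be the constant bookkeeping in the $\gamma=3$ case: the two contributions must add up to exactly $12\cdot 6^{1/3}$, which leaves little slack. The $C_T/\lambdac_{T-1}$ term already consumes $4\cdot 6^{1/3}$, so the cumulative-$\SIRc$ term must be absorbed into the remaining $8\cdot 6^{1/3}$. Since the prefactor of the sum is $\tfrac{16}{3}c_3^{*}\sqrt{162}\cdot \tfrac{3}{2}/\bigl(2\cdot 6^{1/6}\bigr)$, closing this requires using the precise (rather than the loose $c_3^{*}<2$) value of $c_3^{*}$ provided by Lemma~\ref{lemma:Gen_AM_GM}; in particular one needs something on the order of $c_3^{*}\le \tfrac{2\sqrt 3}{9}$ for the arithmetic to close cleanly. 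Apart from this careful constant tracking, everything else is mechanical algebra on power sums, and I would organize the write-up by doing each part (data-rich first, then data-poor) as a self-contained short block that isolates the substitution of the learning rate, the integral bound on the power sum, and the final combination with the $C_T/\lambda_{T-1}$ term.
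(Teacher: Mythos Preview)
Your proposal is correct and follows essentially the same approach as the paper: invoke Lemma~\ref{lemma:IR_bound} to replace $\SIRs_t$ and $\SIRc_t$ by their worst-case values, use the monotonicity of $C_t$ to pull out $C_T$, control $\sum_t t^{-1/2}$ and $\sum_t t^{-1/3}$ by the standard integral bounds, and combine with the $C_T/\lambda_{T-1}$ term. Your observation that the $\gamma=3$ case requires the exact value $c_3^{*}=\tfrac{2}{3\sqrt{3}}=\tfrac{2\sqrt{3}}{9}$ from Lemma~\ref{lemma:Gen_AM_GM} (rather than the loose $c_3^{*}<2$) is precisely what the paper uses to land on $12\cdot 6^{1/3}$; the paper just phrases this step as optimizing a free parameter $\alpha$ in the learning rate via Lemma~\ref{lemma:reverse_AM_GM}, which is equivalent to your direct substitution.
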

The proof is deferred to Appendix~\ref{app:history_indep_learning_rates}. The expression for the constant $c_3^* \leq 2$ can be found in Appendix \ref{app:gen_am_gm} (cf. Lemma \ref{lemma:Gen_AM_GM}).
It remains to analyze what happens when the learning rate $ \lambda_t=\min(\frac{1}{2},\max(\lambdas_t, \lambdac_t) ) $
is chosen. We
defer this to Appendix~\ref{app:joint_learning_rate}.

\section{Experiments}

We aim to verify that, in both the data-rich and data-poor regimes simultaneously, the regret of SOIDS is comparable with the regret of existing algorithms that achieve near optimal worst-case regret in either the data-rich or the data-poor regime. Our baseline for the data-rich regime is the online-to-confidence-set (OTCS) method proposed by \citet{Abbas_P_S12}, which has worst case regret of the order $\sqrt{sdT}$. For a tougher comparison, we run this method with the confidence sets from Theorem 4.7 of \citet{clerico2025confidence}, which have much smaller constant factors than those used by \citet{Abbas_P_S12}. Our baseline for the data-poor regime is the Explore the Sparsity Then Commit (ESTC) algorithm proposed by \citet{Hao_L_W20}, which has worst-case regret of the order $(sT)^{2/3}$. For reference, we also compare with LinUCB \citet{abbasi2011improved}, which does not adapt to sparsity.

It is generally difficult to run the SOIDS algorithm exactly because the surrogate information ratio contains expectations w.r.t.\ the optimistic posterior. In our implementation of SOIDS, we use the empirical Bayesian sparse sampling procedure of \citet{Hao_L_D21a} to draw approximate samples from the optimistic posterior, and then approximate the surrogate information ratio via sample averages. We provide further details regarding the implementations of each method in Appendix \ref{sec:experiment_details}.

\begin{figure}
\centering
\includegraphics[width=\textwidth]{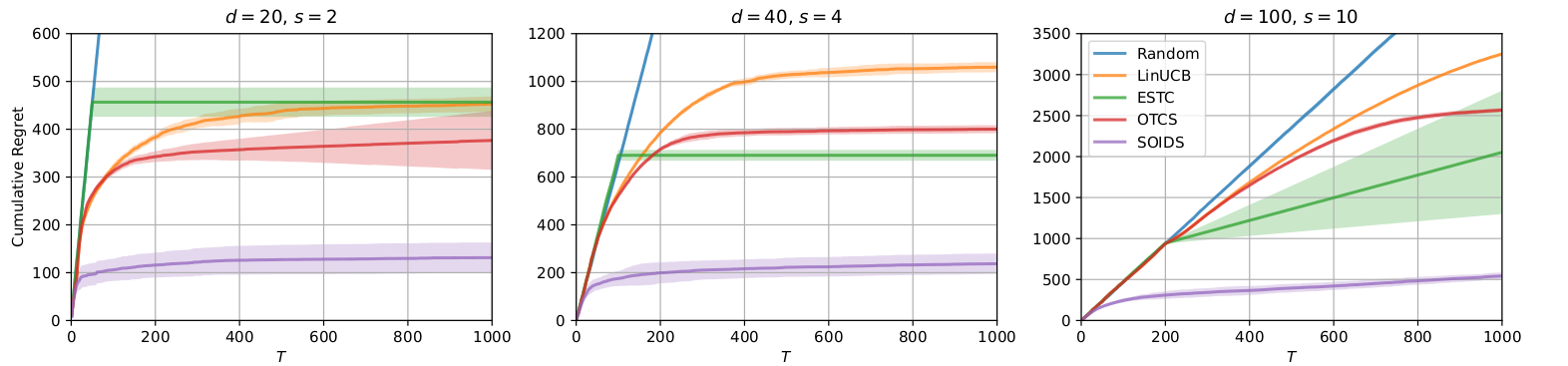}
\caption{Cumulative regret for $d = 20$ (left) $40$ (middle) and $100$ (right). We plot the mean $\pm$ standard deviation over 10 repetitions.}
\label{fig:regret}
\end{figure}

For each $d \in \{20, 40, 100\}$, $\theta_0$ is the $s$-sparse vector in $\mathbb{R}^d$, with $s = d/10$, in which first $s$ components are $10/s$ and the remaining components are zero. The action set consists of 200 random draws from the uniform distribution on $[-1, 1]^d$. The noise variance is 1 and we run each method 10 times. In Figure \ref{fig:regret}, we report the cumulative regret over $T=1000$ steps. As $d$ is varied from 20 to 100, we appear to transition from the data-rich regime to the data-poor regime: for $d = 20$, the OTCS method is the best performing baseline, whereas for $d=100$, ETCS is the best performing baseline. As our theoretical results would suggest, SOIDS performs well in both regimes.

\section{Conclusion}
There remain several interesting questions that our work leaves open for future 
research, such as the possibility of improving the logarithmic terms in the instance-dependent regret bound (as mentioned 
earlier in Section~\ref{sec:main}). We highlight another question below.

In our experiments, we have made use of an approximate implementation of OIDS adapted from \citet{Hao_L_D21a}. 
The initial success we have seen in our experiments suggests that this approximation might be viable in more 
challenging settings, and worthy of an attempt at a solid theoretical analysis. More broadly, the results indicate 
a potential advantage of IDS-style methods over DEC-inspired methods \citep{Foste_K_Q_R22,Kirsc_B_C_T_S23}. Indeed, we 
are not aware of any general methods for approximating the optimization problems that the E2D algorithm of 
\citet{Foste_K_Q_R22} requires to solve, in contrast to our results that indicate that IDS-inspired algorithms may very 
well be amenable to practical implementation. Whether the concrete approximation we used in our experiments is the best 
possible one or not remains to be seen.




\paragraph{Acknowledgements.}
The authors wish to thank Johannes Kirschner for thought-provoking discussions about information directed sampling before the preparation of this manuscript. This project has received funding from the European Research Council (ERC), under the European
Union’s Horizon 2020 research and innovation programme (Grant agreement No.~950180).

\clearpage
\bibliography{ref}
\appendix
\section{Related work}

The first algorithms and regret bounds for sparse linear bandits were designed for the data-rich regime. \citet{Abbas_P_S12} developed an online-to-confidence-set conversion for linear models, which converts any algorithm for online linear regression into a linear bandit algorithm whose regret depends on the regret of the online regression algorithm. When the SeqSEW algorithm \citep{gerchinovitz2013sparsity} is used in this conversion, the result is a sparse linear bandit algorithm with a regret bound of the order $\OO(\sqrt{sdT})$ (ignoring logarithmic factors). \citet{lattimore2020bandit} established a matching lower bound for the data-rich regime, showing that this rate cannot be improved.

More recently, several works have studied the data-poor regime, in which the dimension $d$ is much larger than the number of rounds $T$. \citet{Hao_L_W20} showed that an explore-then-commit algorithm satisfies a regret bound of the order $O((sT)^{2/3}C_{\min}^{-2/3})$, and established a lower bound of order $\Omega(\min(s^{1/3}T^{2/3}C_{\min}^{-1/3}, \sqrt{dT})$. Subsequently, \citet{jang2022popart} proposed the PopArt estimator for sparse linear regression, and showed that an explore-then-commit algorithm that uses this estimator achieves a regret bound of the order $\OO(s^{2/3}T^{2/3}H_{\star}^{2/3})$, where $H_{\star}$ is another problem-dependent quantity that satisfies $H_{\star}^2 \leq C_{\min}^{-1}$. In addition, \citet{jang2022popart} established a lower bound of order $\Omega(s^{2/3}T^{2/3}C_{\min}^{-1/3})$, showing that the optimal rate for the data-poor regime is $s^{2/3}T^{2/3}$. \citet{Hao_L_D21a} showed that sparse IDS has a Bayesian regret bound that is optimal for both regimes.

A number of works have considered sparse contextual linear bandits, in which the action set $\A$ changes in each round $t$. In the case where the actions sets are chosen by an adaptive adversary, the upper and lower bounds of the order $\sqrt{sdT}$ by \citet{Abbas_P_S12} and \citet{lattimore2020bandit} respectively still hold. Under the assumption that the action sets are generated randomly, and such that either a uniform or greedy policy is (with high probability) exploratory, several methods have been shown to achieve nearly dimension-free regret bounds \citet{bastani2020online, wang2018minimax, kim2019doubly, oh2021sparsity, chakraborty2023thompson}.

The concept of balancing instantaneous regret and information gain through the information ratio was first introduced by \citet{Russo_R16} in the context of analyzing Thompson Sampling. Building upon this, the Information-Directed Sampling (IDS) algorithm was proposed by \citet{Russo_V17} to directly minimize the information ratio, thereby optimizing the trade-off between regret and information gain. These foundational ideas have since been extended and applied to a variety of settings including bandits \citep{Bubec_S22}, contextual bandits \citep{Neu_O_P_S22, Hao_L_Q22}, reinforcement learning \citep{Hao_L22a}, and sparse linear bandits \citep{Hao_L_D21a}. However, these works are primarily situated in the Bayesian framework and focus on Bayesian regret bounds that hold only in expectation with respect to the prior distribution.

A key challenge in extending these methods to the frequentist setting lies in estimating the instantaneous regret and
define a meaningful notion of information gain. Both of those things are naturally possible in Bayesian analysis but difficult when the true model is unknown. Moreover, Bayesian posteriors may inadequately represent model uncertainty from a frequentist perspective. We highlight three strands of research that have attempted to address this challenge:

Confidence-set based information ratio approaches:
Works such as \citet{Kirsc_K18}, \citet{Kirsc_L_K20}, and \citet{Kirsc_L_V_S21} extend the notion of the information
ratio to frequentist settings by constructing high-probability confidence sets for the instantaneous regret and
information gain. These results are mostly limited to setting with some linear structure. 

Distributionally robust and worst-case information-regret trade-offs:
The Decision-to-Estimation-Coeffiecient(DEC) line of work of \citep{Foste_K_Q_R22, Foste_R20, Foste_R_S_S22, Foste_G_Q_R_S22,
Kirsc_B_C_T_S23} explores the frequentist setting by analyzing worst-case trade-offs between regret and information
gain. One limitation is that the DEC is an inherently worst-case measure of comlexity. Moreover, algorithms based on the
DEC usually require solving complex min-max optimization problems at each time step, making their practical implementation challenging and unclear.

Optimistic posterior approaches for frequentist guarantees:
The approach most closely related to our work modifies the Bayesian posterior to provide frequentist guarantees.
Introduced by \citet{Zhang_22}, the optimistic posterior is a modification of the Bayesian posterior which  enables
frequentist regret bounds for a variant of Thompson Sampling. Subsequently, \citet{Neu_P_S24} studied the optimistic
posterior framework in greater depth, defining a frequentist analog of the information ratio to extend IDS to
frequentist settings. A notable limitation of these works is their restriction to constant learning rates in the
optimistic posterior, which limits adaptivity, an issue that we address in this paper.

\section{Analysis of the Optimistic posterior}
This section provides further details about the prior underlying the optimistic posterior and guarantees on the 
posterior updates.
\label{app:Optimistic_posterior}
\subsection{Follow the regularized leader analysis}
The main step in our analysis of the optimistic posterior is to leverage the follow the regularized leader formulation
of our optimistic posterior update 
\begin{equation*}
	Q_{t+1}^{+} = \argmin_{P\in \Delta(\Theta)} \siprod{P}{\eta \Llik_t + \lambda_t \Lest_t} + \Phi(P)\,.
\end{equation*}
\subsubsection{Proof of Lemma~\ref{lemma:FTRL_decomposition}}
\label{app:FTRL_decomposition}
As is usual in the analysis of the follow the regularized leader algorithm, we introduce the Fenchel conjugate $\Phi^{*}: \real^{\Theta} \rightarrow \real$ of the 
regularization function $\Phi = \DKL{\cdot}{Q_1^+}$, which takes values $\Phi^*(L) = \sup_{P \in \Delta(\Theta)} \ev{\siprod{P}{L} - \Phi(P)}$.
The Fenchel--Young inequality guarantees that for any $ P \in \Delta(\Theta) $ and $ L \in \real^{\Theta} $,
\begin{equation*}
	\siprod{P}{L} \leq  \Phi(P) + \Phi^{*}(L)\,.
\end{equation*}
We introduce the maximum likelihood estimator $ \theta_t = \argmin_{\theta \in \Theta} \Llik_t(\theta) $ and the function
\begin{equation*}
L(\cdot) = -\eta (\Llik_T(\cdot )- \Llik_T(\theta_T)) -
\lambda_T \Lest_T(\cdot )\,.
\end{equation*}
Since $ \lambda_T $ is never used by the algorithm, we can assume that $ \lambda_T
= \lambda_{T-1}$. The role of the maximum likelihood estimator is to ensure that the term $ \Llik_t(\theta) -
\Llik_t(\theta_t)$ is always non-negative. The Fenchel--Young inequality tells us that
\begin{equation*}
	\eta \pa{\Llik_T(\theta_T) - \biprod{P}{\Llik_T}} - \lambda_T \biprod{P}{\Lest_T} \leq \Phi(P) + \Phi^{*}\pa{- 
\eta(\Llik_T(\cdot ) -
	\Llik_T(\theta_T)) -
	\lambda_T \Lest_T(\cdot )}\,.
\end{equation*}
Noticing that $\biprod{P}{\Lest_T} = -  \sum_{t=1}^T \Delta(P,A_t)$ and rearranging the terms concludes the proof.

\subsubsection{Proof of Lemma~\ref{lemma:pre_telescoping_phi_s}}
\label{app:pre_telescoping_phi_s}
We start by rewriting the potential function in the form of the following telescopic sum:
\begin{align*}
	&\phantom{=}\frac{\Phi^{*}(- \eta (\Llik_T(\cdot )-\Llik_T(\theta_T)) - \lambda_T \Lest_T(\cdot ) )}{\lambda_T}\\
	&=\sum_{t=1}^{T}\frac{\Phi^{*}(- \eta (\Llik_t(\cdot )-\Llik_t(\theta_t)) - \lambda_t \Lest_t(\cdot ) )}{\lambda_t}
	-\frac{\Phi^{*}(- \eta (\Llik_{t-1}(\cdot )-\Llik_{t-1}(\theta_{t-1})) - \lambda_{t-1} \Lest_{t-1}(\cdot )
	)}{\lambda_{t-1}}.
\end{align*}
In the usual follow-the-regularized-leader analysis, we use the fact that $ \lambda \mapsto \frac{\phi^{*}(\lambda 
L)}{\lambda} $ is non-decreasing for any $ L \in \real^{\Theta} $. Here however, only one component of the linear loss is scaled
by $ \lambda_t $, and so the standard FTRL analysis fails. Crucially, because we introduced the maximum 
likelihood estimator $ \theta_t $, we
have that $ \Llik_t(\cdot ) - \Llik_t(\theta_t) \geq 0$. We can therefore use the following lemma that guarantees that
a scaled and shifted dual is monotonic.

\begin{lemma}
\label{lemma:increasing_dual}
Let $ \Phi \geq 0 $ and $ \Phi^{*} $ be a convex function and its dual as defined previously. For $ L_1, L_2 \in \real^{\Theta} $ such that $
L_1 \geq 0 $, the mapping $ \lambda \mapsto \frac{\Phi^{*}(-L_1 + \lambda L_2)}{\lambda} $ is
non-decreasing on $\real^{+*}$.
\end{lemma}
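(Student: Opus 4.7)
The plan is to mimic the classical FTRL fact that $\lambda \mapsto \Phi^*(\lambda L)/\lambda$ is non-decreasing on $\mathbb{R}^{+*}$, which rests on convexity of $\Phi^*$ together with the observation that $\Phi \geq 0$ implies $\Phi^*(0) \leq 0$. The only twist here is that the argument of $\Phi^*$ has an additional constant shift $-L_1$ that is \emph{not} rescaled by $\lambda$, so the na\"ive ``pull a factor of $\lambda/\lambda'$ out'' argument leaves a residual term that we have to show is non-positive.

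Fix $0 < \lambda \leq \lambda'$ and set $\alpha = \lambda/\lambda' \in (0,1]$. The key algebraic observation is the identity
\begin{equation*}
-L_1 + \lambda L_2 \;=\; \alpha\bigl(-L_1 + \lambda' L_2\bigr) + (1-\alpha)\bigl(-L_1\bigr),
\end{equation*}
which is the unique way to write $-L_1 + \lambda L_2$ as a convex combination of $-L_1 + \lambda' L_2$ and $-L_1$ (the coefficients of $L_2$ match by construction, and the coefficients of $L_1$ then sum to $-1$ automatically). Applying convexity of $\Phi^*$ and dividing by $\lambda = \alpha \lambda'$ yields
\begin{equation*}
\frac{\Phi^*(-L_1 + \lambda L_2)}{\lambda} \;\leq\; \frac{\Phi^*(-L_1 + \lambda' L_2)}{\lambda'} + \frac{1-\alpha}{\lambda}\,\Phi^*(-L_1).
\end{equation*}

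The crux is now to show $\Phi^*(-L_1) \leq 0$, so that the residual term is non-positive (since $1-\alpha \geq 0$). By the definition of the Fenchel dual,
\begin{equation*}
\Phi^*(-L_1) \;=\; \sup_{P \in \Delta(\Theta)} \Bigl\{-\siprod{P}{L_1} - \Phi(P)\Bigr\}.
\end{equation*}
Since $L_1 \geq 0$ pointwise and $P$ is a probability measure we have $\siprod{P}{L_1} \geq 0$, and by hypothesis $\Phi(P) \geq 0$, so the supremand is non-positive for every $P$. This gives $\Phi^*(-L_1) \leq 0$ and hence $\Phi^*(-L_1 + \lambda L_2)/\lambda \leq \Phi^*(-L_1 + \lambda' L_2)/\lambda'$, as claimed.

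The main obstacle is really just identifying the right convex combination: once one sees that $-L_1$ must appear as the second endpoint (so that the coefficient of $L_2$ is exactly $\alpha \lambda' = \lambda$), everything reduces to checking signs. Note how both hypotheses $L_1 \geq 0$ and $\Phi \geq 0$ are used exactly once, in the final step, and neither can be dropped without the statement failing for general $L_2$.
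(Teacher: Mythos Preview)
Your proof is correct. It differs from the paper's argument in an interesting way, so a brief comparison is worthwhile.

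The paper proceeds by unrolling the definition of $\Phi^*$: writing
\[
\frac{\Phi^*(-L_1+\lambda L_2)}{\lambda}
= \sup_{P\in\Delta(\Theta)}\Bigl\{\siprod{P}{L_2} - \frac{\siprod{P}{L_1}+\Phi(P)}{\lambda}\Bigr\},
\]
one observes that for each fixed $P$ the bracketed expression is non-decreasing in $\lambda$ (because $\siprod{P}{L_1}+\Phi(P)\ge 0$), and a supremum of non-decreasing functions is non-decreasing. This is a one-line argument once the definition is expanded, and it never explicitly appeals to convexity of $\Phi^*$.

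Your route instead treats $\Phi^*$ as a black-box convex function: you exhibit the convex combination $-L_1+\lambda L_2 = \alpha(-L_1+\lambda' L_2) + (1-\alpha)(-L_1)$, apply convexity, and then separately verify $\Phi^*(-L_1)\le 0$ from the hypotheses. This is slightly longer but has the virtue of being the natural generalization of the standard FTRL fact $\lambda\mapsto\Phi^*(\lambda L)/\lambda$ non-decreasing (which is exactly your argument with $L_1=0$). Both approaches use $L_1\ge 0$ and $\Phi\ge 0$ at the same point, just packaged differently: the paper bundles them into $\siprod{P}{L_1}+\Phi(P)\ge 0$ inside the sup, while you isolate them in the bound $\Phi^*(-L_1)\le 0$.
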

\begin{proof}
By definition, we have 
\begin{align*}
	\frac{\Phi^{*}(- L_1 + \lambda L_2)}{\lambda} &= \frac{\sup_{P \in \Delta(\Theta)} \siprod{P}{-L_1 + \lambda L_2} -
	\Phi(P)}{\lambda} \\
						      &=\sup_{P \in \Delta(\Theta)} \siprod{P}{L_2} -
						      \frac{\siprod{P}{L_1} + \Phi(P)}{\lambda}.
\end{align*}
For any $ P\in \Delta (\Theta) $, we have that $ \Phi(P) + \siprod{P}{L_1} \geq 0 $ and the term inside the supremum is
non-decreasing with
respect to lambda. Since the supremum of non-decreasing functions is also non-decreasing, this concludes the proof.
\end{proof}

Applying the previous lemma, we upper bound the previous sum by replacing each $ \lambda_t $ factor by $ \lambda_{t-1} 
$ (using the convention $\lambda_0 = 1/2$), and then we replace the maximum likelihood estimator $ \theta_t $ 
by $ \theta_0 $ inside $ \Phi^{*} $ to obtain
\begin{align*}
	&\phantom{=}\sum_{t=1}^{T}\frac{\Phi^{*}(- \eta (\Llik_t(\cdot )-\Llik_t(\theta_t)) - \lambda_t \Lest_t(\cdot ) )}{\lambda_t}
	-\frac{\Phi^{*}(- \eta (\Llik_{t-1}(\cdot )-\Llik_{t-1}(\theta_{t-1})) - \lambda_{t-1} \Lest_{t-1}(\cdot )
	)}{\lambda_{t-1}}\\
	&\leq \sum_{t=1}^{T}\frac{\Phi^{*}(- \eta (\Llik_t(\cdot )-\Llik_t(\theta_{t})) - \lambda_{t-1} \Lest_t(\cdot )
	)}{\lambda_{t-1}}
	-\frac{\Phi^{*}(- \eta (\Llik_{t-1}(\cdot )-\Llik_{t-1}(\theta_{t-1})) - \lambda_{t-1} \Lest_{t-1}(\cdot )
	)}{\lambda_{t-1}} \\
	&= \sum_{t=1}^{T}\frac{\Phi^{*}(- \eta (\Llik_t(\cdot )-\Llik_t(\theta_{0})) - \lambda_{t-1} \Lest_t(\cdot )
	)}{\lambda_{t-1}}
	-\frac{\Phi^{*}(- \eta (\Llik_{t-1}(\cdot )-\Llik_{t-1}(\theta_0)) - \lambda_{t-1} \Lest_{t-1}(\cdot )
	)}{\lambda_{t-1}}\\
	&+ \frac{\eta}{\lambda_{t-1}}(\Llik_t(\theta_t) - \Llik_t(\theta_0) + \Llik_{t-1}(\theta_0) -
	\Llik_{t-1}(\theta_{t-1})).
\end{align*}
It remains to bound the difference of the negative log likelihood of the true parameter and the maximum likelihood
estimator. This is done via the following result (whose proof we relegate to appendix~\ref{app:MLE_bound_complete}).
\begin{lemma}
\label{lemma:MLE_bound_complete}
For any $ t \geq 1 $, we have
\begin{equation}
\label{eq:MLE_bound_complete}
	0 \leq \EE{\Llik_t(\theta_0) - \Llik_t( \theta_t)} \leq \inf_{\rho} \left\{ 2 \rho t+ s \log \frac{ed(1 + 2/
	\rho)}{s} \right\} \leq 6 + s \log\frac{edt}{s}
\end{equation}
\end{lemma}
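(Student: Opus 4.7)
The lower bound $\EE{\Llik_t(\theta_0) - \Llik_t(\theta_t)} \geq 0$ is immediate, since $\theta_0 \in \Theta$ implies $\Llik_t(\theta_t) \leq \Llik_t(\theta_0)$ deterministically by definition of the MLE. For the upper bound, my plan is to combine a standard sub-Gaussian exponential-supermartingale argument with a covering of the sparse parameter space. First, expanding $\Llik_t(\theta_0) - \Llik_t(\theta)$ using $Y_s = \siprod{\theta_0}{A_s} + \epsilon_s$ gives the identity $\sum_{s=1}^t u_s \epsilon_s - \tfrac{1}{2} u_s^2$, where $u_s = \siprod{\theta - \theta_0}{A_s}$. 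For any \emph{fixed} $\theta$, the conditional $1$-sub-Gaussianity of $\epsilon_s$ together with the tower rule yields $\EE{\exp(\Llik_t(\theta_0) - \Llik_t(\theta))} \leq 1$.

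To promote this to a uniform-in-$\theta$ statement, I would build an $\rho$-net $N_\rho$ of $\Theta$ in $\ell_1$-norm. Since $\Theta$ is the union over coordinate subsets $S \subset [d]$, $|S|=s$, of the corresponding $s$-dimensional $\ell_1$-ball of radius $1$, a standard volumetric construction produces $|N_\rho| \leq \binom{d}{s}(1+2/\rho)^s \leq (ed(1+2/\rho)/s)^s$. A union bound on MGFs combined with Jensen's inequality applied to $\log$ of the max then gives
\[
\EE{\max_{\theta' \in N_\rho}(\Llik_t(\theta_0) - \Llik_t(\theta'))} \leq \log|N_\rho| \leq s\log \frac{ed(1+2/\rho)}{s}.
\]
Letting $\theta'_t \in N_\rho$ be the nearest net point to $\theta_t$ (so $\|\theta_t - \theta'_t\|_1 \leq \rho$), I decompose $\Llik_t(\theta_0) - \Llik_t(\theta_t) = [\Llik_t(\theta_0) - \Llik_t(\theta'_t)] + [\Llik_t(\theta'_t) - \Llik_t(\theta_t)]$, absorb the first summand into the covering bound above, and control the rounding error in closed form: expanding the quadratic yields $\Llik_t(\theta'_t) - \Llik_t(\theta_t) = \sum_s w_s\bigl(\tfrac{1}{2}(\siprod{\theta'_t + \theta_t - 2\theta_0}{A_s}) - \epsilon_s\bigr)$ with $w_s = \siprod{\theta'_t - \theta_t}{A_s}$. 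Since $|w_s| \leq \rho$, $\max_a \|a\|_\infty \leq 1$, and $\|\theta'_t\|_1,\|\theta_t\|_1,\|\theta_0\|_1 \leq 1$, using the bound $\EE{|\epsilon_s|} = O(1)$ for $1$-sub-Gaussian noise controls this difference in expectation by $O(\rho t)$. Adding the two pieces and optimizing $\rho$ (e.g.\ $\rho \approx 2/t$) then produces the intermediate bound $\inf_\rho\{2\rho t + s\log(ed(1+2/\rho)/s)\}$ and its clean form $6 + s\log(edt/s)$.

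The main obstacle is precisely the treatment of the rounding error between the random MLE $\theta_t$ and its grid approximation $\theta'_t$. A naive invocation of the Lipschitz-in-expectation property of $\EE{\Llik_t(\cdot)}$ from Lemma~\ref{lemma:lipschitzness} is insufficient, because $\theta'_t$ is itself random and data-dependent, so the ``point of evaluation'' is not decoupled from the randomness. The fix is to exploit the explicit quadratic structure of $\Llik_t$: after expansion, the rounding error becomes a sum of terms of the form $w_s \epsilon_s$ or $w_s \cdot O(1)$ with $|w_s| \leq \rho$, which can be bounded \emph{pathwise} by $\rho(1 + |\epsilon_s|)$ before taking expectations. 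This avoids any self-normalized or empirical-process machinery and cleanly separates the covering complexity $s\log(ed/s)$ from the discretization error $2\rho t$.
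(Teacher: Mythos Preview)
Your proposal is correct and follows essentially the same route as the paper: Jensen applied to the sub-Gaussian supermartingale to bound the log-likelihood gap over a finite net (the paper packages this as Lemma~\ref{lemma:MLE_bound_finite}), the same $\ell_1$-covering of the sparse ball (Lemma~\ref{lemma:covering_num}), and an $O(\rho t)$ control of the rounding error $\Llik_t(\theta'_t)-\Llik_t(\theta_t)$. The only difference is in that last step: the paper invokes the random-argument version of Lemma~\ref{lemma:lipschitzness} (constant $5/2$) after inserting the intermediate net-MLE, whereas your explicit pathwise expansion is a valid alternative that handles the data-dependence of $\theta'_t$ more transparently---just note that your expansion also picks up the term $\rho\sum_s\EE{|\epsilon_s|}$, so the leading constant is closer to $5/2$ than the $2$ you quote, which only affects the ``$6$'' in the final display and not the form of the bound.
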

Using this lemma, we can further bound the previously considered expression as the following telescopic sum:
\begin{align*}
 &\EE{\sum_{t=1}^{T}\frac{\eta}{\lambda_{t-1}}(\Llik_t(\theta_t) - \Llik_t(\theta_0) + \Llik_{t-1}(\theta_0) -
	\Llik_{t-1}(\theta_{t-1})) + \frac{\eta}{\lambda_T}(\Llik_T(\theta_0) - \Llik_T(\theta_T))}  \\	
 &\qquad= \EE{\sum_{t=1}^{T} \frac{\eta}{\lambda_{t-1}}(\Llik_t(\theta_t) - \Llik_t(\theta_0)) - \sum_{t=1}^{T}
 \frac{\eta}{\lambda_{t}}(\Llik_t(\theta_t) - \Llik_t(\theta_0))}\\
 &\qquad\leq \eta\cdot \sum_{t=1}^{T}\EE{(\Llik_t(\theta_0)- \Llik_t(\theta_t))} \left( \frac{1}{\lambda_{t}} -
	 \frac{1}{\lambda_{t-1}}
 \right)\\
&\qquad\leq \frac{\eta (6 + s \log \frac{edT}{s})}{\lambda_T}.
\end{align*}
Here, the first inequality comes from the non-negativity of $ \Llik_t(\theta_0) - \Llik_t(\theta_t) $ by definition of
$ \theta_t $ and the second one is from Lemma~\ref{lemma:MLE_bound_complete} just above and a telescoping argument. 
Finally we obtain the claim of Lemma~\ref{lemma:pre_telescoping_phi_s}.
\subsubsection{Controlling the losses separately}
The focus of this section is to understand how to control $ \Phi^{*}(-L) $ where $ L $ is either the negative-likelihood 
loss or the estimation-error loss. We start by analyzing the negative-likelihood loss. As was done in 
\citet*{Neu_P_S24}, we
will relate the negative-likelihood loss to the surrogate information gain.

For this analysis, we define the \emph{true information gain} as 
\begin{equation}
\label{eq:TIG}
	\IG_t(\pi) = \frac{1}{2} \sum_{a \in \A} \pi(a) \int_{\Theta}(\siprod{\theta - \theta_0}{a} )^2 \, \dd Q_t^{+}(\theta),
\end{equation}
and note that, by linearity reward function, the surrogate information gain is always smaller than the true 
information gain. This is stated formally below.
\begin{proposition}
\label{prop:SIG_TIG}
For any policy $ \pi \in \Delta(\A) $ and any $ t\geq 1 $ we have that 
\begin{equation}
\label{eq:SIG_TIG}
	\SIG_t(\pi) \leq  \IG_t(\pi)
\end{equation}
\end{proposition}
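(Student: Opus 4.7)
The plan is to reduce the claim to the standard variance-minimization identity, applied action by action. For each fixed $a \in \A$, consider the scalar random variable $Z_a = \siprod{\theta}{a}$ where $\theta$ is drawn according to $Q_t^+$. By linearity of the inner product, $\EEs{Z_a}{\theta \sim Q_t^+} = \siprod{\bar{\theta}(Q_t^+)}{a}$, so the inner integral defining $\SIG_t(\pi)$ is exactly $\mathrm{Var}_{\theta \sim Q_t^+}(Z_a)$, while the inner integral defining $\IG_t(\pi)$ is $\EEs{(Z_a - \siprod{\theta_0}{a})^2}{\theta \sim Q_t^+}$.

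Invoking the elementary fact that $c \mapsto \EE{(Z - c)^2}$ is uniquely minimized at $c = \EE{Z}$, with minimum value equal to $\mathrm{Var}(Z)$, I obtain the pointwise inequality
\begin{equation*}
\int_{\Theta}(\siprod{\theta - \bar{\theta}(Q_t^+)}{a})^2 \, \dd Q_t^+(\theta) \leq \int_{\Theta}(\siprod{\theta - \theta_0}{a})^2 \, \dd Q_t^+(\theta),
\end{equation*}
which holds for every single $a \in \A$. Averaging against the probabilities $\pi(a)$ and multiplying by $1/2$ yields the claimed inequality $\SIG_t(\pi) \leq \IG_t(\pi)$.

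There is no real obstacle in this proof: the statement is a direct consequence of the characterization of the mean as the minimizer of the mean squared error, together with linearity of $\theta \mapsto \siprod{\theta}{a}$. Nothing specific to the optimistic posterior update is needed beyond the fact that $\bar{\theta}(Q_t^+)$ is the mean parameter of $Q_t^+$, and the proof goes through unchanged for any distribution on $\Theta$ with finite second moment.
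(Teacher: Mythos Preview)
Your proof is correct and essentially identical to the paper's: the paper explicitly expands $(\siprod{\theta-\theta_0}{a})^2$ by inserting $\pm\,\bar{\theta}(Q_t^+)$, drops the nonnegative square $(\siprod{\bar{\theta}(Q_t^+)-\theta_0}{a})^2$, and notes that the cross term integrates to zero, which is exactly the bias--variance decomposition underlying your variance-minimization argument.
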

The proof is provided in Appendix~\ref{app:SIG_TIG}. This result can then be used to relate the surrogate and the true 
information gain to the negative-likelihood loss. This result and its proof are identical to the proof of Lemma~17 in 
\citet*{Neu_P_S24}.
\begin{lemma}
\label{lemma:likelihood_IG_bound}
Assume that the noise $ \epsilon_t $ is conditionnally 1-sub-Gaussian, then for any $ t \geq 1, \eta, \alpha \geq 0 $
such that $ \gamma =  \frac{\eta \alpha}{2} \left( 1 - \eta \alpha \right) > 0$, the following inequality 
holds
\begin{align}
\label{eq:likelihood_IG_bound}
\EE{ \log \int_{\Theta} \left( \frac{p(Y_t|\theta, A_t)}{p(Y_t|\theta_0, A_t)} \right)^{\eta \alpha} \, \mathrm{d}Q_t^{+}(\theta)}
&\leq -2\gamma(1 - 2 \gamma)\EE{\IG_t(\pi_t)}\\
&\leq -2\gamma(1 - 2 \gamma)\EE{\SIG_t(\pi_t)}.
\end{align}
In particular, the constant $ 2\gamma(1-2\gamma) $ can be maximized to the value $ \frac{3}{16} $ by the choice $ \eta
\alpha = \frac{1}{2} $.
	
\end{lemma}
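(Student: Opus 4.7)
The plan is to compute the conditional expectation of the likelihood ratio using the explicit Gaussian form of $p$, reduce the resulting bound to something quadratic in $\delta(\theta) := \langle \theta - \theta_0, A_t\rangle$, and then identify its integral with the information gain $\IG_t(\pi_t)$.

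First, I plug $p(y|\theta, a) = \exp(-(y - \langle \theta, a\rangle)^2/2)$ into the ratio. Writing $\epsilon_t = Y_t - \langle \theta_0, A_t\rangle$ and expanding the two squares gives
$$\left(\frac{p(Y_t|\theta, A_t)}{p(Y_t|\theta_0, A_t)}\right)^{\eta\alpha} = \exp\!\left(-\tfrac{\eta\alpha}{2}\delta(\theta)^2 + \eta\alpha\,\epsilon_t\delta(\theta)\right).$$
Since $Q_t^+$ and $\pi_t$ are $\F_{t-1}$-measurable, conditioning on $\F_{t-1}, A_t, \theta$ leaves only $\epsilon_t$ random. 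The $1$-sub-Gaussianity of $\epsilon_t$ yields $\EE{\exp(\eta\alpha\epsilon_t\delta(\theta))\mid \F_{t-1}, A_t, \theta} \leq \exp((\eta\alpha)^2\delta(\theta)^2/2)$, so the conditional expectation of the ratio raised to $\eta\alpha$ is at most $\exp(-\gamma\delta(\theta)^2)$ with $\gamma = \eta\alpha(1-\eta\alpha)/2$ as in the statement.

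Next, the tangent-line inequality $\log x \leq x - 1$ combined with Fubini gives
$$\EE{\log \int_\Theta \left(\frac{p(Y_t|\theta, A_t)}{p(Y_t|\theta_0, A_t)}\right)^{\eta\alpha} dQ_t^+(\theta)} \leq \EE{\int_\Theta \exp(-\gamma\delta(\theta)^2) \,dQ_t^+(\theta) - 1}.$$
Applying $\exp(-u) \leq 1 - u + u^2/2$ for $u \geq 0$, and using $|\delta(\theta)| \leq \|\theta - \theta_0\|_1 \|A_t\|_\infty \leq 2$ to bound $\delta(\theta)^4 \leq 4\delta(\theta)^2$, I collapse the fourth-power remainder into the quadratic term and obtain $-\gamma(1-2\gamma)\EE{\int_\Theta \delta(\theta)^2\,dQ_t^+(\theta)}$. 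Taking the outer expectation over $A_t \sim \pi_t$ and using the definition \eqref{eq:TIG} of $\IG_t$ gives the first inequality; the second is immediate from Proposition~\ref{prop:SIG_TIG}. For the optimal constant, set $v = \eta\alpha(1-\eta\alpha) \in (0, 1/4]$, so that $2\gamma(1-2\gamma) = v(1-v)$; this is strictly increasing on $(0, 1/4]$ and maximized at $v = 1/4$, i.e.\ $\eta\alpha = 1/2$, producing the claimed constant $3/16$.

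The main obstacle is precisely the step converting the Taylor remainder $\delta(\theta)^4$ back into a constant multiple of $\delta(\theta)^2$: without the $\ell_1$-bound on $\theta - \theta_0$ and the $\ell_\infty$-bound on $A_t$, the quadratic expansion of $\exp(-u)$ would leave an extra fourth-moment term that cannot in general be absorbed, and the clean quadratic form tying the bound directly to $\IG_t$ would be lost. It is at this point that the paper's structural boundedness assumptions enter the argument.
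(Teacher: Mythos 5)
Your proof is correct and follows essentially the same path as the paper's: reduce the likelihood ratio via conditional sub-Gaussianity to $\exp(-\gamma\delta(\theta)^2)$ with $\delta(\theta)=\siprod{\theta-\theta_0}{A_t}$, apply the second-order expansion of $\exp(-u)$ together with the boundedness $|\delta(\theta)|\leq 2$ to absorb the quartic remainder into the quadratic term, and identify the resulting integral with $\IG_t$. The only cosmetic difference is that you linearize the logarithm via $\log x\leq x-1$ at the outset rather than first pulling the conditional expectation inside the logarithm by Jensen's inequality as the paper does; both orderings lead to the same bound.
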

\begin{proof}
By the tower rule of expectation and Jensen's inequality applied to the logarithm, we have 
\begin{align*}
	&\EE{\log\int_\Theta \left( \frac{p(Y_t|\theta, A_t)}{p(Y_t|\theta_0, A_t)} \right)^{\eta \alpha}\mathrm{d}Q_t^{+}(\theta)}=
	\EE{\EEc{\log\int_\Theta \left( \frac{p(Y_t|\theta, A_t)}{p(Y_t|\theta_0, A_t)} \right)^{\eta \alpha}\mathrm{d}Q_t^{+}(\theta)}{\F_{t-1},
	A_t}}
	\\
	&\qquad\qquad \leq \EE{\log\EEc{\int_\Theta \left( \frac{p(Y_t|\theta, A_t)}{p(Y_t|\theta_0, A_t)} 
\right)^{\eta \alpha}\mathrm{d}Q_t^{+}(\theta)}{\F_{t-1},
	A_t}}\\ 
	&\qquad\qquad= \EE{\log \int_\Theta \EEc{\exp \left( - \eta \alpha \left( \frac{(Y_t - \siprod{\theta}{A_t})^2}{2} 
-
	\frac{(Y_t - \siprod{\theta_0}{A_t})^2}{2} \right) \right)}{\F_{t-1}, A_t}\mathrm{d}Q_t^{+}(\theta)}.
\end{align*}
Now, we fix some $ \theta \in \Theta $ and to simplify the notation, we let $ r_0 = \siprod{\theta_0}{A_t} $ and $ r=
\siprod{\theta}{A_t} $.
Using some elementary manipulations and the conditional sub-Gaussian tail behaviour of $ \epsilon_t $ and $ Y_t = r_0 + \epsilon_t $
which implies that for any $ (\F_{t-1},
A_t)$-measurable $\zeta_t
$, $ \EEc{\exp \left( Y_t \zeta_t \right)}{\F_{t-1}, A_t} = \exp(r_0 \zeta_t) \EEc{\exp \left( \epsilon_t \zeta_t
\right)}{\F_{t-1}, A_t} \leq \exp(r_0 \zeta_t) \exp\left(\frac{\zeta_t^2}{2}\right) $, we have
\begin{align*}
	&\EEc{\exp \left( - \eta \alpha \left(  \frac{(Y_t - r)^2}{2} - \frac{(Y_t - r_0)^2}{2}\right)\right)}{\F_{t-1},A_t}\\
	&\qquad\qquad= \EEc{\exp \left( - \frac{\eta \alpha}{2}   (2Y_t - r - r_0)(r_0 - r)  \right)}{\F_{t-1},A_t}\\
	&\qquad\qquad= \exp \left(\eta \alpha \frac{r_0^2-r^2}{2} \right) \EEc{\exp \left( \eta \alpha Y_t(r- 
r_0)\right)}{\F_{t-1}, A_t}\\
	&\qquad\qquad\leq \exp \left(\eta \alpha \frac{r_0^2-r^2}{2} \right) \cdot \exp \left( \eta \alpha r_0(r- r_0) 
\right) \exp
	\left(\frac{\eta^2 \alpha^2}{2} (r-r_0)^2) \right)\\
	&\qquad\qquad= \exp\left(-(r-r_0)^2 \cdot \frac{\eta \alpha}{2} \left( 1 - \eta \alpha \right)\right).
\end{align*}
Further, defining $ \gamma = \frac{\eta \alpha}{2} \left( 1 - \eta \alpha \right) $, we have
\begin{align*}
	\phantom{=}&\EEc{\exp \left( - \eta \alpha \left(  \frac{(Y_t - r)^2}{2} - \frac{(Y_t - r_0)^2}{2}\right)\right)}{\F_{t-1},A_t}\\
		   &\leq \exp(-\gamma(r- r_0)^2)\\
		   &\leq 1 - \gamma(r- r_0)^2 + \frac{\gamma^2}{2} (r- r_0)^4\\
		   &\leq 1 - \gamma(r- r_0)^2 + 2 \gamma^2(r- r_0)^2\\
		   &\leq 1 - \gamma(1- 2 \gamma) (r- r_0)^2.
\end{align*}
Here, we used the elementary inequality $ \exp(x) \leq 1 + x + \frac{x^2}{2} $ for $ x\leq 0 $ and then used $ |r - r_0|
\leq 2$.
Finally, using that $ \log x \leq x-1 $ for any $ x>0 $, and taking the integral over $ \Theta $, we get that
\begin{align*}
	\EE{\log\int_\Theta \left( \frac{p(Y_t|\theta, A_t)}{p(Y_t|\theta_0, A_t)} \right)^{\eta \alpha}\mathrm{d}Q_t^{+}(\theta)} &\leq -
	\gamma(1- 2 \gamma)
	\EE{\sum_{a \in \A} \pi_t(A)\int_\Theta (\siprod{\theta- \theta_0}{a})^2\mathrm{d}Q_t^{+}(\theta)}\\
	&= - 2\gamma (1 - 2\gamma)\EE{\IG_t(\pi_t)}.
\end{align*}
Rearranging and combining the result with Proposition~\ref{prop:SIG_TIG} yields the claim of the lemma.
\end{proof}

We now turn our focus to the estimation error loss and relate it to the surrogate regret through the following lemma, 
whose proof is a straightforward application of Lemma~\ref{lemma:data_dep_hoeffding}.
\begin{lemma}
\label{lemma:gap_bound}
For any $ t\geq 1 , \beta>1$, if $ \beta \lambda_{t-1} \leq 1$, we have
\begin{equation}
\label{eq:gap_bound}
\EE{\frac{1}{\beta \lambda_{t-1}}\log \int_{\Theta}\exp(\beta \lambda_{t-1}\Delta(A_t, \theta))\, \mathrm{d}Q_t^{+}(\theta)} \leq
\EE{2 \Deltah_t(\pi_t)}.
\end{equation}

\end{lemma}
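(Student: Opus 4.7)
The plan is to apply the data-dependent Hoeffding bound (Lemma~\ref{lemma:data_dep_hoeffding}) pointwise in $(\F_{t-1}, A_t)$, and then take the outer expectation.

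First, I would observe that $Q_t^{+}$ is $\F_{t-1}$-measurable and that $A_t$ is drawn from the $\F_{t-1}$-measurable policy $\pi_t$. Hence, conditionally on $\F_{t-1}$ and $A_t$, the only source of randomness in $\Delta(A_t, \theta)$ is $\theta \sim Q_t^{+}$. Moreover, since $\|\theta\|_1 \leq 1$, $\|a^{*}(\theta)\|_\infty \leq 1$ and $\|A_t\|_\infty \leq 1$, the gap $\Delta(A_t, \theta) = r^{*}(\theta) - \siprod{\theta}{A_t}$ is non-negative and bounded above by $2$. So, conditionally on $(\F_{t-1}, A_t)$, $\Delta(A_t, \cdot)$ is a bounded non-negative random variable on $(\Theta, Q_t^{+})$, which is precisely the setting addressed by Lemma~\ref{lemma:data_dep_hoeffding}.

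Second, I would invoke Lemma~\ref{lemma:data_dep_hoeffding} with the exponential parameter set to $\beta\lambda_{t-1}$. The hypothesis $\beta\lambda_{t-1} \leq 1$ places this parameter in the \emph{small}-regime in which the data-dependent Hoeffding bound yields a log-MGF controlled by a constant multiple of the mean, giving, conditionally on $(\F_{t-1}, A_t)$,
$$
\log \int_{\Theta}\exp\bigl(\beta \lambda_{t-1}\Delta(A_t, \theta)\bigr)\, \mathrm{d}Q_t^{+}(\theta) \leq 2\beta \lambda_{t-1} \int_{\Theta}\Delta(A_t, \theta)\, \mathrm{d}Q_t^{+}(\theta).
$$
Dividing both sides by $\beta\lambda_{t-1}$ gives a pointwise (in $A_t$) version of the inequality claimed in the statement.

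Finally, I would take the outer expectation. By the tower rule, and using $A_t \sim \pi_t$ conditional on $\F_{t-1}$,
$$
\EE{\frac{1}{\beta\lambda_{t-1}}\log\int_{\Theta}\exp\bigl(\beta \lambda_{t-1}\Delta(A_t,\theta)\bigr)\,\mathrm{d}Q_t^{+}(\theta)} \leq 2\,\EE{\sum_{a\in\A}\pi_t(a)\int_{\Theta}\Delta(a,\theta)\,\mathrm{d}Q_t^{+}(\theta)} = 2\,\EE{\Deltah_t(\pi_t)},
$$
where the final equality is the definition of $\Deltah_t(\pi_t)$. This is precisely the claim. The only non-routine ingredient is Lemma~\ref{lemma:data_dep_hoeffding} itself, which typically follows from an inequality of the form $e^y \leq 1 + y + c y^2$ in the small-$y$ regime together with $\log(1+z) \leq z$, with the boundedness $\Delta \in [0,2]$ and the hypothesis $\beta\lambda_{t-1} \leq 1$ absorbing the quadratic correction into the multiplicative factor of $2$. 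Given that the paper explicitly flags the present result as a ``straightforward application'' of that lemma, no further obstacle is expected.
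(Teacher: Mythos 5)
Your approach---conditioning on $(\F_{t-1}, A_t)$, invoking Lemma~\ref{lemma:data_dep_hoeffding} pointwise on $\theta \sim Q_t^+$, and finishing with the tower rule---is exactly the route the paper has in mind (the paper describes the proof as ``a straightforward application of Lemma~\ref{lemma:data_dep_hoeffding}''), and your handling of measurability and of the final expectation step is fine.

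However, the invocation of Lemma~\ref{lemma:data_dep_hoeffding} is not actually licensed by the hypothesis you cite. That lemma requires $\eta X \leq 1$ almost surely, which in your instantiation reads $\beta\lambda_{t-1}\Delta(A_t,\theta) \leq 1$. You correctly observe that $\Delta(A_t,\theta) \in [0,2]$, but combined only with $\beta\lambda_{t-1} \leq 1$ this gives $\beta\lambda_{t-1}\Delta(A_t,\theta) \leq 2$, not $\leq 1$. The factor-of-two bound $\EE{X} + \eta\EE{X^2} \leq 2\EE{X}$ in Lemma~\ref{lemma:data_dep_hoeffding} is precisely the step that needs $\eta X \leq 1$: it hinges on $\eta X^2 \leq X$ a.s., which for nonnegative $X$ is equivalent to $\eta X \leq 1$. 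Under $\eta X \leq 2$ one only gets a factor $3$, and the elementary inequality $e^x \leq 1 + x + x^2$ used in the lemma's proof also fails beyond $x \approx 1.79$. So ``$\beta\lambda_{t-1} \leq 1$ places the parameter in the small regime'' is not a substitute for verifying $\beta\lambda_{t-1}\,\sup_\theta\Delta(A_t,\theta) \leq 1$, which would require the stronger assumption $\beta\lambda_{t-1} \leq 1/2$. This imprecision is arguably inherited from the paper's own statement of the lemma, and your write-up reproduces the paper's intended argument; but as written, the step where you apply Lemma~\ref{lemma:data_dep_hoeffding} is not justified.
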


\subsubsection{Separation of the two losses: proof of Lemma~\ref{lemma:telescoping_phi_s}}
\label{app:telescoping_phi_s}
We make use of the fact that the Fenchel dual of $\Phi$ can be explicitly written as $ \Phi^{*}(L) = \log 
\int_{\Theta} \exp (L(\theta)) \, \mathrm{d}Q_1^{+}(\theta) $ . As a
result, we have
\begin{align*}	
\nonumber&\phantom{=} \EE{\sum_{t=1}^{T}\frac{\Phi^{*}(- \eta (\Llik_t(\cdot )-\Llik_t(\theta_{0})) - \lambda_{t-1} \Lest_t(\cdot )
	)}{\lambda_{t-1}}
	-\frac{\Phi^{*}(- \eta (\Llik_{t-1}(\cdot )-\Llik_{t-1}(\theta_0)) - \lambda_{t-1} \Lest_{t-1}(\cdot )
	)}{\lambda_{t-1}}}\\
	 &=\EE{\sum_{t=1}^{T} \frac{1}{\lambda_{t-1}}\log \frac{\int_{\Theta} \left(
	 \frac{p(Y_t|\theta,a_t)}{p(Y_t|\theta_0, A_t)} \right)^\eta \exp \left( \lambda_{t-1} \Delta(A_t, \theta)\right) \exp
\left(- \eta \Llik_{t-1}(\theta) - \lambda_{t-1} \Lest_{t-1}(\theta)\right)\, \mathrm{d}Q_1^{+} (\theta)}{\int_{\Theta}\exp
\left(- \eta \Llik_{t-1}(\theta) - \lambda_{t-1} \Lest_{t-1}(\theta)\right)\, \mathrm{d}Q_1^{+} (\theta)}}\\
	 &= \EE{ \sum_{t=1}^{T} \frac{1}{\lambda_{t-1}}\log \int_{\Theta} \left(
	 \frac{p(Y_t|\theta,A_t)}{p(Y_t|\theta_0, A_t)} \right)^\eta \exp \left( \lambda_{t-1} \Delta(A_t,
\theta)\right)\, \mathrm{d}Q_t^{+}(\theta)}\\
	 &\leq \EE{\sum_{t=1}^T \frac{1}{\alpha\lambda_{t-1}} \log \int_{\Theta}\left(
	 \frac{p(Y_t|\theta,A_t)}{p(Y_t|\theta_0, A_t)} \right)^{\eta \alpha} + \frac{1}{\beta
	 \lambda_{t-1}}\log\int_{\Theta}  \exp \left( \beta\lambda_{t-1} \Delta(A_t,
\theta)\right)\, \mathrm{d}Q_t^{+}(\theta)},
\end{align*}
where the last equality is by definition of the optimistic posterior and the last inequality follows from using 
Hölder's inequality with the two real numbers $ \alpha, \beta >1 $ that satisfy $\frac{1}{\alpha} + \frac{1}{\beta} =1
$.
Combining Lemma~\ref{lemma:likelihood_IG_bound} and Lemma~\ref{lemma:gap_bound} with the choice $ \alpha = \beta = 2 $,
the fact that $ \eta = \frac{1}{4} $
and the last inequality yields the
claim of the lemma. \qed
\subsection{Choice of the prior and comparator distribution: proof of Lemma~\ref{lemma:prior_comparator_bound}}
\label{app:prior_comparator_bound}
In order to construct the prior $Q_1^{+}$ and the comparator $P$ for the regret analysis, we need to take into account two 
criteria:  that $ \DKL{P}{Q_1^{+}} $ be controlled and that $|\siprod{P}{L} - L(\theta_0)| $ be small. The 
comparator will be a function of the unknown parameter $\theta_0$, and thus we denote it by $P_{\theta_0}$.

As for the 
prior, it should take into account the sparsity level of the unknown $\theta_0$, but should have no access to its 
support. We first design a distribution $\Pi$ over the set of all subsets of $[d] = \{1, \dots, d\}$, which have 
cardinality at most $s$. We choose the distribution such that: a) the probability assigned to each subset depends only 
on its cardinality; b) the probability assigned to the set of all subsets of size $k$ is proportional to $2^{-k}$, where 
$1 \leq k \leq s$. In other words, we prefer smaller subsets and have no preference over which indices in $[d]$ are 
included. The distribution that satisfies these requirements is\looseness=-1
\begin{equation}
\Pi(S) = \frac{2^{-|S|}}{\binom{d}{|S|}\sum_{k=1}^{s}2^{-k}}\,.\label{eqn:subset_dist}
\end{equation}

For $S = \emptyset$, we set $\Pi(S) = 0$. Doing so only complicates matters if the support of $\theta_0$ is empty 
(i.e., $\theta_0 = 0$). However, in this case, the reward function is $0$ everywhere, which means any algorithm would 
have $0$ regret. We therefore continue under the assumption that $\theta_0 \neq 0$. The most important property of this 
distribution, which we will use later, is that for any subset $S$ of cardinality $s$, $\log(1/\Pi(S)) \leq 
s\log(2ed/s)$. For each subset $S$, we define $Q_S$ to be the uniform distribution on $\Theta_S$. The prior is defined 
to be
\begin{equation*}
Q_1^{+} = \sum_{S \subset [d]: |S|\leq s}\Pi(S)Q_S\,.
\end{equation*}

%
%

As for the comparator distribution $ P_{\theta_0} $, we would ideally like to take a Dirac measure on $\theta_0$, but 
this would make the KL divergence appearing in the bound blow up. Thus, we pick a comparator $P_{\theta_0}$ which dilutes its mass 
around $\theta_0$. For any $\bar{\theta} \in \Theta$, with support $\bar{S}$, and any $\epsilon \in (0,1)$, we define 
the set $(1-\epsilon)\bar{\theta} +
 \epsilon\Theta_{\bar{S}} = \{(1-\epsilon)\bar{\theta} + \epsilon\theta^{\prime}: \theta^{\prime} \in \Theta_{\bar{S}}\} \subset
 \Theta_{\bar{S}}$. We choose $P_{\theta_0}$ to be the uniform distribution on $(1-\epsilon)\theta_0 + \epsilon\Theta_{S_0}$, where $S_0$ is the support of $\theta_0$. 
We now bound $\Phi(P_{\theta_0}) = \DKL{P_{\theta_0}}{Q_1^{+}}$ for this choice of $P_{\theta_0}$ in the following lemma, from which the claim 
of Lemma~\ref{lemma:prior_comparator_bound} then directly follows.

\begin{lemma}
For any $\bar{\theta} \in \Theta$, let $\bar{S}$ denote its support, and let $|\bar{S}| = s$. If, for $\epsilon \in (0,1)$,
$P_{\bar{\theta}} = \mathcal{U}((1-\epsilon)\bar{\theta} + \epsilon\Theta_{\bar{S}})$ and $Q_1^+ = \sum_{S \subset [d]: |S|=s}\Pi(S)Q_S$,
then $\DKL{P_{\bar{\theta}}}{Q_1^+} \leq s\log\frac{2ed}{\epsilon s}$.\label{lem:spike_kl}
\end{lemma}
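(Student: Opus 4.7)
The plan is to compute $\DKL{P_{\bar{\theta}}}{Q_1^+}$ explicitly by first identifying a tractable expression for the Radon-Nikodym derivative $dP_{\bar{\theta}}/dQ_1^+$, and then plugging in the uniform-on-dilated-simplex form of $P_{\bar{\theta}}$.

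The crucial first observation is that, although $Q_1^+$ is a mixture over all subsets $S$ of size at most $s$, on the support of $P_{\bar{\theta}}$ (which lies in $\Theta_{\bar{S}}$ with $|\bar{S}|=s$) only the component $Q_{\bar{S}}$ contributes in an absolutely continuous sense. Indeed, for any $S$ with $|S|\le s$ and $S\neq\bar{S}$, the intersection $\Theta_S\cap\Theta_{\bar{S}} = \Theta_{S\cap\bar{S}}$ has affine dimension $|S\cap\bar{S}| \leq s-1$, which is strictly smaller than the dimension $s$ of $\Theta_{\bar{S}}$. Since $Q_S$ is the uniform distribution on $\Theta_S$ and $P_{\bar{\theta}}$ is uniform on a set of positive $s$-dimensional volume inside $\Theta_{\bar{S}}$, all the terms $Q_S$ for $S\neq\bar{S}$ are singular with respect to $P_{\bar{\theta}}$ and can be dropped. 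Consequently, on the support of $P_{\bar{\theta}}$, we have the pointwise identity
\begin{equation*}
\frac{dP_{\bar{\theta}}}{dQ_1^+}(\theta) \;=\; \frac{1}{\Pi(\bar{S})}\cdot\frac{dP_{\bar{\theta}}}{dQ_{\bar{S}}}(\theta).
\end{equation*}

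Next I compute the second factor. The set $(1-\epsilon)\bar{\theta}+\epsilon\Theta_{\bar{S}}$ is contained in $\Theta_{\bar{S}}$ by convexity, and is obtained from $\Theta_{\bar{S}}$ by an affine contraction with scaling factor $\epsilon$ in each of the $s$ coordinates of $\bar{S}$. Therefore its $s$-dimensional volume is $\epsilon^s\cdot\mathrm{vol}(\Theta_{\bar{S}})$, and the uniform density of $P_{\bar{\theta}}$ relative to $Q_{\bar{S}}$ is the constant $\epsilon^{-s}$ on this set. Plugging in,
\begin{equation*}
\DKL{P_{\bar{\theta}}}{Q_1^+} \;=\; \log\frac{1}{\Pi(\bar{S})} + s\log\frac{1}{\epsilon}.
\end{equation*}

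Finally, I bound $\log(1/\Pi(\bar{S}))$ using the definition of $\Pi$ in \eqref{eqn:subset_dist}. For $|\bar{S}|=s$,
\begin{equation*}
\frac{1}{\Pi(\bar{S})} \;=\; 2^{s}\binom{d}{s}\sum_{k=1}^{s}2^{-k} \;\leq\; 2^{s}\binom{d}{s} \;\leq\; \left(\frac{2ed}{s}\right)^{s},
\end{equation*}
where the first inequality uses $\sum_{k=1}^{s}2^{-k}<1$ and the second uses the standard bound $\binom{d}{s}\le(ed/s)^s$. Taking logarithms and summing with $s\log(1/\epsilon)$ yields the claimed bound $s\log(2ed/(\epsilon s))$.

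The only non-routine step is the first one: one must carefully argue that the mixture components $Q_S$ for $S\neq\bar{S}$ contribute nothing on the support of $P_{\bar{\theta}}$. This relies on $\bar{S}$ being maximal (i.e., $|\bar{S}|=s$), so that every other $\Theta_S$ with $|S|\le s$ intersects $\Theta_{\bar{S}}$ in a set of strictly smaller affine dimension. Everything after that is just a one-line volume computation and a textbook bound on $\binom{d}{s}$.
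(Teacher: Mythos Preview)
Your proof is correct and follows essentially the same approach as the paper's: isolate the component $Q_{\bar{S}}$, compute the density ratio $\epsilon^{-s}$ from the volume scaling, and bound $\log(1/\Pi(\bar{S}))\le s\log(2ed/s)$. The one cosmetic difference is that the paper simply uses the monotonicity inequality $\sum_S\Pi(S)\,\mathrm{d}Q_S\ge\Pi(\bar{S})\,\mathrm{d}Q_{\bar{S}}$ to obtain $\DKL{P_{\bar\theta}}{Q_1^+}\le \log(1/\Pi(\bar{S}))+s\log(1/\epsilon)$ directly, whereas you argue via mutual singularity that this is in fact an equality $P_{\bar\theta}$-a.e.; both routes land on the same final bound.
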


\begin{proof}
We notice that $(1-\epsilon)\bar{\theta} + \epsilon\Theta_{\bar{S}}$ is an $s$-dimensional $\ell_1$-ball of radius $\epsilon$, which is contained in $\Theta_{\bar{S}}$. Therefore, on the support of $P_{\bar{\theta}}$, $\frac{\mathrm{d}P_{\bar{\theta}}}{\mathrm{d}Q_{\bar{S}}}$ is equal to the ratio of the volumes of a unit $\ell_1$-ball and an $\ell_1$-ball of radius $\epsilon$, which is $(1/\epsilon)^s$. Thus,
\begin{equation*}
\DKL{P_{\bar{\theta}}}{Q_1^+} = \int\log\frac{\mathrm{d}P_{\bar{\theta}}}{\sum_{S}\Pi(S)\mathrm{d}Q_{S}}\mathrm{d}P_{\bar{\theta}} \leq \int\log\frac{\mathrm{d}P_{\bar{\theta}}}{\Pi(\bar{S})\mathrm{d}Q_{\bar{S}}}\mathrm{d}P_{\bar{\theta}} \leq s\log\frac{1}{\epsilon} + \log\frac{1}{\Pi(\bar{S})}\,.
\end{equation*}

Using the definition of $\Pi$ and the bound $\binom{d}{s} \leq (\frac{ed}{s})^s$ on the binomial coefficient, we have
\begin{equation*}
\log\frac{1}{\Pi(\bar{S})} = \log\binom{d}{s} + s\log(2) + \log\sum_{k=1}^{s}2^{-k} \leq s\log\frac{2ed}{s}\,.
\end{equation*}

Combining everything, we obtain
\begin{equation}
\DKL{P_{\bar{\theta}}}{Q_1^+} \leq s\log\frac{1}{\epsilon} + s\log\frac{2ed}{s} = s\log\frac{2ed}{\epsilon s}\, ,
\end{equation}
as advertised.
\end{proof}

\section{Proof of the history-dependent part of Theorem~\ref{thm:Optimistic_posterior_RB}}
\label{app:history_dep_Optimistic_posterior}
Following the original analysis, we
arrive again at \eqref{eq:FTRL_decomposition}.
\begin{equation*}
	 \sum_{t=1}^{T}\Delta(A_t, P) \leq \frac{\Phi(P)}{\lambda_T} + \frac{\Phi^{*}(- \eta\Llik_T(\cdot ) + \eta\Llik_T(\theta_T) -
	\lambda_T \Lest_T(\cdot ))}{\lambda_T} + \frac{\eta}{\lambda_T}(\siprod{P}{\Llik_T} - \Llik_T(\theta_T)),
\end{equation*}
where $ P\in \Delta(\Theta) $ can be any comparator distribution.
Lemma~\ref{lemma:prior_comparator_bound} is still valid and we can chose the same prior as before. We can still choose a
comparator distribution supported on an $ \epsilon $-ball around $ \theta_0 $. However, because $ \lambda_t $ depends on the history, we can no longer upper bound $ \EE{\frac{|P\cdot
\Llik_T - \Llik_T(\theta_0)|}{\lambda_{T-1}}} $ by $ \EE{\frac{2T \epsilon}{\lambda_{T}}} $.
Using Lemma~\ref{lemma:lipschitzness}, we still have that $ \Lest_T(\cdot ) $ is $ 2T $-Lipschitz and $
\EE{\Llik_T(\cdot)} $ is $ 2T $-Lipschitz. Hence, 
\begin{equation*}
	\EE{\frac{|P\cdot \Llik_T - \Llik_T(\theta_0)|}{\lambda_{T-1}}} \leq  2T \epsilon C_{2,T}, \quad \text{and} \quad \sum_{t=1}^T|\Delta(\theta_0,
	a_t) - \Delta(P, a_t) | \leq 2T \epsilon,
\end{equation*}
where we used $ C_{2,T} $, a deterministic upper bound on $ \frac{1}{\lambda_{T-1}} $.
Exactly the same telescoping of $ \Phi^{*} $ can be done, however because the learning rate is history-dependent, the difference between
the negative log likelihood of $ \theta_0 $ and $ \theta_t $ must be treated with more care. We have the following lemma

\begin{lemma}
\label{lemma:MLE_bound_lambda}
Let $ C_{1,T} $ be a deterministic upper bound on $\left( \frac{1}{\lambda_{t+1}} - \frac{1}{\lambda_t} \right)$ that holds for all $
t< T $, then
\begin{align}
\label{eq:MLE_bound_lambda}
 \nonumber&\phantom{=}\EE{\sum_{t=1}^{T}\frac{\eta}{\lambda_{t-1}}(\Llik_t(\theta_t) - \Llik_t(\theta_0) + \Llik_{t-1}(\theta_0) -
	\Llik_{t-1}(\theta_{t-1}))+ \frac{\eta}{\lambda_T}(\Llik_T(\theta_0) - \Llik_T(\theta_T))}\\
	  &\leq \EE{\frac{\eta (15 + 3s \log \frac{2e^2 d T^2 C_{1,T} ^2}{s})}{2 \lambda_{T-1}}}.
\end{align}
\end{lemma}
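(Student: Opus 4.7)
The starting point is the same Abel summation used in the history-independent case (Lemma~\ref{lemma:pre_telescoping_phi_s}): the quantity on the left-hand side equals
\begin{equation*}
\EE{\sum_{t=1}^T \eta f_t \left(\frac{1}{\lambda_t} - \frac{1}{\lambda_{t-1}}\right)},
\end{equation*}
where $f_t := \Llik_t(\theta_0) - \Llik_t(\theta_t) \geq 0$ and $\lambda_T = \lambda_{T-1}$. This rewriting is purely algebraic and does not depend on $\lambda_t$ being deterministic. The increments are nonnegative, deterministically bounded by $C_{1,T}$, and telescope to $\frac{1}{\lambda_{T-1}} - \frac{1}{\lambda_0} \leq \frac{1}{\lambda_{T-1}}$.

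In the history-independent case, the deterministic increments could be pulled out of the expectation and, since the bound from Lemma~\ref{lemma:MLE_bound_complete} is non-decreasing in $t$, combined via a direct telescoping to obtain $\tfrac{\eta(6 + s\log(edT/s))}{\lambda_T}$. With history-dependent $\lambda_t$, both $f_t$ and $1/\lambda_t - 1/\lambda_{t-1}$ are $\F_t$-measurable and correlated, so this factorization fails. My plan is therefore to upgrade Lemma~\ref{lemma:MLE_bound_complete} from an expectation bound to a high-probability bound, and to apply a union bound over $t \in [T]$ to get uniform control of $\sup_{t\leq T} f_t$. Inspecting the proof of Lemma~\ref{lemma:MLE_bound_complete} reveals an MGF bound of the form $\EE{\exp(\rho f_t)} \leq \exp\bpa{2 \rho t + s\log\frac{ed(1+2/\rho)}{s}}$ for suitable $\rho$; combining it with Markov's inequality and a union bound yields, for $\delta \sim 1/(T^2 C_{1,T}^2)$ and $\rho \sim 1/T$, a deterministic uniform bound $\sup_{t \leq T} f_t \leq B$ with $B = O\bpa{s \log\tfrac{dT^2 C_{1,T}^2}{s}}$ on an event of probability at least $1 - 1/(TC_{1,T}^2)$.

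Splitting the expectation along this event then closes the proof. On the favorable event, bounding $f_t \leq B$ uniformly and using $\sum_t (1/\lambda_t - 1/\lambda_{t-1}) \leq 1/\lambda_{T-1}$ gives $\sum_t \eta f_t (1/\lambda_t - 1/\lambda_{t-1}) \leq \eta B/\lambda_{T-1}$. On the complementary event, the deterministic bounds $f_t \leq 2T$ from the $2T$-Lipschitzness of $\Llik_t$ (Lemma~\ref{lemma:lipschitzness}) and $\sum_t(1/\lambda_t - 1/\lambda_{t-1}) \leq T C_{1,T}$ show the tail contribution is at most $(1/(TC_{1,T}^2)) \cdot \eta \cdot 2T \cdot TC_{1,T} = O(\eta/C_{1,T})$, a constant that can be absorbed into the constants of the stated bound. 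The main obstacle is algebraic: tuning $\rho$ and $\delta$ so that the precise constants $\tfrac{15}{2}$ and $\tfrac{3s}{2}$ of the statement emerge, rather than just the correct asymptotic order. The conceptual step, which is to replace the expectation control of $f_t$ by a uniform-in-$t$ tail control via Markov on the MGF already established in Lemma~\ref{lemma:MLE_bound_complete}, is standard but must be executed carefully to avoid accruing superfluous $\log T$ factors.
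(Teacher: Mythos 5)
Your first step (Abel summation, nonnegative increments, telescoping) and your high-level plan (upgrade the MLE gap control from an expectation bound to a tail bound, then split the expectation on a good/bad event) are both in the spirit of the paper's proof. The paper also uses the supermartingale tail bound (Lemma~\ref{lemma:MLE_bound_finite_hp}) together with a covering argument and a threshold split. Two points differ, and one of them is where your argument has a genuine gap.

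First, a minor inefficiency: you propose a union bound over $t \in [T]$, but Lemma~\ref{lemma:MLE_bound_finite_hp} is already uniform in $t$ (it is Ville's inequality applied to the supermartingale of Lemma~\ref{lemma:sub_gaussian_martingale}), so the only union bound needed is over the finite cover $\Theta'$. The per-$t$ union bound you describe would cost an extra $\log T$ that you correctly worry about; the paper sidesteps it.

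Second, and this is the real gap: on the complementary event you bound the tail contribution by $\PP{E^c}\cdot \sup(X)\cdot \sup(f_t)\cdot T$, and claim $f_t = \Llik_t(\theta_0)-\Llik_t(\theta_t)\le 2T$ deterministically via the $2T$-Lipschitzness of $\Llik_t$. But Lemma~\ref{lemma:lipschitzness} only gives $2T$-Lipschitzness of $\theta \mapsto \EE{\Llik_t(\theta)}$; the realized $\Llik_t$ depends on the unbounded sub-Gaussian $Y_s$ and is \emph{not} deterministically Lipschitz, so $f_t$ has no almost-sure bound. There is also an arithmetic slip: with $\PP{E^c} \le 1/(T C_{1,T}^2)$, the product $\frac{1}{T C_{1,T}^2}\cdot \eta \cdot 2T \cdot T C_{1,T} = 2\eta T / C_{1,T}$, which is $O(\eta T / C_{1,T})$, not $O(\eta/C_{1,T})$; since $C_{1,T}$ is typically $O(1)$ or smaller, this term is not absorbable. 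The paper's Lemma~\ref{lemma:MLE_bound_complete_dependent} avoids both issues by never using a deterministic bound on $f_t$: it pulls out $X \le B$ and controls $\EE{Y\,\mathbf{1}_{\{Y>\log(N/\delta)\}}}$ directly via the tail integral $\int \PP{Y\ge \epsilon}\,d\epsilon \le \delta(\log(N/\delta)+1)$, using the time-uniform Ville bound inside the integral. Each round then contributes a $(\text{const})\cdot\EE{X_t + \tfrac{1}{T}}$ term, and the $\sum_t 1/T$ sums to a constant. To repair your proof you would need to replace the crude $\PP\cdot\max$ estimate by this tail-integral bound (or some other control of $\EE{f_t\,\mathbf{1}_{E^c}}$ that does not appeal to an almost-sure bound on $f_t$).
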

A complete proof of that result can be found in appendix~\ref{app:MLE_bound_lambda}.

Finally, as was the case in the history independent version the telescoping sum can be handled by looking at the
explicit formula for $ \Phi^{*} $ and Lemma~\ref{lemma:telescoping_phi_s} still holds. Applying
Lemma~\ref{lemma:telescoping_phi_s} and setting $
\epsilon=\frac{1}{TC_{2,T}} $ yields the claim
of the theorem.

\section{Proof of Theorem~\ref{thm:instance_dependent_regret}}
\label{app:instance_dependent_regret}
We turn our attention to data-dependent bounds (that will scale with the cumulative information ratio rather than the
time horizon). Combining the second part of Theorem~\ref{thm:Optimistic_posterior_RB} with
Lemma~\ref{lemma:regret_GIR_bound} and the choice $ \lambda = \frac{64}{3} \lambda_{t-1} $, we have that for any non-increasing sequence of learning rates $ \lambda_t $ satisfying
$ \lambda_0 \leq \frac{1}{2} $, the following holds
\begin{equation}
\label{eq:RB_data_dep}
	R_T \leq  \EE{\frac{C_T}{\lambda_{T-1}}+ \min \left( \sum_{t=1}^{T}\frac{32}{3}\lambda_{t-1} \SIRs_t(\pi_t), \frac{16}{3}
	c_3^{*} \sqrt{3 \lambda_{t-1} \SIRc_t(\pi_t)} \right)},
\end{equation}
where $ C_T = 2 + s \log \frac{4e^3d^2 T^3 C_{1,T}^2 C_{2,T}}{s^2} $ and $ C_{1,T}$, respectively $ C_{2,T} $ are deterministic upper
bounds on $ \frac{1}{\lambda_t} - \frac{1}{\lambda_{t-1}} $, respectively $ \frac{1}{\lambda_{T-1}} $.

We let $ \lambdas_t = \sqrt{\frac{s}{2d+ \sum_{s=1}^{t} \SIRs_s(\pi_s)}} $ and $ \lambdac_t = \left(
\frac{s}{ \frac{3 \sqrt{6} s}{ \sqrt{C_{\min}}} + \sum_{s=1}^{t} \sqrt{\SIRc_s(\pi_s)}} \right)^\frac{2}{3} $, and verify that $ \lambda_t =
\max(\lambdas_t, \lambdac_t) $
is decreasing and always smaller than $ \frac{1}{2} $ . We also verify that $ C_{1,T} =  C_{2,T} = \sqrt{\frac{dT}{s}} $
are valid upper bounds. As a result, we have the following upper bound
\begin{equation}
\label{eq:bound_CT}
	C_T = 2 + s \log \frac{4e^3d^2 T^3 C_{1,T}^2 C_{2,T}}{s^2} \leq 2 + s \log 4 e^3 T^{4.5} \left( \frac{d}{s}
	\right)^{3.5} \leq 2 + 5s \log(\frac{edT}{s}).
\end{equation}
We now focus on bounding the sum containing the information ratios.
Applying Lemma~\ref{lemma:IR_bound}, we obtain that for all $ t\geq 1 $, $ \SIRs_t(\pi_t) \leq 2d $ and for any $ T\geq 1 $
\begin{align*}
	\sum_{t=1}^{T} \lambdas_{t-1} \SIRs_t(\pi) &= \sqrt{s} \sum_{t=1}^{T} \frac{\SIRs_t(\pi_t)}{\sqrt{2d +
	\sum_{s=1}^{t-1} \SIRs_s(\pi_s)}}\\
	&\leq  \sqrt{s} \sum_{t=1}^{T} \frac{\SIRs_t(\pi_t)}{
	\sqrt{\sum_{s=1}^{t} \SIRs_s(\pi_s)}}\\
&\leq 2 \sqrt{s \sum_{t=1}^{T} \SIRs_t(\pi_t)}\\
&\leq 2\sqrt{s \left( 2d + \sum_{t=1}^{T-1} \SIRs_t(\pi_t) \right)},
\end{align*}
where we applied Lemma~\ref{lemma:implicit_RB} with the function $ f(x)= \frac{1}{\sqrt{x}} $ and $ a_i = \SIRs_{i}(\pi_i) $ to
get the second inequality. This can be seen as a generalization of the usual $ \sum_{t=1}^{T} \frac{1}{\sqrt{t}} \leq
2\sqrt{T} $ inequality. We now define $ \Rs_T = \sqrt{s \left( 2d + \sum_{t=1}^{T-1} \SIRs_t(\pi_t) \right)} $, the
data-dependent regret rate associated to the 2-surrogate-information ratio.

We now turn our attention to the 3-information ratio. Applying Lemma~\ref{lemma:IR_bound} we obtain that for all $ t\geq
1$, $ \SIRc_t(\pi_t) \leq 54\frac{s}{C_{\min}} \leq 54\frac{s^2}{C_{\min}} $ and for any $ T\geq 1 $
\begin{align*}
	\sum_{t=1}^{T} \sqrt{\lambdac_{t-1} \SIRc_t(\pi_t)} &=   s^\frac{1}{3} \sum_{t=1}^{T}
	\frac{\sqrt{\SIRc_t(\pi_t)}}{\left( \frac{3 \sqrt{6}s}{\sqrt{C_{min}}} + \sum_{s=1}^{t-1} \sqrt{\SIRc_s(\pi_s)}
	\right)^\frac{1}{3}}\\
&\leq  s^\frac{1}{3} \sum_{t=1}^{T}
	\frac{\sqrt{\SIRc_t(\pi_t)}}{\left(  \sum_{s=1}^{t} \sqrt{\SIRc_s(\pi_s)}
	\right)^\frac{1}{3}}\\
&\leq  \frac{3}{2}s^\frac{1}{3}  \left(  \sum_{t=1}^{T} \sqrt{\SIRc_t{(\pi_t)}}\right)^\frac{2}{3}\\
& \leq   \frac{3}{2} s^ \frac{1}{3}  \left( \frac{3 \sqrt{6}s}{\sqrt{C_{min}}} + \sum_{t=1}^{T-1} \sqrt{\SIRc_t(\pi_t)}
\right),
\end{align*}
where we applied Lemma~\ref{lemma:implicit_RB} with the function $ f(x) = \frac{1}{x^{\frac{1}{3}}} $ and $ a_i =
\sqrt{\SIRc_{i}(\pi_i)} $ to get the second inequality. This can be seen as a generalization of the usual $
\sum_{t=1}^{T} \frac{1}{t^\frac{1}{3}} \leq \frac{3}{2} T^{\frac{2}{3}} $. We now define $ \Rc_T  = s^\frac{1}{3}
\left( 
\frac{3 \sqrt{6}s}{\sqrt{C_{min}}} + \sum_{t=1}^{T-1} \sqrt{\SIRc_t(\pi_t)} \right)^\frac{2}{3}$, the data-dependent regret rate
associated to the 3-surrogate-information ratio. We now consider the last time that the learning rates $ \lambdac_t $
and $ \lambdas_t $ have been used. More specifically, we denote $ T_2 = \max \{ t \leq T, \lambdas_{t-1} \geq \lambdac_{t-1}  \}  $, and $ T_3 = \max \{ t \leq T, \lambdac_{t-1} \geq
\lambdas_{t-1} \}
$.
Coming back to the bound of Equation~\ref{eq:RB_data_dep} and using the definition $ \lambda_t =\max(\lambdas_t,
\lambdac_t) ) $, the following bound holds
\begin{align*}
	&\phantom{=}R_T\\
	&\leq \EE{\frac{C_T}{\lambda_{T-1}} + \sum_{t=1}^{T} \min\left(\frac{32}{3} \lambda_{t-1}\SIRs_t(\pi_t), 
					  \frac{16}{3}c_3^{*} \sqrt{3\lambda_{t-1}\SIRc_t(\pi_t)}\right)}\\
&\leq\EE{ C_T \min \left( \frac{1}{\lambdas_{T-1}}, \frac{1}{\lambdac_{T-1}} \right) + \sum_{t=1}^{T}
	\min\left(\frac{32}{3}
	\max(\lambdas_{t-1}, \lambdac_{t-1}) \SIRs_t(\pi_t), \frac{16}{3}
					  c_3^{*} \sqrt{3\max(\lambdas_{t-1}, \lambdac_{t-1} )\SIRc_t(\pi_t)}\right)}.
\end{align*}

We can now separate the sum obtained at the last line based on which learning rate was used at time t.
\begin{align*}
	&\phantom{=}\sum_{t=1}^{T}
	\min\left(\frac{32}{3}
	\max(\lambdas_{t-1}, \lambdac_{t-1}) \SIRs_t(\pi_t), \frac{16}{3}
					  c_3^{*} \sqrt{3\max(\lambdas_{t-1}, \lambdac_{t-1} )\SIRc_t(\pi_t)}\right) \\
	&\leq \sum_{\lambdas_{t-1} \geq \lambdac_{t-1}} \frac{32}{3} \lambdas_{t-1} \SIRs_t(\pi_t) +
	\sum_{\lambdac_{t-1} \geq \lambdas_{t-1}}
	\frac{16}{3} c_3^{*} \sqrt{3 \lambdac_{t-1} \SIRc_t(\pi_t)}\\ 
	&\leq \sum_{t=1}^{T_2} \frac{32}{3} \lambdas_{t-1} \SIRs_t(\pi_t) + \sum_{t=1}^{T_3}
	\frac{16}{3} c_3^{*} \sqrt{3 \lambdac_{t-1} \SIRc_t(\pi_t)}.
\end{align*}

We further
bound $ \sum_{t=1}^{T_2} \frac{32}{3} \lambdas_{t-1} \SIRs_{t}(\pi_t) \leq  \frac{64}{3} \Rs_{T_2} $  and $
\sum_{t=1}^{T_3} \frac{16}{3} c_3^{*} \sqrt{3\lambdac_{t-1} \SIRc_t(\pi_t)} \leq \frac{16 }{3} \Rc_{T_3} $
(Using the explicit value $ c_3^{*} = \frac{2}{3^\frac{3}{2}} $).

The crucial observation is that which of $ \lambdac_T $ or $ \lambdas_T $ is bigger will determine whether $ \Rs_T $ or
$ \Rc_T $ is the term of leading order (up to some constants). More specifically,
Let $ T $ be such that $ \lambdas_{T-1} \geq \lambdac_{T-1} $ which means that $\sqrt{\frac{s}{2d +
 \sum_{t=1}^{T-1} \SIRs_t(\pi_t)}} \geq  \left(\frac{s}{\frac{3 \sqrt{6}s}{\sqrt{C_{\min}}} + \sum_{t=1}^{T-1}
\sqrt{\SIRc_t(\pi_t)}}\right)^\frac{2}{3}$. Rearranging,
this implies that $ \sqrt{s\big( 2d + \sum_{s=1}^{T-1} \SIRs_t(\pi_t) \big)}  \leq s^\frac{2}{3} \left(
	\frac{3 \sqrt{6}s}{\sqrt{C_{\min}}} + \sum_{t=1}^{T-1}\sqrt{\SIRc_t(\pi_t)}\right)^ \frac{2}{3} $,
which means that $ \Rs_{T} \leq \Rc_{T} $.
Following the exact same steps, we also have that $ \lambdac_{T-1} \geq \lambdas_{T-1} $ implies that $ \Rc_{T} \leq
 \Rs_{T} $.
We apply this to the time $ T_2 $ in which $ \lambdas_{T_2-1} \geq \lambdac_{T_2 -1} $ by definition. we have that $
\Rs_{T_2} \leq \Rc_{T_2}$ and putting this together with the previous bound, we have
\begin{align*}
	R_T &\leq \EE{\frac{C_T}{\lambdac_{T-1}} + \frac{64}{3} \Rs_{T_2} + \frac{16}{3} \Rc_{T_3}} \\
	    &\leq \EE{\frac{C_T}{s}\Rc_T + \frac{64}{3} \Rs_{T_2} + \frac{16}{3} \Rc_{T_3}} \\
	    &\leq \EE{\frac{C_T}{s} \Rc_T + \frac{64}{3} \Rc_{T_2} + \frac{16}{3} \Rc_{T_3}} \\
	    &\leq \EE{\frac{C_T}{s} \Rc_T + \frac{64}{3} \Rc_{T} + \frac{16}{3} \Rc_{T}} \\
	    &\leq \EE{\left( \frac{C_T}{s}  + \frac{80}{3} \right) \Rc_T},
\end{align*}
where we use the fact that $ T \mapsto \Rs_T $ and $ T \mapsto \Rc_T $ are non-decreasing and $ T_2 \leq T, T_3 \leq T $

Similarly by definition of $ T_3 $, we have that $ \lambdac_{T_3 -1} \geq \lambdas_{T_3 -1} $  and we can conclude that $
\Rc_{T_3} \leq  \Rs_{T_3} $.

Putting this together, with the previous bound, we have
\begin{align*}
	R_T &\leq \EE{\frac{C_T}{\lambdas_{T-1}} + \frac{64}{3} \Rs_{T_2} + \frac{16}{3} \Rc_{T_3}} \\
	    &\leq \EE{\frac{C_T}{s} \Rs_T + \frac{64}{3}\Rs_{T_2} + \frac{16}{3} \Rc_{T_3}} \\
	    &\leq \EE{\frac{C_T}{s} \Rs_T + \frac{64}{3} \Rs_{T_2} + \frac{16}{3} \Rs_{T_3}} \\
	    &\leq \EE{\frac{C_T}{s} \Rs_T + \frac{64}{3} \Rs_{T} + \frac{16}{3} \Rs_{T}} \\
	    &\leq \EE{( \frac{C_T}{s} + \frac{80}{3}) \Rs_T},
\end{align*}
where we use the fact that $ T \rightarrow \Rs_T $ and $ T\rightarrow \Rc_T $ are non-decreasing and $ T_2 \leq T, T_3
\leq T $.
Putting both of those bounds together with Equation~\ref{eq:bound_CT} yields the claim of the Theorem.

\section{Maximum likelihood estimation}
The focus of this section is to bound the difference between the log-likelihoods associated with the true parameter and 
the maximum likelihood estimator (MLE). We start by establishing an upper bound that holds in expectation
which suffices to handle history-independent learning rates. Then, we move on to high-probability bounds that will 
allow us to deal with data-dependent learning rates.
\subsection{Bound in expectation}
We start with the case in
which the maximum likelihood estimator is computed on a finite subset of the parameter space $ \Theta $.
\begin{lemma}
\label{lemma:MLE_bound_finite}
	Let $ t\geq 1 $, and $ \Theta' $ be a finite subset of $ \Theta $, we define the MLE over $ \Theta' $ as
	\begin{equation*}
		\theta_{\MLE, t}(\Theta') =
		\argmin_{\theta\in \Theta'} \Llik_t(\theta).
	\end{equation*}
	Then,
\begin{equation}
\label{eq:MLE_bound_finite}
	\EE{\Llik_t(\theta_0) - \Llik_t(\theta_{\MLE,t}(\Theta'))} \leq \log |\Theta'|
\end{equation}
\end{lemma}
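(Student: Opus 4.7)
\textbf{Proof proposal for Lemma~\ref{lemma:MLE_bound_finite}.} The plan is to rewrite the quantity of interest as an expected maximum over $\Theta'$ and then use the standard ``log-sum-exp'' majorization together with a one-step sub-Gaussian computation on the likelihood ratio. First, since $\theta_{\MLE,t}(\Theta') = \argmin_{\theta\in\Theta'} \Llik_t(\theta)$, the quantity to bound equals
\begin{equation*}
\EE{\max_{\theta \in \Theta'} \bpa{\Llik_t(\theta_0) - \Llik_t(\theta)}}.
\end{equation*}
The elementary inequality $\max_{\theta \in \Theta'} X_\theta \leq \log \sum_{\theta \in \Theta'} \exp(X_\theta)$, combined with Jensen's inequality applied to the concave logarithm, then yields
\begin{equation*}
\EE{\max_{\theta \in \Theta'} \bpa{\Llik_t(\theta_0) - \Llik_t(\theta)}} \leq \log \sum_{\theta \in \Theta'} \EE{\exp\bpa{\Llik_t(\theta_0) - \Llik_t(\theta)}}.
\end{equation*}
So it suffices to show that for every fixed $\theta \in \Theta$, the likelihood ratio $M_t(\theta) := \exp(\Llik_t(\theta_0) - \Llik_t(\theta)) = \prod_{s=1}^{t}\frac{p(Y_s|\theta,A_s)}{p(Y_s|\theta_0,A_s)}$ satisfies $\EE{M_t(\theta)} \leq 1$, as this would give the bound $\log|\Theta'|$.

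To show this I would prove that $(M_t(\theta))_{t \geq 0}$ is a supermartingale with respect to $(\F_t)_t$. Fixing $s$ and writing $r = \siprod{\theta}{A_s}$, $r_0 = \siprod{\theta_0}{A_s}$, $Y_s = r_0 + \epsilon_s$, one computes
\begin{equation*}
\frac{p(Y_s|\theta,A_s)}{p(Y_s|\theta_0,A_s)} = \exp\pa{\tfrac{(Y_s - r_0)^2 - (Y_s - r)^2}{2}} = \exp\bpa{\epsilon_s(r - r_0) - \tfrac{1}{2}(r - r_0)^2}.
\end{equation*}
Since $A_s$ is $\F_{s-1}$-measurable and $\epsilon_s$ is conditionally $1$-sub-Gaussian, the conditional MGF bound $\EEc{\exp(\epsilon_s(r-r_0))}{\F_{s-1},A_s} \leq \exp(\tfrac{1}{2}(r-r_0)^2)$ yields
\begin{equation*}
\EEc{\tfrac{p(Y_s|\theta,A_s)}{p(Y_s|\theta_0,A_s)}}{\F_{s-1},A_s} \leq 1,
\end{equation*}
and further $\EEc{M_s(\theta)}{\F_{s-1}} \leq M_{s-1}(\theta)$ by the tower rule. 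Iterating (or just taking unconditional expectation with $M_0(\theta)=1$) gives $\EE{M_t(\theta)} \leq 1$, which combined with the earlier display proves the lemma.

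The only real obstacle is the sub-Gaussian computation above, but since the lemma's assumption is exactly that $\epsilon_s$ is conditionally $1$-sub-Gaussian, this step is essentially mechanical. I do not expect any serious difficulty; the proof is a clean application of the log-sum-exp trick plus a one-line exponential-moment calculation that the paper has already used in essentially the same form in the proof of Lemma~\ref{lemma:likelihood_IG_bound}.
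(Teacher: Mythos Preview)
Your proof is correct and essentially identical to the paper's: both apply Jensen to the concave logarithm, bound the maximum over $\Theta'$ by the sum, and then use the supermartingale property of the likelihood ratio (the paper cites this as Lemma~\ref{lemma:sub_gaussian_martingale}, which you re-derive inline). One tiny slip: $A_s$ is not $\F_{s-1}$-measurable in this paper's setup (it is drawn from a randomized policy), but since you condition on $(\F_{s-1},A_s)$ anyway the computation is unaffected.
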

\begin{proof}
 By the concavity of
the logarithm and Jensen's inequality, we have
\begin{align*}
	&\EE{\Llik_t(\theta_0) - \Llik_t(\theta_{\MLE,t}(\Theta'))} \leq \log 
\EE{\prod_{s=1}^{t}\frac{p(Y_s|\theta_{\MLE,t}(\Theta'),A_s)}{p(Y_s|\theta_0,A_s)}}\\
					 &\qquad\qquad\qquad= \log
					 \EE{\max_{\theta \in \Theta'}\prod_{s=1}^{t}\frac{p(Y_s|\theta,A_s)}{p(Y_s|\theta_0,A_s)}}\leq \log 
\EE{\sum_{\theta\in
					 \Theta'}\prod_{s=1}^{t}\frac{p(Y_s|\theta,A_s)}{p(Y_s|\theta_0,A_s)}}\\
					 &\qquad\qquad\qquad= \log\sum_{\theta\in \Theta'}
	\EE{\prod_{s=1}^{t}\frac{p(Y_s|\theta,A_s)}{p(Y_s|\theta_0,A_s)}}\\
\end{align*}
By Lemma~\ref{lemma:sub_gaussian_martingale}, we have that $ \exp \left(
\Llik_t(\theta_0) - \Llik_t(\theta)\right) = \prod_{s=1}^t \frac{p(Y_s|\theta, A_s)}{p(Y_s|\theta_0, A_s)}$ is a
non-negative supermartingale with respect to the filtration $ \F_t' = \sigma(\F_{t-1}, A_t) $. That implies that each
term in the sum is upper bounded by 1. Hence,
\begin{align*}
\EE{\Llik_t(\theta_0) - \Llik_t(\theta_{\MLE,t}(\Theta'))}
					 \leq  \log \sum_{\theta\in \Theta'} 1
					 = \log |\Theta'|,
\end{align*}
which proves the claim.
\end{proof}
To extend the previous bound to the full parameter space, we use a covering argument.
A subset $ \Theta' \subset \Theta $ is said to be a valid $ \rho $-covering of $ \Theta $ with respect to the $ \ell_1 $
norm if for every $
\theta \in \Theta $, there exists a $ \theta'\in \Theta' $ such that $ \norm{\theta - \theta'}_1 \leq  \rho $. We
denote by $ \mathcal{N}(\Theta, \|\cdot\|_1, \rho) $ the smallest possible cardinality of a valid $ \rho $ covering. We
have the following bound on this quantity.
\begin{lemma}
For every $\rho > 0$,
\begin{equation*}
\log \mathcal{N}(\Theta, \|\cdot\|_1, \rho) \leq \log\binom{d}{s}(1 + \tfrac{2}{\rho})^s \leq s \log \frac{ed(1 +
2/\rho)}{s}\,.
\end{equation*}
\label{lemma:covering_num}
\end{lemma}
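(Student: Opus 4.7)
The plan is to build a covering of $\Theta$ by covering each of its ``sparse faces'' separately and taking their union. Specifically, decompose $\Theta = \bigcup_{S \subseteq [d],\,|S| \leq s} \Theta_S$, where $\Theta_S = \{\theta \in \Theta : \supp(\theta) \subseteq S\}$ is (isomorphic to) a unit $\ell_1$-ball in $\mathbb{R}^{|S|}$. Since it suffices to cover each $\Theta_S$ and then union the covers, the covering number satisfies
\begin{equation*}
\mathcal{N}(\Theta, \|\cdot\|_1, \rho) \leq \sum_{S \subseteq [d],\,|S| \leq s} \mathcal{N}(\Theta_S, \|\cdot\|_1, \rho).
\end{equation*}

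The first key step is a standard volume-based bound for the unit $\ell_1$-ball in $\mathbb{R}^k$: its $\rho$-covering number is at most $(1 + 2/\rho)^k$. I would prove this by the usual packing argument: the centers of a maximal $\rho$-separated packing of the unit $\ell_1$-ball form a valid $\rho$-covering, and the open $\rho/2$-balls around the packing centers are disjoint and contained in the ball of radius $1 + \rho/2$, which gives $(1+\rho/2)^k / (\rho/2)^k = (1 + 2/\rho)^k$ by comparing volumes. Applying this to each $\Theta_S$ with $|S| \leq s$ yields $\mathcal{N}(\Theta_S,\|\cdot\|_1,\rho) \leq (1+2/\rho)^{|S|} \leq (1+2/\rho)^s$.

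The second step is the union over subsets. There are at most $\binom{d}{s}$ subsets of $[d]$ of size exactly $s$, and since any subset of size $<s$ is contained in one of size $s$ (whose cover is already accounted for), we get
\begin{equation*}
\mathcal{N}(\Theta, \|\cdot\|_1, \rho) \leq \binom{d}{s}(1+2/\rho)^s.
\end{equation*}
Taking logarithms and using the classical bound $\binom{d}{s} \leq (ed/s)^s$ yields
\begin{equation*}
\log \mathcal{N}(\Theta, \|\cdot\|_1, \rho) \leq s \log \frac{ed}{s} + s \log(1+2/\rho) = s \log \frac{ed(1 + 2/\rho)}{s},
\end{equation*}
as claimed. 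There is no substantial obstacle here; the only mildly delicate point is being careful that covering each sparse face in its own ambient $\ell_1$-norm is enough, which is immediate since the restriction of the full $\ell_1$-norm to $\Theta_S$ coincides with the intrinsic $\ell_1$-norm on $\mathbb{R}^{|S|}$.
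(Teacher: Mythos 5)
Your proof is correct and follows essentially the same route as the paper: decompose $\Theta$ into the faces $\Theta_S$ with fixed support, bound the covering number of each face by that of the unit $\ell_1$-ball $(1+2/\rho)^s$, observe that supports of size $< s$ are subsumed by those of size exactly $s$, and conclude via $\binom{d}{s} \le (ed/s)^s$. The only cosmetic difference is that you spell out the volume/packing argument for the $\ell_1$-ball covering number, whereas the paper simply cites Lemma~5.7 of \citet{Wainw_19}.
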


\begin{proof}
For each subset $S \subset [d]$ of cardinality $|S| = s$, there is a surjective isometric embedding from $(\Theta_{S},
\|\cdot\|_1)$ to $(\mathbb{B}_1^s(1), \|\cdot\|_1)$.  In particular, to embed $\theta \in \Theta_{S}$ into
$\mathbb{B}_1^s(1)$, one can simply remove all the components of $\theta$ corresponding to indices not in $S$.
Therefore, for every $\rho > 0$, $\mathcal{N}(\Theta_{S}, \|\cdot\|_1, \rho) \leq \mathcal{N}(\mathbb{B}_1^s(1),
\|\cdot\|_1, \rho)$. Moreover, via a standard argument, we have $\mathcal{N}(\mathbb{B}_1^s(1),
\|\cdot\|_1, \rho) \leq (1 + \frac{2}{\rho})^s$ (see, e.g., Lemma 5.7 in \citealp{Wainw_19}). Now, let $\Theta_{S, 
\rho}$ denote any minimal $\rho$-covering of $\Theta_{S}$ and notice that for an arbitrary $\theta \in \Theta$ with 
support $S$, there exists a subset $\tilde{S}$ such that $S
\subseteq \tilde{S}$ and $|\tilde{S}| = s$. Therefore, there exists $\tilde{\theta} \in \Theta_{\tilde{S}, \rho}$ such
that $\|\theta - \tilde{\theta}\|_1 \leq \rho$. Hence, $\cup_{S \subset [d]: |S|=s}\Theta_{S, \rho}$ forms a valid 
$\rho$-covering of $\Theta$ and its cardinality is bounded by
\begin{align*}
\mathcal{N}(\Theta, \|\cdot\|_1, \rho) &\leq \left|\cup_{S \subset [d]: |S| = s}\Theta_{S, \rho}\right| \leq \sum_{S \subset [d]: |S| = s}\left(1 + \tfrac{2}{\rho}\right)^s = \binom{d}{s}\left(1 + \tfrac{2}{\rho}\right)^s\,.
\end{align*}
and we conclude by the elementary inequality $ \binom{d}{s} \leq \left( \frac{de}{s} \right)^s $.
\end{proof}

\subsubsection{Proof of Lemma~\ref{lemma:MLE_bound_complete}}
\label{app:MLE_bound_complete}
We bound the difference between the log-likelihood of the true parameter and that of the maximum likelihood estimator on
the full parameter space.
To this end, let $ \rho > 0$ and $ \Theta' $ be a minimal valid $ \rho $-cover of $ \Theta $ as is defined in
Lemma~\ref{lemma:covering_num}, and $ \theta'\in \Theta' $ be such that $ \norm{\theta' - \theta_t} \leq  \rho $, which
exists by definition of a $ \rho $-covering. Then,
\begin{align*}
	\EE{\Llik_t(\theta_0) - \Llik_t(\theta_t)} =& \EE{\Llik_t(\theta_0) - \Llik_t(\theta_{\MLE,t}(\Theta'))}\\
						   &+
	\EE{\Llik_t(\theta_{\MLE,t}(\Theta')) - \Llik_t(\theta')}\\
						   &+ \EE{\Llik_t(\theta') - \Llik_t(\theta_t)}\\
						   \leq& \log(\mathcal{N}(\Theta, \norm{\cdot }_1, \rho)) + 0 +
						   2\rho t,
\end{align*}
where the first term is bounded by Lemma~\ref{lemma:MLE_bound_finite}, the second term is non-positive by definition of the
maximum likelihood estimator because $ \theta' \in \Theta' $ and the third term is bounded because the mapping $ \theta 
\mapsto
\EE{\Llik_t(\theta)}$ is $ 2t $-Lipschitz with respect to the $1$-norm by Lemma~\ref{lemma:lipschitzness}. Finally 
applying
Lemma~\ref{lemma:covering_num} and setting $ \rho= \frac{2}{t} $ yields the desired bound. \qed

\subsection{High-probability bounds}
We begin with the case where the maximum likelihood estimator is computed over a finite subset of the parameter space $
\Theta$ and provide a corresponding high-probability bound.
\begin{lemma}
\label{lemma:MLE_bound_finite_hp}
Let $ \Theta' $ be a finite subset of $ \Theta $, we define $\theta_{MLE,t}( \Theta') =
\argmin_{\theta \in \Theta'} \Llik_t(\theta) $. Then
\begin{equation}
\label{eq:MLE_bound_finite_hp}
\PP{\exists t \geq  1, \Llik_t(\theta_0)  -\Llik_t(\theta_{MLE,t}(\Theta'))\geq \log \frac{|\Theta'|}{\delta}} \leq \delta	.
\end{equation}
\end{lemma}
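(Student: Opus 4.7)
The plan is to upgrade the in-expectation bound of Lemma~\ref{lemma:MLE_bound_finite} to a uniform-in-$t$ high-probability bound by constructing a single non-negative supermartingale that dominates the likelihood ratio at the MLE, then invoking Ville's maximal inequality.

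First, I would fix an arbitrary $\theta \in \Theta'$ and consider the likelihood ratio process
\begin{equation*}
    M_t(\theta) = \exp\bpa{\Llik_t(\theta_0) - \Llik_t(\theta)} = \prod_{s=1}^{t} \frac{p(Y_s\mid \theta, A_s)}{p(Y_s \mid \theta_0, A_s)}.
\end{equation*}
By Lemma~\ref{lemma:sub_gaussian_martingale} (already used in the proof of Lemma~\ref{lemma:MLE_bound_finite}), $M_t(\theta)$ is a non-negative supermartingale with $M_0(\theta) = 1$, with respect to the filtration $\F_t' = \sigma(\F_{t-1}, A_t)$.

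Next, I would sum these ratios across the finite cover: define $M_t = \sum_{\theta \in \Theta'} M_t(\theta)$. Since a finite sum of non-negative supermartingales is again a non-negative supermartingale, $M_t$ is a non-negative supermartingale with $M_0 = |\Theta'|$. Crucially, the MLE $\theta_{\MLE,t}(\Theta')$ minimizes $\Llik_t$ over $\Theta'$, hence maximizes $M_t(\theta)$ over $\Theta'$, and so
\begin{equation*}
    \exp\bpa{\Llik_t(\theta_0) - \Llik_t(\theta_{\MLE,t}(\Theta'))} = \max_{\theta \in \Theta'} M_t(\theta) \leq M_t.
\end{equation*}

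Finally, I would apply Ville's maximal inequality for non-negative supermartingales to $M_t$, yielding
\begin{equation*}
    \PP{\exists t \geq 1, ~M_t \geq \tfrac{|\Theta'|}{\delta}} \leq \frac{\EE{M_0}}{|\Theta'|/\delta} = \delta.
\end{equation*}
Combining this with the pointwise domination above, the event $\{\Llik_t(\theta_0) - \Llik_t(\theta_{\MLE,t}(\Theta')) \geq \log(|\Theta'|/\delta)\}$ is contained in $\{M_t \geq |\Theta'|/\delta\}$, which gives the stated bound. There is no real obstacle here: the argument is essentially a ``mixture-method'' high-probability MLE deviation, and the only subtlety is remembering that we may sum supermartingales (rather than applying a union bound over $\theta$ with extra $\log|\Theta'|$ slack) so that the resulting bound matches the in-expectation version up to the $\log(1/\delta)$ term.
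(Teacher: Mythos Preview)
Your proof is correct. The paper's own argument differs only in a minor technical choice: instead of summing the supermartingales $M_t(\theta)$ into a single mixture $M_t$ and applying Ville once, the paper applies Ville's inequality to each $M_t(\theta)$ individually (with threshold $1/\delta$) and then takes a union bound over $\theta\in\Theta'$. Both routes yield the identical bound $\log(|\Theta'|/\delta)$, so your closing parenthetical is slightly off: the union bound here incurs \emph{no} extra $\log|\Theta'|$ slack beyond what the mixture method gives. The two approaches are interchangeable for finite $\Theta'$; the mixture formulation you chose has the mild advantage of extending more naturally to priors over infinite $\Theta'$, but that is not needed for this lemma.
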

\begin{proof}
Fix $ \theta \in \Theta' $. By Lemma~\ref{lemma:sub_gaussian_martingale}, we have that $ \exp \left(
\Llik_t(\theta_0) - \Llik_t(\theta)\right) = \prod_{s=1}^t \frac{p(Y_s|\theta, A_s)}{p(Y_s|\theta_0, A_s)}$ is a
non-negative supermartingale with respect to the filtration $ (\F_t^{\prime})_t $, where $ \F_t^{\prime} = \sigma(\F_{t}, A_{t+1}) $, allowing us to invoke 
Ville's inequality to get the following guarantee:
\begin{equation*}
	\PP{\exists t\geq 1, \exp(\Llik_t(\theta_0)- \Llik_t(\theta)) \geq \frac{1}{\delta}} \leq \delta.
\end{equation*}
Taking the logarithm and a union bound on $ \Theta' $ yields the desired result.
\end{proof}

We now provide a bound on the expected product of a bounded random variable with the differenece in log-likelihood
between the true parameter and the maximum likelihood estimator. 
\begin{lemma}
\label{lemma:MLE_bound_complete_dependent}
Let $ B \in \real $ and  $ X $ be a random variable satisfying $ 0 \leq X \leq B $ almost surely. Then for any $
t\geq 1 $,
\begin{align}
\label{eq:MLE_bound_complete_dependent}
\nonumber\EE{X(\Llik_t(\theta_0)- \Llik_t(\theta_t))} &\leq  \inf_{\delta, \rho > 0} \left\{ \EE{X s\log \frac{ed(1+ \frac{2}{\rho})}{s
	\delta^\frac{1}{s}}} + \frac{5}{2}B \rho t + B\delta s\log \frac{e^{1 + \frac{1}{s}}d (1 + \frac{2}{\rho})}{s
\delta^{\frac{1}{s}}} \right\}\\
					     &\leq 5 + s \log \frac{2e^2 d T^2 B^2}{s} \EE{X + \frac{1}{T}} .
\end{align}
\end{lemma}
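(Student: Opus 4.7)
The plan is to adapt the covering argument from the proof of Lemma~\ref{lemma:MLE_bound_complete}, substituting its in-expectation control by the high-probability bound of Lemma~\ref{lemma:MLE_bound_finite_hp}, since we must now integrate $\Llik_t(\theta_0) - \Llik_t(\theta_t)$ against the random weight $X$. I will fix $\rho, \delta > 0$, let $\Theta'$ be a minimal $\rho$-covering of $\Theta$ in $\ell_1$-norm (so $\log|\Theta'| \leq s\log\frac{ed(1+2/\rho)}{s}$ by Lemma~\ref{lemma:covering_num}), and pick $\theta_t^c \in \Theta'$ with $\norm{\theta_t^c - \theta_t}_1 \leq \rho$. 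Using the optimality of $\theta_{\MLE,t}(\Theta')$ over $\Theta'$ together with $\theta_t^c \in \Theta'$, I will decompose
\begin{equation*}
\Llik_t(\theta_0) - \Llik_t(\theta_t) \leq \underbrace{\pa{\Llik_t(\theta_0) - \Llik_t(\theta_{\MLE,t}(\Theta'))}}_{=: A_t} + \underbrace{\pa{\Llik_t(\theta_t^c) - \Llik_t(\theta_t)}}_{=: B_t}\,,
\end{equation*}
in which both $A_t$ and $B_t$ are nonnegative.

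For $B_t$, I would expand the squared-loss difference as $B_t = \tfrac{1}{2}\sum_{s=1}^t \siprod{\theta_t^c - \theta_t}{A_s}\bigl(\siprod{\theta_t^c + \theta_t}{A_s} - 2 Y_s\bigr)$, and apply H\"older's inequality with $\norm{\theta_t^c - \theta_t}_1 \leq \rho$, $\norm{\theta_t^c + \theta_t}_1 \leq 2$ and $\norm{A_s}_\infty \leq 1$ to obtain the pathwise bound $B_t \leq \rho \sum_{s=1}^t (1 + |Y_s|)$. Since $|r(A_s)| \leq 1$ and $\epsilon_s$ is conditionally $1$-sub-Gaussian, Cauchy--Schwarz gives $\EE{|Y_s|} \leq \sqrt{2}$, so $1 + \EE{|Y_s|} \leq 5/2$; combined with $B_t \geq 0$ and $X \leq B$, this yields $\EE{X B_t} \leq \tfrac{5}{2} B \rho t$. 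The main obstacle will be exactly this step: the in-expectation Lipschitz estimate of Lemma~\ref{lemma:lipschitzness} used in the proof of Lemma~\ref{lemma:MLE_bound_complete} is unavailable here because $X$ is a generic random weight and $\theta_t^c$ depends on the data, so $B_t$ must be controlled pathwise via the explicit quadratic expansion.

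For $A_t$, Lemma~\ref{lemma:MLE_bound_finite_hp} provides the tail bound $\PP{A_t > u} \leq |\Theta'| e^{-u}$ for every $u \geq 0$. Setting $M = \log(|\Theta'|/\delta)$, I will split
\begin{equation*}
\EE{X A_t} \leq M \EE{X} + B \cdot \EE{A_t \II{A_t > M}}\,,
\end{equation*}
and compute $\EE{A_t \II{A_t > M}} = M \PP{A_t > M} + \int_M^\infty \PP{A_t > u}\,\mathrm{d}u \leq \delta(M + 1)$ by direct integration. Bounding $M \leq s\log\frac{ed(1+2/\rho)}{s\delta^{1/s}}$ and $M + 1 = s\log\frac{e^{1+1/s}d(1+2/\rho)}{s\delta^{1/s}}$ via Lemma~\ref{lemma:covering_num} matches the first and third terms of the stated infimum. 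Summing the bounds on $\EE{X A_t}$ and $\EE{X B_t}$ and taking the infimum over $\rho, \delta > 0$ yields the first inequality.

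The second inequality will follow by specializing $\rho = 2/(BT)$ and $\delta = 1/T$: the middle term becomes $\tfrac{5}{2}B\rho t \leq 5$ (using $t \leq T$), while both logarithmic factors simplify to $s\log\frac{2e^2dT^2B^2}{s}$ after bounding $1 + BT \leq 2BT$ (assuming $BT \geq 1$; the opposite case is trivial) and absorbing factors of $e$ and $B$ by monotonicity, giving the displayed bound.
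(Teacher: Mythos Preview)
Your approach is essentially the same as the paper's: fix a $\rho$-cover $\Theta'$, decompose into the finite-MLE piece $A_t$ and the Lipschitz piece $B_t$, handle $A_t$ by splitting at the threshold $\log(|\Theta'|/\delta)$ and integrating the tail from Lemma~\ref{lemma:MLE_bound_finite_hp}, and handle $B_t$ via the $\ell_1$-Lipschitz estimate after replacing $X$ by $B$.

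Two small remarks. First, your claim that Lemma~\ref{lemma:lipschitzness} is ``unavailable'' is not quite right: once you bound $X B_t \le B\, B_t$ (which you do, using $B_t\ge 0$), the random-parameter clause of Lemma~\ref{lemma:lipschitzness} applies directly to $\EE{B_t}$ and yields $\tfrac{5}{2}\rho t$; the paper uses exactly this. Your explicit pathwise expansion is of course fine and amounts to re-deriving that clause. Second, in the specialization step the paper takes $\delta = 1/(BT)$ rather than your $\delta = 1/T$. With your choice the third term carries a factor $B/T$ instead of $1/T$, and the first logarithm simplifies to $s\log\tfrac{2edBT}{s}+\log T$, which is not dominated by $s\log\tfrac{2e^2dT^2B^2}{s}$ for arbitrary $B$ (e.g.\ $s=1$ and $B<1/e$). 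Taking $\delta=1/(BT)$ fixes both issues and matches the stated bound.
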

\begin{proof}
Let $\delta, \rho >0 $ and $ \Theta' $ be a minimal valid $\rho$-cover of $ \Theta $ as defined in
Lemma~\ref{lemma:covering_num}, $ N= |\Theta'| $, let $ \theta' = \theta_{\MLE,t}(\Theta')
$ and let $ \bar{\theta}\in \Theta' $ be such that $ \norm{\bar{\theta}- \theta_t} \leq \rho $, which exists by
definition of a valid $ \rho $-cover. We have the following decomposition:
\begin{align*}
	\EE{X(\Llik_t(\theta_0)- \Llik_t(\theta_t))}
	\leq& \EE{X(\Llik_t(\theta_0)- \Llik_t(\theta')) \mathbf{1}_{\{ \Llik_t(\theta_0) - \Llik_t(\theta') \leq \log
	\frac{N}{\delta} \}}}\\
	&+ B\EE{(\Llik_t(\theta_0)- \Llik_t(\theta')) \mathbf{1}_{\{ \Llik_t(\theta_0) - \Llik_t(\theta') > \log
	\frac{N}{\delta} \}}}\\
	&+ B\EE{(\Llik_t(\bar{\theta})- \Llik_t(\theta_t))} + B\EE{( \Llik_t(\theta')-\Llik_t(\bar{\theta}))}.
\end{align*}
The first term is upper bounded by $ \EE{X \log \frac{N}{\delta}} $,
the third term is upper bounded by $ \frac{5}{2}B \rho t $ because $ \norm{\theta -\theta'}_1  $ is uniformely bounded
by $ \rho $ and by by
Lemma~\ref{lemma:lipschitzness}. The fourth term is non-positive because $ \theta' $ minimizes the negative log
likelihood on $ \Theta' $. Finally, we turn our attention to the second term. To simplify the computations, we let $ Y =
\Llik_t(\theta_0) - \Llik_t(\theta')$, and
 compute $ \EE{Y \mathbf{1}_{\{ Y > \log \frac{N}{\delta} \}}} $. Conditioning on whether $ \epsilon $ is larger or
 smaller than $ \log \frac{N}{\delta} $ yields the following identity
\begin{equation*}
\PP{Y\mathbf{1}_{\{ Y \geq \log \frac{N}{\delta} \}} \geq \epsilon} = 
\begin{cases}
\PP{Y \geq \epsilon} & \text{if } \epsilon \geq \log \frac{N}{\delta}, \\
\PP{Y \geq \log \frac{N}{\delta}} & \text{otherwise}.
\end{cases}
\end{equation*}
We can now upper bound the expectation as follows
\begin{align*}
	\EE{Y \mathbf{1}_{\{ Y\geq \log \frac{N}{\delta} \}}} &= \int_{0}^\infty \PP{Y \mathbf{1}_{\{ Y \geq \log
\frac{N}{\delta}} \}\geq \epsilon} \, d \epsilon\\
&= \log \frac{N}{\delta} \PP{Y\geq \log\frac{N}{\delta}} + \int_{\log \frac{N}{\delta}}^\infty \PP{Y \geq \epsilon} \, d
\epsilon\\
&= \log \frac{N}{\delta} \PP{Y\geq \log\frac{N}{\delta}} + \int_{0}^\delta \frac{1}{\delta'}\PP{Y \geq \log
\frac{N}{\delta'}} \, d \delta'\\
&\leq \delta \log \frac{N}{\delta} + \delta,
\end{align*}
where we used the change of variable $ \epsilon = \log \frac{N}{\delta'} $ and used $ \PP{Y\geq \log\frac{N}{\delta}}\leq \delta $ by Lemma~\ref{lemma:MLE_bound_finite_hp}.
Finally, putting everything together and using $ N \leq \mathcal{N}(\Theta, \norm{\cdot }_1, \rho) \leq \left(
\frac{ed(1 + \frac{2}{\rho})}{s} \right)^s$, by Lemma~\ref{lemma:covering_num}, we get

\begin{equation*}
	\EE{X(\Llik_t(\theta_0)- \Llik_t(\theta_t))} \leq \EE{X s\log \frac{ed(1+ \frac{2}{\rho})}{s
	\delta^\frac{1}{s}}} + \frac{5}{2}B \rho t + B\delta s\log \frac{e^{1 + \frac{1}{s}}d (1 + \frac{2}{\rho})}{s \delta^{\frac{1}{s}}}. 
\end{equation*}
To balance the trade-off between the approximation error and the covering complexity, we choose $ \rho =\frac{2}{BT} $,
and $ \delta = \frac{1}{BT} $ which yields the desired form of the logarithmic factors. Substituting these into 
the bound completes
the proof.
\end{proof}

\subsubsection{Proof of Lemma~\ref{lemma:MLE_bound_lambda}}
\label{app:MLE_bound_lambda}
As was noted in the analysis, since $ \lambda_T $ is not used by the
algorithm, we can replace $ \lambda_T$  by $ \lambda_{T-1} $ in our computations. We have
\begin{align*}
 &\EE{\sum_{t=1}^{T}\frac{\eta}{\lambda_{t-1}}(\Llik_t(\theta_t) - \Llik_t(\theta_0) + \Llik_{t-1}(\theta_0) -
	\Llik_{t-1}(\theta_{t-1}))+ \frac{\eta}{\lambda_T}(\Llik_T(\theta_0) - \Llik_T(\theta_T))}  \\	
 &\qquad\qquad= \EE{\sum_{t=1}^{T} \frac{\eta}{\lambda_{t-1}}(\Llik_t(\theta_t) - \Llik_t(\theta_0)) - \sum_{t=1}^{T}
 \frac{\eta}{\lambda_{t}}(\Llik_t(\theta_t) - \Llik_t(\theta_0))}\\
 &\qquad\qquad= \eta\cdot \sum_{t=1}^{T}\EE{(\Llik_t(\theta_0)- \Llik_t(\theta_t)) \left( \frac{1}{\lambda_{t}} -
		 \frac{1}{\lambda_{t-1}}
 \right)}.
 \end{align*}
Let $ C_{1,T} $ be a deterministic upper bound on $ \left( \frac{1}{\lambda_{t+1}}- \frac{1}{\lambda_t} \right) $.
Applying Lemma~\ref{lemma:MLE_bound_complete_dependent} to $ X = \left( \frac{1}{\lambda_{t+1}} - \frac{1}{\lambda_t} \right)
$ and telescoping, we get
 \begin{align*}
	 &\phantom{=} \eta\cdot \sum_{t=1}^{T}\EE{(\Llik_t(\theta_0)- \Llik_t(\theta_t)) \left( \frac{1}{\lambda_{t}} -
		 \frac{1}{\lambda_{t-1}}
 \right)}\\.
 &\leq \eta\left(5 + s \log \frac{2e^2 d t^2 C_{1,T} ^2}{s}\right) \sum_{t=1}^{T} \EE{ \left( \frac{1}{\lambda_{t}} -
 \frac{1}{\lambda_{t-1}} \right) + \frac{1}{T}}\\
 &\leq \eta\left(5 + s \log \frac{2e^2 d t^2 C_{1,T} ^2}{s}\right) \EE{\left(\frac{1}{\lambda_T} +1\right)}\\
 &\leq \EE{\frac{\eta (15 + 3s \log \frac{2e^2 d t^2 C_{1,T} ^2}{s})}{2 \lambda_{T-1}}},
\end{align*}
where in the last step, we used $ 1 \leq \frac{1}{2 \lambda_{T}} $ which implies $ \frac{1}{\lambda_T} + 1 \leq
\frac{3}{2\lambda_T} $. This finishes the proof. \qed

\section{Bounding the surrogate information ratio}
\subsection{Proof of Lemma~\ref{lemma:regret_GIR_bound}}
\label{app:regret_GIR_bound}
The surrogate regret of a policy is directly related to its $ 2 $- and $ 3 $-information ratio by definition
\begin{equation*}
	\Deltah_t(\pi) = \sqrt{\SIG_t(\pi) \SIRs_t(\pi)} = \left( \SIG_t(\pi) \SIRc_t(\pi) \right)^{\frac{1}{3}}.
\end{equation*}
By the AM-GM inequality, we have that for any $ \lambda>0 $, the surrogate regret is controlled as follows
\begin{equation*}
	\Deltah_t(\pi) \leq \frac{\SIG_t(\pi)}{\lambda} + \frac{\lambda}{4} \SIRs_t(\pi).
\end{equation*}
Similarly, by Lemma~\ref{lemma:Gen_AM_GM} which generalizes the AM-GM inequality, we can obtain the following regret
bound
\begin{equation*}
	\Deltah_t(\pi) \leq  \frac{\SIG_t(\pi)}{\lambda} + c_3^{*} \sqrt{\lambda \SIRc_t(\pi)},
\end{equation*}
where $ c_3^{*} <2 $ is an absolute constant defined in Lemma~\ref{lemma:Gen_AM_GM}. This concludes the proof. \qed
\subsection{Proof of Lemma~\ref{lemma:GIR_minimizer}}
\label{app:GIR_minimizer}

The proof of Lemma~\ref{lemma:GIR_minimizer} is essentially the same as the proof of Lemma 5.6 in \citet{Hao_L_D21a}, but we state it here for completeness. Throughout this proof, we use $\siprod{p}{f} = \sum_{a \in \A}p(a)f(a)$ to denote the inner product between a signed measure $p$ on $\A$ and a function $f: \A \rightarrow \real$. Using this notation, we can, for example, write the generalized surrogate information ratio as $\SIR_t^{(\gamma)}(\pi) = \siprod{\pi}{\SIR_t^{(\gamma)}}$.

We define $\pi_{t}^{(\gamma)} \in \argmin_{\pi \in \Delta(\A)}\SIR_t^{(\gamma)}(\pi)$ to be any minimizer of the 
generalized surrogate information ratio with parameter $\gamma \geq 2$. First, we observe that
\begin{equation*}
\nabla_{\pi}\SIR_t^{(2)}(\pi) = \frac{2\siprod{\pi}{\wh\Delta_t}\wh\Delta_t}{\siprod{\pi}{\SIG_t}} - \frac{(\siprod{\pi}{\wh\Delta_t})^2\SIG_t}{(\siprod{\pi}{\SIG_t})^2}\,.
\end{equation*}

Therefore, from the first-order optimality condition for convex constrained minimization (and the fact that $\SIR_t^{(2)}$ is convex on $\Delta(\A)$), we have
\begin{equation*}
\forall \pi \in \Delta(\A), ~0 \leq \siprod{\pi - \pIDS_t}{\nabla_{\pi}\SIR_t^{(2)}(\pIDS_t)}\,.
\end{equation*}

In particular,
\begin{equation*}
0 \leq \frac{2\siprod{\pIDS_t}{\wh\Delta_t}\siprod{\pi_t^{(\gamma)} - \pIDS}{\wh\Delta_t}}{\siprod{\pIDS_t}{\SIG_t}} - \frac{(\siprod{\pIDS_t}{\wh\Delta_t})^2\siprod{\pi_t^{(\gamma)} - \pIDS}{\SIG_t}}{(\siprod{\pIDS_t}{\SIG_t})^2}\,.
\end{equation*}

This inequality is equivalent to
\begin{equation*}
2\siprod{\pi_t^{(\gamma)}}{\wh\Delta_t} \geq \siprod{\pIDS_t}{\wh\Delta_t}\left(1 + \frac{\siprod{\pi_t^{(\gamma)}}{\SIG_t}}{\siprod{\pIDS_t}{\SIG_t}}\right) \geq \siprod{\pIDS_t}{\wh\Delta_t}\,.
\end{equation*}

From this inequality, we obtain
\begin{align*}
\frac{(\siprod{\pIDS_t}{\wh\Delta_t})^{\gamma}}{\siprod{\pIDS_t}{\SIG_t}} &= \frac{(\siprod{\pIDS_t}{\wh\Delta_t})^{2}(\siprod{\pIDS_t}{\wh\Delta_t})^{\gamma-2}}{\siprod{\pIDS_t}{\SIG_t}}\\
&\leq \frac{(\siprod{\pi_t^{(\gamma)}}{\wh\Delta_t})^{2}(\siprod{\pIDS_t}{\wh\Delta_t})^{\gamma-2}}{\siprod{\pi_t^{(\gamma)}}{\SIG_t}}\\
&\leq 2^{\gamma-2}\frac{(\siprod{\pi_t^{(\gamma)}}{\wh\Delta_t})^{\gamma}}{\siprod{\pi_t^{(\gamma)}}{\SIG_t}} = 
2^{\gamma - 2}\min_{\pi \in \Delta(\A)}\SIR_t^{(\gamma)}(\pi)\,,
\end{align*}
thus proving the claim. \qed

\subsection{Proof of Lemma~\ref{lemma:IR_bound}}
\label{app:IR_bound}
This section is focused on bounding the information ratios of the sparse optimistic information directed
sampling policy.
As is widely done in the information directed sampling literature, we will introduce a ``forerunner'' algorithm
with controlled surrogate information ratio. By Lemma~\ref{lemma:GIR_minimizer}, the SOIDS policy will then 
automatically inherit the bound of the forerunner.

As one of our forerunners, we will make use of the Feel-Good Thompson Sampling (FGTS) algorithm introduced by 
\citet{Zhang_22}. Letting $\widetilde{\theta}_t  \sim Q_t^{+} $, the FGTS policy is defined as 
\begin{equation}
\label{eq:FGTS_policy}
\pFGTS_t(a) = \PPt{a^{*}(\widetilde{\theta_t}) = a}.		
\end{equation}
Which can be seen as the policy obtained by sampling a parameter $ \widetilde{\theta_t} \sim Q_t^{+}$ and then picking the optimal action under
this parameter. Compared to the usual Thompson Sampling policy, this boils down to replacing the Bayesian posterior by the
optimistic posterior. Whenever the optimal action for $\theta$ is non-unique, we define $a^*(\theta)$ to be any optimal action with minimal 0-norm. If there are multiple optimal actions with minimal 0-norm, ties can be broken arbitrarily.


\subsubsection{Bounding the two information ratio}

We will now prove the first part of Lemma~\ref{lemma:IR_bound}, by showing that the information ratio of the
FGTS policy is bounded by the dimension. The proof is exactly the same as in the Bayesian setting as is done in
Proposition 5 of \citet{Russo_R16}, Lemma 7 of \citet{Neu_O_P_S22} or in Lemma 5.7 of \citet{Hao_L_D21a}, except the Bayesian posterior is replaced with the optimistic posterior. We provide the proof here for completeness.

Since we defined the surrogate information gain in terms of the model $\theta$, as opposed to the optimal action $a^*(\theta)$, we follow the proof of Lemma 7 in \citet{Neu_O_P_S22}. For brevity, we let $\alpha_a = \pFGTS_t(a) = \PPt{a^{*}(\widetilde{\theta_t}) = a}$. We define the $|\A| \times |\A|$ matrix $M$ by
\begin{equation*}
M_{a,a^{\prime}} = \sqrt{\alpha_a\alpha_{a^{\prime}}}(\mathbb{E}_t[r(a, \wt\theta_t)|a^*(\wt\theta_t) = a^{\prime}] - r(a, \bar{\theta}(Q_t^+)))\,.
\end{equation*}

Next, we relate the surrogate information gain and the surrogate regret to the Frobenius norm and the trace of $M$. First, we can lower bound the surrogate information gain of FGTS as
\begin{align*}
\SIG_t(\pFGTS_t) &= \frac{1}{2}\sum_{a \in \A}\alpha_a\int_{\Theta}(r(a, \bar{\theta}(Q_t^+)) - r(a, \theta))^2\mathrm{d}Q_t^+(\theta)\\
&= \frac{1}{2}\sum_{a \in \A}\alpha_a\int_{\Theta}\sum_{a^{\prime} \in \A}\mathbf{1}_{\{a^*(\theta) = a^{\prime}\}}(r(a, \bar{\theta}(Q_t^+)) - r(a, \theta))^2\mathrm{d}Q_t^+(\theta)\\
&= \frac{1}{2}\sum_{a \in \A}\sum_{a^{\prime} \in \A}\alpha_a\int_{\Theta}\mathbf{1}_{\{a^*(\theta) = a^{\prime}\}}\mathrm{d}Q_t^+(\theta)\mathbb{E}_t[(r(a, \bar{\theta}(Q_t^+)) - r(a, \wt\theta_t)|a^*(\wt\theta_t) = a^{\prime}]\\
&\geq \frac{1}{2}\sum_{a \in \A}\sum_{a^{\prime} \in \A}\alpha_a\alpha_{a^{\prime}}\left(r(a, \bar{\theta}(Q_t^+)) - \mathbb{E}_t[r(a, \wt\theta_t)|a^*(\wt\theta_t) = a^{\prime}]\right)^2\\
&= \frac{1}{2}\sum_{a \in \A}\sum_{a^{\prime} \in \A}M_{a,a^{\prime}}^2 = \frac{1}{2}\|M\|_F^2\,.
\end{align*}

Next, we can re-write the surrogate regret of FGTS as
\begin{align}
\wh\Delta_t(\pFGTS_t) &= \int_{\Theta}r(a^*(\theta), \theta)\mathrm{d}Q_t^+(\theta) - \sum_{a \in \A}\alpha_a\int_{\Theta}r(a, \theta)\mathrm{d}Q_{t}^+\label{eqn:fgts_sur_reg}\\
&= \int_{\Theta}\sum_{a \in \A}\mathbf{1}_{\{a^*(\theta) = a\}}r(a^*(\theta), \theta)\mathrm{d}Q_t^+(\theta) - \sum_{a \in \A}\alpha_ar(a, \bar{\theta}(Q_t^+))\nonumber\\
&= \sum_{a \in \A}\alpha_a\mathbb{E}_t[r(a, \wt\theta_t)|a^*(\wt\theta_t) = a] - \sum_{a \in \A}\alpha_ar(a, \bar{\theta}(Q_t^+))\nonumber\\
&= \mathrm{tr}(M)\nonumber\,.
\end{align}

Using Fact 10 from \citet{Russo_R16}, we bound $\SIR_t^{(2)}(\pFGTS_t)$ as
\begin{equation*}
\SIR_t^{(2)}(\pFGTS_t) = \frac{(\wh\Delta_t(\pFGTS_t))^2}{\SIG_t(\pFGTS_t)} \leq \frac{2(\mathrm{tr}(M))^2}{\|M\|_F^2} \leq 2\cdot\mathrm{rank}(M)\,.
\end{equation*}

All the remains is to show that $M$ has rank at most $d$. Enumerate the actions as $\A = \{a_1, \dots, a_{|\A|}\}$, and let $\mu_i = \mathbb{E}_t[\wt\theta_t|a^*(\wt\theta_t) = a_i]$. By linearity of expectation (and of the reward function), we can write
\begin{equation*}
M_{i, j} = \sqrt{\alpha_i\alpha_j}\siprod{\mu_i - \bar{\theta}(Q_t^+)}{a_j}\,.
\end{equation*}

Therefore, $M$ can be factorized as
\begin{equation*}
M = \begin{bmatrix}
\sqrt{\alpha_1}(\mu_1 - \bar{\theta}(Q_t^+))^{\top} \\ \vdots \\ \sqrt{\alpha_{|\A|}}(\mu_{|\A|} - \bar{\theta}(Q_t^+))^{\top}
\end{bmatrix} \begin{bmatrix}
\sqrt{\alpha_1}a_1 & \cdots & \sqrt{\alpha_{|\A|}}a_{|\A|}
\end{bmatrix}\,.
\end{equation*}

Since $M$ is the product of a $K \times d$ matrix and a $d \times K$ matrix, it must have rank at most $\min(K, d)$.

\subsubsection{Bounding the three information ratio}
\label{app:3_ir_bound}
To bound the 3 information ratio we follow \citet{Hao_L_D21a} and we introduce the exploratory policy
\begin{equation}
\label{eq:exploratory policy}
\mu = \argmax_{\pi \in \Delta(\A)} \sigma_{\min}\left(\sum_{a \in \A} \pi(a)a a^{\top}
\right).
\end{equation}
We define the mixture policy $ \pmix_t = (1- \gamma) \pFGTS_t + \gamma \mu $ where $ \gamma \geq  0$ will be
 determined later.
First, we lower bound the surrogate information gain of the mixture policy in the same way that we lower bounded the surrogate information gain of the FGTS policy previously. This time, we obtain the lower bound
\begin{align*}
\SIG_t(\pmix_t) &\geq \frac{1}{2}\sum_{a \in \A}\pmix_t(a)\sum_{a^{\prime} \in \A}\mathbb{P}_t(a^*(\wt\theta_t) = a^{\prime})(r(a, \bar{\theta}(Q_t^+)) - \mathbb{E}_t[r(a, \wt\theta_t)|a^*(\wt\theta_t) = a^{\prime}])^2\\
&= \frac{1}{2}\sum_{a \in \A}\pmix_t(a)\sum_{a^{\prime} \in \A}\mathbb{P}_t(a^*(\wt\theta_t) = a^{\prime})\siprod{\mu_{a^{\prime}} - \bar{\theta}(Q_t^+)}{a}^2\,,
\end{align*}

where $\mu_{a^{\prime}} = \mathbb{E}_t[\wt\theta_t|a^*(\wt\theta_t) = a^{\prime}]$. From the inequality $\pmix_t(a) \geq \gamma\mu(a)$, and the definition of $C_{\min}$, we have
\begin{align*}
\SIG_t(\pmix_t) &\geq \frac{\gamma}{2}\sum_{a^{\prime} \in \A}\mathbb{P}_t(a^*(\wt\theta_t) = a^{\prime})\sum_{a \in \A}\mu(a)(\mu_{a^{\prime}} - \bar{\theta}(Q_t^+))^{\top}a a^{\top}(\mu_{a^{\prime}} - \bar{\theta}(Q_t^+))\\
&\geq \frac{\gamma}{2}\sum_{a^{\prime} \in \A}\mathbb{P}_t(a^*(\wt\theta_t) = a^{\prime})C_{\min}\|\mu_{a^{\prime}} - \bar{\theta}(Q_t^+)\|_2^2\,.
\end{align*}

Using the expression for the surrogate regret of FGTS in \eqref{eqn:fgts_sur_reg}, we obtain
\begin{align*}
\wh\Delta_t(\pFGTS_t) &= \sum_{a \in \A}\mathbb{P}_t(a^*(\wt\theta_t)=a)(\mathbb{E}_t[\siprod{\wt\theta_t)}{a}|a^*(\wt\theta_t)=a] - \siprod{\bar{\theta}(Q_t^+)}{a})\\
&\leq \sqrt{\sum_{a \in \A}\mathbb{P}_t(a^*(\wt\theta_t)=a)(\mathbb{E}_t[\siprod{\wt\theta_t}{a}|a^*(\wt\theta_t)=a] - \siprod{\bar{\theta}(Q_t^+)}{a})^2}\,,
\end{align*}

where in the last we used the Cathy-Schwarz inequality. Due to the sparse optimal action property, all actions for which $\mathbb{P}_t(a^*(\wt\theta_t)=a) > 0$ have at most $s$ non-zero elements. Therefore,
\begin{equation*}
\sum_{a \in \A}\mathbb{P}_t(a^*(\wt\theta_t)=a)(\mathbb{E}_t[\siprod{\wt\theta_t}{a}|a^*(\wt\theta_t)=a] - \siprod{\bar{\theta}(Q_t^+)}{a})^2 \leq \sum_{a \in \A}\mathbb{P}_t(a^*(\wt\theta_t)=a)s\|\mu_a - \bar{\theta}(Q_t^+)\|_2^2\,.
\end{equation*}

This, combined with the lower bound on $\SIG_t(\pmix_t)$ means that
\begin{align*}
\wh\Delta_t(\pFGTS_t) &\leq \sqrt{\sum_{a \in \A}\mathbb{P}_t(a^*(\wt\theta_t)=a)s\|\mu_a - \bar{\theta}(Q_t^+)\|_2^2}\\
&= \sqrt{\frac{2s}{\gamma C_{\min}}\frac{\gamma}{2}\sum_{a \in \A}\mathbb{P}_t(a^*(\wt\theta_t)=a)C_{\min}\|\mu_a - \bar{\theta}(Q_t^+)\|_2^2}\\
&\leq \sqrt{\frac{2s}{\gamma C_{\min}}\SIG_t(\pmix_t)}\,.
\end{align*}

Choosing $\gamma = 1$, this tells us that
\begin{equation*}
(\wh\Delta_t(\pFGTS_t))^2 \leq \frac{2s}{C_{\min}}\SIG_t(\mu)\,.
\end{equation*}

We bound the information ratio in three cases. First, suppose that $\wh\Delta_t(\mu) \leq \wh\Delta_t(\pFGTS_t)$. In this case,
\begin{equation*}
\SIR_t^{(3)}(\mu) = \frac{\wh\Delta_t(\mu)(\wh\Delta_t(\mu))^2}{\SIG_t(\mu)} \leq \frac{2(\wh\Delta_t(\pFGTS_t))^2}{\SIG_t(\mu)} \leq \frac{4s}{C_{\min}}\,. 
\end{equation*}

Next, we consider the case where $\wh\Delta_t(\mu) > \wh\Delta_t(\pFGTS_t)$. For any $\gamma \in (0, 1]$,
\begin{equation*}
\SIR_t^{(3)}(\pmix_t) = \frac{((1-\gamma)\wh\Delta_t(\pFGTS_t) + \gamma\wh\Delta_t(\mu))^3}{(1-\gamma)\SIG_t(\pFGTS_t) + \gamma\SIG_t(\mu)} \leq \frac{((1-\gamma)\wh\Delta_t(\pFGTS_t) + \gamma\wh\Delta_t(\mu))^3}{\gamma\SIG_t(\mu)}\,.
\end{equation*}

We define $f(\gamma) = ((1-\gamma)\wh\Delta_t(\pFGTS_t) + \gamma\wh\Delta_t(\mu))^3/(\gamma\SIG_t(\mu))$ to be the RHS of the previous equation. One can verify that the derivative of $f(\gamma)$ is
\begin{equation*}
f^{\prime}(\gamma) = \frac{((1-\gamma)\wh\Delta_t(\pFGTS_t) + \gamma\wh\Delta_t(\mu))^2}{\gamma^2\SIG_t(\mu)}\left[2\gamma(\wh\Delta_t(\mu) - \wh\Delta_t(\pFGTS_t)) - \wh\Delta_t(\pFGTS_t)\right]\,,
\end{equation*}

and that $f(\gamma)$ is minimised w.r.t.\ $\gamma > 0$ at $\wh\gamma$, where $\wh\gamma$ is the positive solution of $f^{\prime}(\wh\gamma) = 0$, which is
\begin{equation*}
\wh\gamma = \frac{\wh\Delta_t(\pFGTS_t)}{2(\wh\Delta_t(\mu) - \wh\Delta_t(\pFGTS_t))}\,.
\end{equation*}

That $\wh\gamma$ is always positive follows from the fact that $\wh\Delta_t(\mu) > \wh\Delta_t(\pFGTS_t)$. If $\wh\gamma \leq 1$, then we can take the forerunner to be the mixture policy with $\gamma = \wh\gamma$. In this case,
\begin{align*}
\SIR_t^{(3)}(\pmix_t) &= \frac{(\frac{3}{2})^32(\wh\Delta_t(\mu) - \wh\Delta_t(\pFGTS_t))\wh\Delta_t(\pFGTS_t)^2}{\SIG_t(\mu)}\\
&\leq \frac{(\frac{3}{2})^38s}{C_{\min}} = \frac{27s}{C_{\min}}\,.
\end{align*}

Otherwise, if $\wh\gamma > 1$, then
\begin{equation*}
\wh\Delta_t(\mu) \leq \frac{3}{2}\wh\Delta_t(\pFGTS_t)\,.
\end{equation*}

In this case, we can take the forerunner to be $\mu$. The surrogate 3-information ratio can then be upper bounded as
\begin{equation*}
\SIR_t^{(3)}(\mu) = \frac{\wh\Delta_t(\mu)(\wh\Delta_t(\mu))^2}{\SIG_t(\mu)} \leq \frac{2(\frac{3}{2})^2(\wh\Delta_t(\pFGTS_t))^2}{\SIG_t(\mu)} \leq \frac{(\frac{3}{2})^24s}{C_{\min}} = \frac{9s}{C_{\min}}\,.
\end{equation*}

Therefore, one can always find a value of $\gamma \in (0,1]$ such that
\begin{equation*}
\SIR_t^{(3)}(\pmix_t) \leq \frac{27s}{C_{\min}}\,.
\end{equation*}

\section{Choosing the learning rates}
This section is focused on the choice of the learning rates required to obtain the bound of
Theorem~\ref{thm:regret_main}.
\subsection{Technical tools}
We start by a collection of technical results to help with choosing a time-dependent learning rate.
\begin{lemma}
\label{lemma:implicit_RB}
	Let $ a_i \geq 0 $ and $ f: [0, \infty) \rightarrow [0, \infty) $ be a nonincreasing function. Then
\begin{equation}
\label{eq:implicit_RB}
	\sum_{t=1}^{T} a_t f\left(\sum_{i=0}^{t} a_i\right) \leq \int_{a_0}^{\sum_{t=0}^{T}a_t} f(x) \, dx .
\end{equation}
\end{lemma}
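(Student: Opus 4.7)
The plan is to recognize this as a standard Riemann-sum style comparison between a discrete sum and the integral of a monotone function. I would introduce the partial sums $S_t = \sum_{i=0}^{t} a_i$, so that $S_0 = a_0$, $S_T = \sum_{t=0}^{T} a_t$, and $S_t - S_{t-1} = a_t$ for every $t \geq 1$. Note that because each $a_i \geq 0$, the sequence $(S_t)_t$ is nondecreasing, so the intervals $[S_{t-1}, S_t]$ are well defined and partition $[S_0, S_T]$ (up to overlaps on a measure-zero set).

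Next, the key observation: on the interval $[S_{t-1}, S_t]$, monotonicity of $f$ gives $f(x) \geq f(S_t)$ for every $x \in [S_{t-1}, S_t]$. Integrating over this interval therefore yields
\begin{equation*}
\int_{S_{t-1}}^{S_t} f(x)\, dx \;\geq\; (S_t - S_{t-1})\, f(S_t) \;=\; a_t\, f(S_t).
\end{equation*}
Summing these inequalities over $t = 1, \ldots, T$, the right-hand side telescopes on the integral side to give
\begin{equation*}
\sum_{t=1}^{T} a_t f(S_t) \;\leq\; \sum_{t=1}^{T} \int_{S_{t-1}}^{S_t} f(x)\, dx \;=\; \int_{S_0}^{S_T} f(x)\, dx \;=\; \int_{a_0}^{\sum_{t=0}^{T} a_t} f(x)\, dx,
\end{equation*}
which is exactly the claim.

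There is no real obstacle here; the only thing that needs a brief sanity check is the case where $a_t = 0$ for some $t$, but then the interval $[S_{t-1}, S_t]$ collapses to a point, both sides of the step inequality equal zero, and the argument goes through unchanged. The proof requires no assumption beyond nonnegativity of the $a_i$ and monotonicity of $f$, and in particular does not need $f$ to be continuous (the relevant integrals exist since a monotone nonnegative function is Lebesgue integrable on every bounded interval).
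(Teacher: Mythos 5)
Your proof is correct and is essentially the same argument as the one in the reference cited by the paper (the paper itself simply points to Lemma~4.13 of Orabona's lecture notes rather than reproducing the proof): introduce the partial sums $S_t$, use monotonicity of $f$ to bound $a_t f(S_t)$ by $\int_{S_{t-1}}^{S_t} f(x)\,dx$, and telescope.
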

The proof follows from elementary manipulations comparing sums and integrals. The result is taken from Lemma~4.13
of \citet{Orabo_19}, where a complete proof is also supplied. The following lemma ensures that the learning rates are 
non-increasing.
\begin{lemma}
\label{lemma:force_decreasing}
	Let $ C_1>e, C_2 >0 $ and define $ \lambda_t = \frac{\log(C_1 t)}{C_2 t} $, then $ \lambda_t $ is a
	non-decreasing
	sequence.
\end{lemma}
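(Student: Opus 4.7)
The statement as written must contain a typographical slip: with $C_1 > e$ and $C_2 > 0$, the sequence $\lambda_t = \log(C_1 t)/(C_2 t)$ is a sequence of positive reals that tends to $0$, so it cannot be non-decreasing unless it is identically zero, which it is not. The paragraph immediately preceding the lemma makes the intent unambiguous (``The following lemma ensures that the learning rates are non-increasing''), and the role of the lemma in Appendix~\ref{app:joint_learning_rate} is to verify monotonicity hypotheses needed to apply Theorem~\ref{thm:Optimistic_posterior_RB} with a history-dependent learning rate. I therefore read the intended claim as: $\lambda_t$ is non-increasing in $t \geq 1$, and plan a proof of that statement.

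The plan is to lift the discrete sequence to the continuous function $f:[1,\infty)\to\mathbb{R}$ defined by $f(x) = \log(C_1 x)/(C_2 x)$ and to verify that $f'(x) \leq 0$ on $[1,\infty)$. By the quotient rule,
\begin{equation*}
f'(x) = \frac{\tfrac{1}{x} \cdot C_2 x - \log(C_1 x) \cdot C_2}{(C_2 x)^2} = \frac{1 - \log(C_1 x)}{C_2 x^2}.
\end{equation*}
The denominator is strictly positive, so the sign of $f'(x)$ coincides with the sign of the numerator $1 - \log(C_1 x)$, which is $\leq 0$ exactly when $C_1 x \geq e$.

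The hypothesis $C_1 > e$ does precisely the work needed: for every integer $t \geq 1$ we have $C_1 t \geq C_1 > e$, so $\log(C_1 t) > 1$ and hence $f'(t) < 0$ on the entire range $[1,\infty)$. Since $f$ is continuously differentiable with strictly negative derivative, it is strictly decreasing on $[1,\infty)$, and restricting to positive integers gives $\lambda_{t+1} < \lambda_t$ for every $t \geq 1$, yielding the (non-strict) monotonicity claim.

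There is no real obstacle here beyond a one-line calculus check; the only point worth emphasizing is why the hypothesis $C_1 > e$ appears at all, and the derivative computation makes this transparent: it is exactly the threshold that forces $\log(C_1 t) > 1$ starting from the smallest admissible index $t = 1$. If $C_1$ were smaller, the sequence would be non-monotone on an initial segment and monotonicity could only be asserted from some larger starting index $t \geq \lceil e/C_1 \rceil$, which would complicate the downstream application in the main proof.
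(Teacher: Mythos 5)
Your reading of the typo is right: the statement should say \emph{non-increasing}, as the surrounding text and the downstream use both make clear. Your proof is correct, but it takes a genuinely different route from the paper's. You lift $\lambda_t$ to the continuous function $f(x) = \log(C_1 x)/(C_2 x)$, compute $f'(x) = (1 - \log(C_1 x))/(C_2 x^2)$, and observe that the hypothesis $C_1 > e$ makes the numerator negative from $x=1$ onward, so $f$ is decreasing on $[1,\infty)$ and a fortiori the restriction to integers is non-increasing. The paper instead stays entirely discrete: it bounds the ratio of consecutive numerators,
\begin{equation*}
\frac{\log(C_1(t+1))}{\log(C_1 t)} = 1 + \frac{\log(1 + 1/t)}{\log(C_1 t)} \leq 1 + \frac{1/t}{\log(C_1 t)} \leq 1 + \frac{1}{t},
\end{equation*}
using $\log(1+x)\le x$ and then $\log(C_1 t)\ge \log C_1 \ge 1$, and notes that this matches the growth $(t+1)/t = 1+1/t$ of the denominator, so $\lambda_{t+1}/\lambda_t \le 1$. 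Both arguments pivot on the same fact — $\log(C_1 t)\ge 1$ for all $t\ge 1$, which is exactly why the threshold $C_1 > e$ (or $\ge e$) appears — so neither route is deeper than the other. Your calculus version is slightly more direct and immediately explains the threshold $e$ via the derivative's sign change; the paper's version is purely algebraic and avoids invoking differentiability or the mean value theorem, which some readers prefer when monotonicity of a sequence (rather than a function) is the target. Either is a perfectly good one-paragraph proof.
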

\begin{proof}
Let $ t>0 $, we have
\begin{equation*}
\frac{\log(C_1(t+1))}{\log (C_1 t)} = \frac{\log \left( C_1 t \left( \frac{t+1}{t} \right) \right)}{\log(C_1 t)} =
\frac{\log(C_1 t) + \log \left( \frac{t+1}{t} \right)}{ \log (C_1 t)} \leq 1 + \frac{1}{t \log(C_1 t)} \leq 1 +
\frac{1}{t},
\end{equation*}
where the first inequality uses $ \log(1+ x) \leq x $ for any $ x> -1  $ and the second inequality uses $ \log(C_1 t)
\geq  \log(C_1) \geq 1 $ because we assumed $ C_1 \geq e $. Since $ \frac{C_2 (t+1)}{C_2 t}= 1 + \frac{1}{t} $, we can
conclude that the sequence $ \lambda_t $ is non-increasing.
\end{proof}

\subsection{Data-rich regime: Proof of
Lemma~\ref{lemma:history_indep_learning_rates}}
\label{app:history_indep_learning_rates}
We start by focusing on the data rich regime, and we bound the following part of the regret 
bound given in Equation~\eqref{eq:RB_max}:
\begin{equation*}
	\frac{C_T}{\lambda_{T-1}} + \frac{32}{3}\sum_{t=1}^{T} \lambda_{t-1} \SIRs_t(\pi_t).
\end{equation*}
Here, $ C_T = 5 + 2s \log \frac{edT}{s} $. To proceed, we let $ \lambda_t = \alpha \sqrt{\frac{C_{t+1}}{d(t+1)}} $, 
where $ \alpha > 0$ is a constant that we will optimize later.
Because $ t \mapsto C_t $ is
increasing, we get that $ \lambda_{t-1} \leq \alpha
\sqrt{\frac{C_T}{dt}} $. By Lemma~\ref{lemma:IR_bound}, we know that for all $ t\geq 1 $, $ \SIRs_t(\pi_t) \leq 2d $,
hence
\begin{align*}
	\frac{C_T}{\lambda_{T-1}} + \frac{32}{3} \sum_{t=1}^{T} \lambda_{t-1} \SIRs_{t}(\pi_t) &\leq \frac{1}{\alpha}\sqrt{C_T
	dT} +
	\frac{64}{3} \alpha \sqrt{C_T} \sum_{t=1}^{T} \frac{d}{\sqrt{dt}}\\
& \leq \frac{1}{\alpha}\sqrt{C_T
	dT} +\frac{128}{3} \alpha \sqrt{C_TdT}\\
&\leq \left( \frac{1}{\alpha} + \frac{128}{3}\alpha \right) \sqrt{C_T d T}\\
&\leq 16 \sqrt{\frac{2}{3} C_T d T},
\end{align*}
where the second line uses the standard inequality $ \sum_{t=1}^{T} \frac{1}{\sqrt{t}} \leq  2 \sqrt{T} $, and the last 
line is obtained by optimizing the expression $ \left( \frac{1}{\alpha} + \frac{128}{3} \alpha \right) $ with the
optimal choice
$ \alpha = \sqrt{\frac{3}{128}}$ which yields the value $ 16 \sqrt{\frac{2}{3}} $. This concludes the proof of the claim.\qed

\subsection{Data-poor regime: proof of Lemma~\ref{lemma:history_indep_learning_rates}}
We now focus on the data-poor regime and specifically on bounding the following part of the bound given in 
Equation~\eqref{eq:RB_max}:
\begin{equation*}
	\frac{C_T}{\lambda_{T-1}} + \frac{16}{3} c_3^{*}\sum_{t=1}^{T} \sqrt{3\lambda_{t-1} \SIRc_t(\pi_t)}.
\end{equation*}
Here, $ C_T = 5 + 2s \log \frac{edT}{s} $. Now, we let $ \lambda_t = \alpha \left( \frac{C_{t+1}\sqrt{ 
C_{\min}}}{(t+1) \sqrt{s}}  \right)^{\frac{2}{3}} $, where $ \alpha > 0$ is a constant
that we will optimize later.
Because $ t \rightarrow C_t $ is
increasing, we get that $ \lambda_{t-1} \leq \alpha
\left(\frac{C_T \sqrt{C_{\min}}}{ts}\right)^\frac{2}{3} $.
By Lemma~\ref{lemma:IR_bound}, the 3-surrogate-information ratio is bounded for all $ t\geq 1 $ as $ \SIRc_t(\pi_t)
\leq \frac{54s}{C_{\min}} $. Hence,
the following holds:
\begin{align*}
	\frac{C_T}{\lambda_{T-1}} + \frac{16}{3} c_3^{*} \sum_{t=1}^{T} \sqrt{ 3\lambda_{t-1} \SIRc_{t}(\pi_t)} &\leq \frac{1}{\alpha}
	(C_T)^{\frac{1}{3}} \left( \frac{T \sqrt{s}}{\sqrt{C_{\min}}} \right)^{\frac{2}{3}} + 48c_3^{*}
	\sqrt{2\alpha}
	(C_T)^{\frac{1}{3}}\left( \frac{\sqrt{s}}{\sqrt{C_{\min}}} \right)^{\frac{2}{3}}  \sum_{t=1}^{T}
	\frac{1}{t^{\frac{1}{3}}}\\
&\leq \frac{1}{\alpha}
	(C_T)^{\frac{1}{3}} \left( \frac{T \sqrt{s}}{\sqrt{ C_{\min}}} \right)^{\frac{2}{3}} + 72 c_3^{*} \sqrt{2\alpha}
	(C_T)^{\frac{1}{3}}\left( \frac{T \sqrt{s}}{\sqrt{ C_{\min}}} \right)^{\frac{2}{3}}  \\
&\leq \left( \frac{1}{\alpha}+  72 c_3^{*} \sqrt{2\alpha}\right)
	(C_T)^{\frac{1}{3}} \left( \frac{T \sqrt{s}}{\sqrt{ C_{\min}}} \right)^{\frac{2}{3}}\\
&\leq  12 \cdot 6^{\frac{1}{3}}(C_T)^{\frac{1}{3}} \left( \frac{  T\sqrt{s}}{\sqrt{C_{\min}}} \right)^{\frac{2}{3}}.
\end{align*}
Here, we have applied Lemma~\ref{lemma:implicit_RB} with  
the function $ f(x) = x^{\frac{1}{3}} $and $ a_i = 1 $ to bound $ \sum_{t=1}^{T} {t^{-1/3}}  \leq \frac{3}{2} T^\frac{2}{3}$ in the 
second line, the last line comes
from the choice $\alpha = \frac{1}{4 \cdot 6^\frac{1}{3}}$ which optimizes the constant $ \left( \frac{1}{\alpha} + 72 c_3^{*}
 \sqrt{2\alpha}\right) $ (as per Lemma~\ref{lemma:Gen_AM_GM}). This proves the statement. \qed
 
\subsection{Joint learning rates, end of the proof of Theorem~\ref{thm:regret_main}}
\label{app:joint_learning_rate}
In the section below, we present the technical derivation related to choosing the choice of learning rate
$ \lambda_t=\min(\frac{1}{2},\max(\lambdas_t, \lambdac_t) ) $, where
$ \lambdas_t =  \sqrt{\frac{3C_{t+1}}{128d(t+1)}} $ and $ \lambdac_t = \frac{1}{4\cdot 6^\frac{1}{3}}\left( \frac{
	C_{t+1}\sqrt{C_{\min}}}{(t+1)\sqrt{s}}
\right)^{\frac{2}{3}} $, with $ C_t = 5 + 2s \log \frac{ ed t}{s} $. This choice interpolates between the data-rich and
data-poor regimes. As a first step, we start by confirming via Lemma~\ref{lemma:force_decreasing} that both $ 
\lambdas_t $ and $ \lambdac_t $ are non-increasing and the bound of Theorem~\ref{thm:Optimistic_posterior_RB} holds 
with our choice of $ \lambda_t $.

First, note that our choice of learning rates ensures that $\lambda_t \le \frac 
12$ holds as long as $T$ is larger than an absolute constant, and thus we focus on this case here (and relegate the 
complete details of establishing this absolute constant to Appendix~\ref{app:upper_bound_lr}). To proceed, 
we define the (constant-free) regret rates $ \Rs_t = \sqrt{C_tdt}
$ and  $ \Rc_t = \left( t \sqrt{s \frac{C_t}{C_{\min}}} \right)^\frac{2}{3}$ and note that they correspond to the regret 
bounds obtained when using the respective learning rates $ \lambdas_t $ and $ \lambdac_t $, as per
Lemma~\ref{lemma:history_indep_learning_rates}.

We now consider the last time that the learning rates $ \lambdac_t $ and $ \lambdas_t $  have been used. More
specifically, we denote 
$ T_2 = \max \{ t \leq T, \lambdas_{t-1} \geq \lambdac_{t-1}  \}  $, and $ T_3 = \max \{ t \leq T, \lambdac_{t-1} \geq
\lambdas_{t-1} \}
$.
Combining the bound of Equation~\ref{eq:RB_max} and using the definition $ \lambda_t =\min(\frac{1}{2},\max(\lambdas_t,
\lambdac_t) ) $, the following bound holds
\begin{align*}
	&\phantom{=}R_T\\
	&\leq \EE{\frac{C_T}{\lambda_{T-1}} + \sum_{t=1}^{T} \min\left(\frac{32}{3} \lambda_{t-1}\SIRs_t(\pi_t), 
					  \frac{16}{3}c_3^{*} \sqrt{3\lambda_{t-1}\SIRc_t(\pi_t)}\right)}\\
&=\mathbb{E}\left[\frac{C_T}{\min(\frac{1}{2},\max(\lambdas_{T-1}, \lambdac_{T-1}) )}\right.\\
&+ \left.\sum_{t=1}^{T} \min\left(\frac{32}{3}
	\min(\frac{1}{2},\max(\lambdas_{t-1}, \lambdac_{t-1}) )\SIRs_t(\pi_t), \frac{16}{3}
					  c_3^{*} \sqrt{ 3\min(\frac{1}{2},\max(\lambdas_{t-1}, \lambdac_{t-1})
			  )\SIRc_t(\pi_t)}\right)\right]\\
&\leq\EE{ C_T \min \left( \frac{1}{\lambdas_{T-1}}, \frac{1}{\lambdac_{T-1}} \right) + \sum_{t=1}^{T}
	\min\left(\frac{32}{3}
	\max(\lambdas_{t-1}, \lambdac_{t-1}) \SIRs_t(\pi_t), \frac{16}{3}
					  c_3^{*} \sqrt{3\max(\lambdas_{t-1}, \lambdac_{t-1} )\SIRc_t(\pi_t)}\right)}.
\end{align*}

We can now separate the sum obtained at the last line based on which learning rate was used at time t.
\begin{align*}
	&\phantom{=}\sum_{t=1}^{T}
	\min\left(\frac{32}{3}
	\max(\lambdas_{t-1}, \lambdac_{t-1}) \SIRs_t(\pi_t), \frac{16}{3}
					  c_3^{*} \sqrt{3\max(\lambdas_{t-1}, \lambdac_{t-1} )\SIRc_t(\pi_t)}\right) \\
	&\leq \sum_{\lambdas_{t-1} \geq \lambdac_{t-1}} \frac{32}{3} \lambdas_{t-1} \SIRs_t(\pi_t) + \sum_{\lambdac_t \geq \lambdas_t}
	\frac{16}{3} c_3^{*} \sqrt{3 \lambdac_{t-1} \SIRc_t(\pi_t)}\\ 
	&\leq \sum_{t=1}^{T_2} \frac{32}{3} \lambdas_{t-1} \SIRs_t(\pi_t) + \sum_{t=1}^{T_3}
	\frac{16}{3} c_3^{*} \sqrt{3 \lambdac_{t-1} \SIRc_t(\pi_t)}.
\end{align*}
Following exactly the same step as in the proof of Lemma~\ref{lemma:history_indep_learning_rates}, we further
bound $ \sum_{t=1}^{T_2} \frac{32}{3} \lambdas_{t-1} \SIRs_{t}(\pi_t) \leq  8\sqrt{\frac{2}{3}} \Rs_{T_2} $  and $
\sum_{t=1}^{T_3} \frac{16}{3} c_3^{*} \sqrt{3\lambdac_{t-1} \SIRc_t(\pi_t)} \leq  8 \cdot  6^\frac{1}{3}\Rc_{T_3} $.

The crucial observation is that which of $ \lambdac_T $ or $ \lambdas_T $ is bigger will determine whether $ \Rs_T $ or
$ \Rc_T $ is the term of leading order (up to some constants). More specifically,
Let $ T $ be such that $ \lambdas_{T-1} \geq \lambdac_{T-1} $ which means that $ \sqrt{\frac{3
C_{T}}{128 d T}} \geq \frac{1}{4 \cdot 6 ^\frac{1}{3}} \left( \frac{C_{T} \sqrt{C_{\min}}}{T \sqrt{s}} \right)^\frac{2}{3} $. Rearranging,
this implies that $ \sqrt{C_{T} d T} \leq \frac{6^\frac{5}{6}}{4} \left( T \sqrt{s \frac{C_{T}}{C_{\min}}} \right)^ \frac{2}{3} $,
which means that $ \Rs_{T} \leq \frac{6^\frac{5}{6}}{4} \Rc_{T} $.
Following the exact same steps, we also have that $ \lambdac_{T-1} \geq \lambdas_{T-1} $ implies that $ \Rc_{T} \leq
\frac{4}{6^\frac{5}{6}} \Rs_{T} $.
We apply this to the time $ T_2 $ in which $ \lambdas_{T_2-1} \geq \lambdac_{T_2 -1} $ by definition. we have that $
\Rs_{T_2} \leq \frac{6^\frac{5}{6}}{4}\Rc_{T_2}$ and putting this together with the previous bound, we have
\begin{align*}
	R_T &\leq \frac{C_T}{\lambdac_{T-1}} + 8\sqrt{\frac{2}{3}} \Rs_{T_2} + 8\cdot 6^\frac{1}{3} \Rc_{T_3} \\
	    &\leq 4 \cdot 6^\frac{1}{3} \Rc_T + 8\sqrt{\frac{2}{3}}\cdot \frac{6^\frac{5}{6}}{4}  \Rs_{T_2} + 8\cdot
	    6^\frac{1}{3}\Rc_{T_3} \\
	    &\leq 4\cdot 6^\frac{1}{3} \Rc_T + 4 \cdot 6^\frac{1}{3} \Rc_{T_2} + 8\cdot 6^\frac{1}{3} \Rc_{T_3} \\
	    &\leq 4\cdot 6^\frac{1}{3} \Rc_T +  4\cdot 6^\frac{1}{3}\Rc_{T} +  8\cdot 6^\frac{1}{3}\Rc_{T} \\
	    &\leq  16\cdot 6^\frac{1}{3}\Rc_T,
\end{align*}
where we use the fact that $ T \rightarrow \Rc_T $ is increasing and $ T_2 \leq T, T_3 \leq T $.

Using the same argument as before, we have that $ \lambdac_{T_3 -1} \geq \lambdas_{T_3 -1} $, and we can conclude that $
\Rc_{T_3} \leq  \frac{4}{6^\frac{5}{6}}\Rs_{T_3} $.

Putting this together, with the previous bound, we have
\begin{align*}
	R_T &\leq \frac{C_T}{\lambdas_{T-1}} + 8\sqrt{\frac{2}{3}} \Rs_{T_2} + 8\cdot 6^\frac{1}{3} \Rc_{T_3} \\
	    &\leq  8\sqrt{\frac{2}{3}}\Rs_T +  8\sqrt{\frac{2}{3}}\Rs_{T_2} + 8\cdot 6^\frac{1}{3}\cdot
	    \frac{4}{6^\frac{5}{6}}  \Rc_{T_3} \\
	    &\leq  8\sqrt{\frac{2}{3}}\Rs_T +  8\sqrt{\frac{2}{3}}\Rs_{T_2} + 16\sqrt{\frac{2}{3}} \Rs_{T_3} \\
	    &\leq  8\sqrt{\frac{2}{3}}\Rs_T +  8\sqrt{\frac{2}{3}}\Rs_{T} +  16 \sqrt{\frac{2}{3}}\Rs_{T} \\
	    &\leq   32 \sqrt{\frac{2}{3}}\Rs_T,
\end{align*}
where we use the fact that $ T \rightarrow \Rc_T $ is increasing and $ T_2 \leq T, T_3 \leq T $. Evaluating the
constants numerically yields $ 16 \cdot 6^\frac{1}{3} \approx 29.07 \leq 30 $ and  $ 32 \sqrt{\frac{2}{3}}\approx 26.13
\leq 27$.

\subsection{Upper bound on the learning rates}
\label{app:upper_bound_lr}

We now consider the case where the learning rates exceed $\frac 12$, and show that this only holds for small values of 
$T$. First, we have that $ \lambdas_{T-1} \leq \frac{1}{2}$ if 
\begin{equation*}
	\sqrt{\frac{3C_T}{128dT}} \leq \frac{1}{2}.
\end{equation*}
Rearranging the inequality and recalling $ C_T = 5 + 2s \log \frac{edT}{s} $, this is equivalent to
\begin{equation*}
	T \geq \frac{15}{32d} + \frac{3s}{16d} \log \frac{edT}{s}. 
\end{equation*}
Using the loose inequality $ \log \frac{edT}{s} \leq \frac{dT}{s} $, we get that this condition is satisfied for any $
T\geq 1 $.

Similarly, we have that $ \lambdac_{T-1} \leq \frac{1}{2} $ if
\begin{equation*}
	\frac{1}{4\cdot 6^\frac{1}{3}}\left( \frac{C_T \sqrt{C_{\min}}}{T \sqrt{s}} \right)^{\frac{2}{3}} \leq \frac{1}{2}.
\end{equation*}
We note that 
\begin{align*}
	C_{\min}&=\max_{\mu \in \Delta(A)} \sigma_{\min}(\EEs{A A^{\top}}{A \sim \mu}) 
		\leq \max_{\mu \in \Delta(A)} \frac{\text{Tr}(\EEs{A A^{\top}}{A \sim \mu})}{d} 
		\leq 1,
\end{align*}
where the first inequality uses that the trace of a matrix is always bigger than $ d $-times its smallest eigenvalue and
the second inequality uses the fact that for any vector $ a $, we have $\text{Tr}(aa^{\top}) = \sum_{i=1}^{d} a_i^2 \leq d 
\max_i
|a_i| \leq d $ because we assumed that all the actions are bounded in infinity norm.
Hence the previous inequality will be satisfied if 
\begin{equation*}
	\frac{1}{4\cdot 6^\frac{1}{3}}\left( \frac{C_T}{T \sqrt{s}} \right)^{\frac{2}{3}} \leq \frac{1}{2}.
\end{equation*}

Rearranging the inequality, this is equivalent to
\begin{equation*}
	T\geq 4 \sqrt{\frac{3}{s}} C_t  = 8\sqrt{3s}\log(eT) + \sqrt{3s}\left(\frac{20}{s} + 8 \log \frac{d}{s}\right).
\end{equation*}
Applying Lemma~\ref{lemma:W_log_bound} with $ a = 8\sqrt{3s} $ and $ b= \sqrt{3s} \left( \frac{20}{s} + 8 \log(\frac{d}{s}) \right)$, we find that the previous
inequality is satisfied for all
\begin{equation*}
	T\geq 2a \log ea + 2b = 40 \sqrt{\frac{3}{s}} + 16 \sqrt{3s}\log \frac{8e \sqrt{3} d}{\sqrt{s}}.
\end{equation*}

Thus, letting $T_{\min} = 40 \sqrt{\frac{3}{s}} + 16 \sqrt{3s}\log \frac{8e \sqrt{3} d}{\sqrt{s}}$ be the constant given above, both learning rates stay upper bounded by $ 
\frac{1}{2} $ for all $T \ge T_{\min}$ and the upper bound on the regret given the previous subsection holds. 
Otherwise, we upper bound the instantaneous regret
by $ 2 $ and this leads to an additional $ 2T_{\min} = \OO(\sqrt{s} \log \frac{d}{\sqrt{s}})$ in the regret.
Putting this together with the bound proved in the previous section, we thus have that the following regret 
bound is valid for any $ T\geq 1 $:
\begin{equation*}	
R_T \leq \min \left( 27 \sqrt{\left( 5 + 2s \log \frac{edT}{s} \right)d T}, 30 \left( 5 + 2s \log\frac{edT}{s}
\right)^{\frac{1}{3}} \left( \frac{T \sqrt{s}}{\sqrt{C_{\min}}} \right)^{\frac{2}{3}} \right) + \OO\left(\sqrt{s} \log
\frac{d}{\sqrt{s}}\right).
\end{equation*}
This concludes the proof of Theorem~\ref{thm:regret_main}. \qed

\setcounter{section}{8}
\section{Technical Results}
\label{app:technical_results}
We state and prove the remaining technical results.
\begin{lemma}
\label{lemma:lipschitzness}
Let  $ \pi \in \Delta(\A) $, the function $ \theta\mapsto \Delta(\pi, \theta) $ is 2-Lipschitz with respect to the
1 norm.
Let $ t\geq 1 $, the
function $ \theta \rightarrow \EE{\log\left(\frac{1}{p_t(Y_t|\theta,A_t)}\right)} $ is 2-Lipschitz with respect to the 1
norm. Moreover if $ \theta, \theta' $ are random variables, then 
\begin{equation*}
    \abs{\EE{\log \left( \frac{1}{p_t (Y_t|\theta, A_t)} \right) - \log \left( \frac{1}{p_t(Y_t|\theta',A_t)} \right)}} \leq
    \frac{5}{2} \sup \norm{\theta - \theta'}_1.
\end{equation*}

\end{lemma}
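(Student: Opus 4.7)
I would decompose $\Delta(\pi, \theta) = \max_{a \in \A}\siprod{\theta}{a} - \sum_{a \in \A}\pi(a)\siprod{\theta}{a}$. Since $\|a\|_\infty \leq 1$ for every $a \in \A$, Hölder's inequality makes each linear functional $\theta \mapsto \siprod{\theta}{a}$ exactly $1$-Lipschitz with respect to $\|\cdot\|_1$; this Lipschitz constant is preserved by the convex combination $\sum_a \pi(a)\langle\theta,a\rangle$ and by the max, via the classical inequality $\max_a f_a - \max_a g_a \leq \max_a (f_a - g_a)$. The triangle inequality then delivers the $2$-Lipschitz bound.

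\textbf{Plan for the second claim.} Expanding the Gaussian log-likelihood and factoring the difference of two squares gives the pointwise identity
\begin{equation*}
\log\tfrac{1}{p_t(Y_t|\theta, A_t)} - \log\tfrac{1}{p_t(Y_t|\theta', A_t)} = \siprod{\theta - \theta'}{A_t}\bigl(\siprod{\tfrac{\theta + \theta'}{2} - \theta_0}{A_t} - \epsilon_t\bigr),
\end{equation*}
obtained by writing $Y_t = \siprod{\theta_0}{A_t} + \epsilon_t$. When $\theta, \theta'$ are deterministic, the noise cross-term vanishes under expectation because $\EE{\epsilon_t \mid A_t, \F_{t-1}} = 0$. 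The remaining piece is controlled by two applications of Hölder, using $\|A_t\|_\infty \leq 1$, $\|\theta - \theta'\|_1$ for the first factor, and $\|\tfrac{\theta+\theta'}{2} - \theta_0\|_1 \leq 2$ for the second, producing the desired $2\|\theta - \theta'\|_1$ bound.

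\textbf{Plan for the third claim.} For random $\theta$ and $\theta'$, the same identity still holds pointwise, and the deterministic piece is bounded by $2\sup\|\theta - \theta'\|_1$ exactly as above. The noise cross-term $|\EE{\siprod{\theta - \theta'}{A_t}\epsilon_t}|$, however, no longer vanishes. I would control it by combining the pointwise Hölder bound $|\siprod{\theta - \theta'}{A_t}| \leq \sup\|\theta - \theta'\|_1$ with the conditional $1$-sub-Gaussianity of $\epsilon_t$, which yields $\EE{\epsilon_t^2} \leq 1$ and hence $\EE{|\epsilon_t|} \leq 1$ by Jensen. Adding this residual to the $2$ gives a total Lipschitz constant of the advertised order.

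\textbf{Main obstacle.} The only nontrivial part is squeezing the exact constant $5/2$ (rather than a looser $3$) from the noise cross-term: this presumably requires a sharper Cauchy--Schwarz accounting that exploits the additional bound $|\siprod{\theta - \theta'}{A_t}| \leq 2$ (independent of $\|\theta - \theta'\|_1$) against the second moment of $\epsilon_t$, or a moment bound that is tighter than $\EE{|\epsilon_t|} \leq 1$ for the sub-Gaussian noise at hand. The overall structure of the argument -- identity, Hölder, sub-Gaussian moment bound -- is otherwise standard.
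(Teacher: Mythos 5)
Your plan follows the paper's proof closely for all three claims: the first claim is exactly the Hölder-plus-max argument the paper gives, and the second and third both use the same difference-of-squares identity and the same split into a deterministic term and a noise cross-term.

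Your hesitation about the constant $\tfrac{5}{2}$ in the third claim is in fact well placed. Substituting $Y_t = r_0 + \epsilon_t$ into $\tfrac{1}{2}(r-r')(r + r' - 2Y_t)$ gives
\begin{equation*}
\tfrac{1}{2}(r-r')(r+r'-2r_0) - (r-r')\epsilon_t\,,
\end{equation*}
i.e.\ the noise cross-term has coefficient $1$, not $\tfrac{1}{2}$. The paper's own proof records that coefficient as $\tfrac{1}{2}$, which is how it arrives at $2 + \tfrac{1}{2} = \tfrac{5}{2}$; with the coefficient corrected, the deterministic term contributes $2$ and the Cauchy--Schwarz bound on the noise term contributes $1$, for a total of $3$, exactly as your analysis predicts. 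There is no sharper accounting hidden in the paper (the bound $|\siprod{\theta-\theta'}{A_t}| \le 2$ cannot be traded against the sub-Gaussian second moment to recover $\tfrac{5}{2}$, since the final bound must remain proportional to $\sup\|\theta-\theta'\|_1$). The discrepancy is harmless downstream — the lemma is only used to establish an order-of-magnitude Lipschitz bound feeding into covering and telescoping estimates — but as a matter of correctness your constant $3$ is what the method actually yields, and you should not try to squeeze $\tfrac{5}{2}$ out of it.
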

\begin{proof}
Let $ \theta, \theta' \in \Theta $, we have
\begin{align*}
	|r(\pi, \theta) - r(\pi, \theta')| &= \left\lvert \sum_{a \in \A} \pi(a) \siprod{\theta -
	\theta'}{a}\right\rvert\\
					    &\leq \sum_{a \in \A} \pi(a) |\siprod{\theta - \theta'}{a}|\\
					    &\leq \sum_{a \in \A} \pi(a) \norm{\theta - \theta'}_{1} \norm{a}_{\infty}\\
					    &\leq \norm{\theta - \theta'}_{1}.
\end{align*}
Similarly,
\begin{equation*}
	|r^{*}(\theta) - r^{*}(\theta')| = |\max_{a \in \A}r(a, \theta) - \max_{a \in \A}r(a, \theta')| \leq \max_{a\in \A}
	|r(\theta,a) - r(a, \theta')| \leq \norm{\theta - \theta'}_{1}.
\end{equation*}
Finally 
\begin{equation*}
	|\Delta(\pi, \theta) - \Delta(\pi, \theta')| = |r^{*}(\theta) - r^{*}(\theta') + r(\pi, \theta') - r(\pi,
	\theta)| \leq  2 \norm{\theta - \theta'}_{1}. 
\end{equation*}

Let us write $ r = \siprod{\theta }{A_t} $, $ r' = \siprod{\theta'}{A_t} $
and $ r_0 = \siprod{\theta_0}{A_t} $. We have that $ Y_t = r_0 + \epsilon_t $ where $ \epsilon_t $ is 1-sub-Gaussian.
For the negative log-likelihood, we then have
\begin{align*}
	\EE{\log \left( \frac{1}{p(Y_t |\theta,A_t)} \right) - \log \left( \frac{1}{p(Y_t| \theta', A_t)} \right)}
	&=\frac{1}{2}
	\EE{(\siprod{\theta}{A_t}- Y_t)^2 - (\siprod{\theta'}{A_t}- Y_t)^2}\\
										&=\frac{1}{2}\EE{(r-Y_t)^2 - (r' - Y_t)^2}\\
										&=\frac{1}{2}\EE{(r-r')(r + r' - 2
										Y_t)}\\
										&= \frac{1}{2} \EE{(r - r')(r + r' -
										2r_0)} + \frac{1}{2} \EE{(r-r')
                                                                                \epsilon_t}\\
                                                                                &\leq 2 \EE{\norm{ \theta - \theta'}_1}
                                                                                + \frac{1}{2}\EE{\norm{\theta-
                                                                                \theta'}_1 \abs{\epsilon_t}},
\end{align*}
where we use the fact that $ \abs{r + r' - 2 r_0}\leq 4 $.

Now if $ \theta, \theta' $ are fixed, since $ \EE{\epsilon_t} = 0 $, we get 
\begin{equation*}
    \EE{\log \left( \frac{1}{p(Y_t |\theta,A_t)} \right) - \log \left( \frac{1}{p(Y_t| \theta', A_t)} \right)} \leq 2
    \norm{\theta - \theta' }_1
\end{equation*}
If $ \theta, \theta' $ are random variables, the subgaussianity of $ \epsilon_t $ implies that $ \EE{\epsilon_t^2} \leq
1 $ and by Cauchy-Schwarz we have
\begin{equation*}
    \EE{\norm{\theta- \theta'}_1 \abs{\epsilon_t}} \leq \sqrt{\EE{ \norm{\theta- \theta'}_1^2} \EE{\epsilon_t^2}} \leq
    \sup{\norm{\theta - \theta'}_1}.
\end{equation*}
This yields the second claim of the Lemma.
\end{proof}

\begin{lemma}
(Hoeffding's Lemma)
\label{lemma:Hoeffdings_Inequality}
	Let $ X $ be a bounded real random variable such that $ X \in [a,b] $ almost surely. Let $\eta \neq 0  $, then we have
\begin{equation}
\label{eq:Hoeffidng}
	\frac{1}{\eta}\log \EE{\exp{(\eta X)}} \leq \EE{X} + \frac{\eta(b-a)^2}{8}.
\end{equation}

\end{lemma}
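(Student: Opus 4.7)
My plan is to invoke the classical derivation of Hoeffding's moment-generating-function bound and then divide by $\eta$. The first step uses the convexity of $\exp$ on $[a,b]$: for any $x \in [a,b]$,
\begin{equation*}
e^{\eta x} \;\leq\; \frac{b-x}{b-a}e^{\eta a} + \frac{x-a}{b-a}e^{\eta b}.
\end{equation*}
Taking expectations and setting $p = (\EE{X} - a)/(b-a) \in [0,1]$ yields $\EE{e^{\eta X}} \leq (1-p)e^{\eta a} + p\,e^{\eta b}$. Taking logarithms, factoring out $e^{\eta a}$, and substituting $u = \eta(b-a)$ rewrites this as $\log \EE{e^{\eta X}} \leq \eta a + \log(1 - p + p\,e^u)$.

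The heart of the argument is to show that $g(u) := \log(1 - p + p e^u) - pu$ satisfies $g(u) \leq u^2/8$ uniformly in $p \in [0,1]$. I would verify this via Taylor's theorem with Lagrange remainder: direct computation gives $g(0) = 0$, $g'(0) = 0$, and $g''(u) = q(u)(1 - q(u))$ where $q(u) = p e^u/(1 - p + p e^u) \in (0,1)$, so that $g''(u) \leq 1/4$ for every $u$. Taylor's theorem then gives $g(u) \leq u^2/8$, hence
\begin{equation*}
\log \EE{e^{\eta X}} \;\leq\; \eta a + p\,\eta(b-a) + \frac{\eta^2(b-a)^2}{8} \;=\; \eta\,\EE{X} + \frac{\eta^2(b-a)^2}{8},
\end{equation*}
using $a + p(b-a) = \EE{X}$. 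Dividing by $\eta$ yields the claim.

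The only substantive step is the uniform bound $g'' \leq 1/4$, which reduces to the elementary inequality $q(1-q) \leq 1/4$ on $[0,1]$; everything else is routine algebra and the convexity of $\exp$. A minor subtlety is that dividing the MGF bound by $\eta$ preserves the inequality only when $\eta > 0$, so the hypothesis $\eta \neq 0$ should be read as implicitly $\eta > 0$ (a simple two-point example shows the bound as stated can fail when $\eta$ is negative, and one would then have to keep $\eta^2$ on the right-hand side instead).
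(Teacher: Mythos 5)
The paper does not supply its own proof here — it simply cites Chapter 2 of Boucheron, Lugosi, and Massart — and your argument is exactly the standard proof that appears there (convexity of $\exp$ on $[a,b]$, reduction to the scalar function $g(u)=\log(1-p+pe^u)-pu$, Taylor with $g''\le 1/4$), so there is no methodological divergence to report. Your closing remark about the sign of $\eta$ is a genuine and correct observation: the classical result bounds $\log\EE{e^{\eta X}}$ by $\eta\EE{X}+\eta^2(b-a)^2/8$ for \emph{all} real $\eta$, and dividing by $\eta$ to obtain the form in Equation~\eqref{eq:Hoeffidng} reverses the inequality when $\eta<0$ (a symmetric two-point distribution already gives a counterexample). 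The hypothesis in the lemma should therefore read $\eta>0$ rather than $\eta\neq 0$; this does not affect the paper, since the companion Lemma~\ref{lemma:data_dep_hoeffding} is the one actually invoked and is only ever applied with positive learning rates.
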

\begin{proof}
See for instance Chapter 2 in \citet{Bouch_L_M13}. 
\end{proof}
We now provide a data dependent version of Hoeffding's lemma that is used in the analysis of the gaps in the optimistic
posterior.

\begin{lemma}
(A data dependent version of Hoeffding's Lemma)
\label{lemma:data_dep_hoeffding}
Let $ X $ be a real random variable and $ \eta \neq 0 $ be such that $ \eta X \leq 1 $ almost surely, then we have
\begin{equation}
\label{eq:data_dep_hoeffding}
	\frac{1}{\eta} \log \EE{\exp{(\eta X)}} \leq \EE{X} + \eta \EE{X^2} \leq  2 \EE{X}.
\end{equation}
\end{lemma}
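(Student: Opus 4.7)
The plan is to follow the standard quadratic Taylor-style bound on the moment generating function, sharpened by the fact that the truncation only happens from above (via $\eta X \leq 1$) rather than two-sidedly as in Hoeffding's lemma.

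First, I would establish the elementary pointwise inequality
\begin{equation*}
e^u \leq 1 + u + u^2 \qquad \text{for all } u \leq 1.
\end{equation*}
This can be verified by studying $g(u) = 1 + u + u^2 - e^u$: we have $g(0) = 0$, $g'(u) = 1 + 2u - e^u$ with $g'(0) = 0$, and $g''(u) = 2 - e^u \geq 0$ on $(-\infty,\log 2]$ while $g''(u) \leq 0$ beyond. A direct check of $g(1) = 3 - e > 0$ together with convexity on $(-\infty, \log 2]$ and concavity on $[\log 2, 1]$ shows $g \geq 0$ on $(-\infty, 1]$. Applying this pointwise to $u = \eta X$, which is $\leq 1$ almost surely by assumption, and taking expectations gives
\begin{equation*}
\EE{\exp(\eta X)} \leq 1 + \eta \EE{X} + \eta^2 \EE{X^2} \leq \exp\bpa{\eta \EE{X} + \eta^2 \EE{X^2}},
\end{equation*}
where the second step uses $1 + y \leq e^y$. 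Taking logarithms and dividing by $\eta > 0$ yields the first inequality $\frac{1}{\eta}\log \EE{\exp(\eta X)} \leq \EE{X} + \eta \EE{X^2}$.

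For the second inequality $\EE{X} + \eta \EE{X^2} \leq 2 \EE{X}$, the constraint $\eta X \leq 1$ implies $\eta X^2 = X \cdot (\eta X) \leq X$ whenever $X \geq 0$, so taking expectations gives $\eta \EE{X^2} \leq \EE{X}$. This non-negativity is implicitly used in the intended application (where $X = \Delta(A_t,\theta) \geq 0$ is the gap, which is the setting invoked in Lemma~\ref{lemma:gap_bound}).

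There is no real obstacle here beyond the pointwise inequality $e^u \leq 1 + u + u^2$ on $u \leq 1$; once that is in hand, the rest is a routine MGF computation. The main subtlety to flag in the write-up is the sign of $\eta$: dividing by $\eta$ reverses inequalities, so the bound as stated is really a statement about $\eta > 0$, which is the only case used in the rest of the paper (where $\eta = \beta \lambda_{t-1} > 0$).
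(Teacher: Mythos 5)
Your proof is correct and takes essentially the same route as the paper's: both rest on the pointwise bound $e^u \leq 1 + u + u^2$ for $u \leq 1$ combined with $\log x \leq x-1$ (equivalently, your $1+y\leq e^y$ step). One small but genuine improvement on your part: the paper's own proof stops after establishing $\frac{1}{\eta}\log\EE{\exp(\eta X)} \leq \EE{X} + \eta\EE{X^2}$ and never argues the final step $\leq 2\EE{X}$, whereas you supply it via $\eta X^2 = X\cdot(\eta X) \leq X$, correctly flagging that this requires $X\geq 0$ (and that the whole chain requires $\eta>0$) --- both of which hold in the intended application $X = \Delta(A_t,\theta)$, $\eta = \beta\lambda_{t-1}$, but are not explicit in the lemma statement.
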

\begin{proof}
	Using the elementary inequalities $ \log(x) \leq x-1 $ for $ x>0 $ and $ e^x \leq 1 + x + x^2 $ for $ x\leq 1 $,
	we get that
\begin{align*}
	\frac{1}{\eta}\log \EE{\exp{(\eta X)}} &\leq \frac{1}{\eta}\EE{\exp(\eta X)-1} \\
					       &\leq \frac{1}{\eta} \EE{\eta X + \eta^2 X^2} \\
					       &\leq \EE{X} + \eta \EE{X^2}.
\end{align*}

\end{proof}

The following lemmas help us to analyze when the learning rates are smaller or bigger than $ \frac{1}{2} $.

\begin{lemma}
\label{lemma:W_log_bound}
Let $ a\geq 1, b\geq 0 $, then, the equation $ t\geq a \log et + b $ is verified for any $ t\geq 2a \log ea + 2b $ .
\end{lemma}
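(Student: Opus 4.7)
The plan is to reduce this to a clean elementary inequality. The key observation is that the inequality $t \geq a\log(et) + b$ can be loosened by bounding $a\log(et)$ linearly in $t$, trading a logarithmic factor for a constant. Specifically, I would apply the standard inequality $\log x \leq x - 1$ to $x = t/(2a)$, which yields
\begin{equation*}
\log t \leq \frac{t}{2a} + \log(2a) - 1.
\end{equation*}
Multiplying by $a$ and adding $a$ gives the crucial bound
\begin{equation*}
a \log(et) = a + a\log t \leq \frac{t}{2} + a\log(2a).
\end{equation*}

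With this bound in hand, the desired conclusion $t \geq a\log(et) + b$ follows if $\frac{t}{2} + a\log(2a) + b \leq t$, i.e., if $t \geq 2a\log(2a) + 2b$. It remains only to check that the hypothesis $t \geq 2a\log(ea) + 2b$ is at least as strong. Since $\log(ea) = 1 + \log a$ and $\log(2a) = \log 2 + \log a$, and $1 \geq \log 2$, we have $2a\log(ea) \geq 2a\log(2a)$, so the hypothesis implies what is needed.

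There is no real obstacle here: the only judgment call is choosing the right linearization constant (taking $c = 2a$ in $\log x \leq x/c + \log c - 1$) so that the leading-order $t$ term comes out with coefficient exactly $1/2$, leaving room to absorb both the $a\log(2a)$ term and the $b$ term into the two halves of $t$. The factor of $e$ rather than $2$ inside the logarithm in the hypothesis is what provides the (small) slack $1 - \log 2 > 0$ needed to make the comparison go through.
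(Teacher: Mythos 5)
Your proof is correct, and it takes a genuinely different route from the paper. The paper proves the lemma by analyzing the exact solution $\alpha$ of $t = a\log(et) + b$: it shows the solution is unique on $[a,\infty)$, rewrites the equation in the form $g(-\alpha/a) = -e^{-(a+b)/a}/a$ where $g(x) = xe^x$, inverts via the $-1$ branch of the Lambert $W$ function, and then proves a separate bound $-W_{-1}(-1/y) \leq 2\log y$ for $y \geq e$. Your argument replaces all of this with a single application of $\log x \leq x - 1$ at $x = t/(2a)$, which linearizes $a\log(et)$ by $t/2 + a\log(2a)$, after which the conclusion is a one-line absorption into the two halves of $t$; the final comparison $\log(2a) \leq \log(ea)$ (using $\log 2 < 1$) closes the gap to the stated hypothesis. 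Your approach is more elementary and self-contained, avoiding any appeal to the Lambert $W$ function, and it exposes clearly where the factor of $2$ and the $e$ inside the logarithm come from (the linearization coefficient and the slack $1 - \log 2$, respectively). The paper's Lambert $W$ route, by contrast, identifies the exact threshold before bounding it, which is conceptually tighter in the sense that it isolates the bound on $W_{-1}$ as the only lossy step—but for the purposes of this lemma your direct argument is cleaner and yields the identical constant. One could even slightly improve the constant by linearizing at $x = t/(ea)$ instead, but since the statement asks only for $2a\log(ea) + 2b$, the choice $c = 2a$ is exactly what is needed.
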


\begin{proof}
We let $ f(t) = t - a \log et - b $, we have that $ f'(t) \geq 0 $ on $ [a, + \infty) $ and $ f(a) \leq 0 $. Hence $
f(t)=0 $ has a unique solution $ \alpha $ on $ [a, \infty) $ such that $ f(t) \geq 0$ if $t \geq \alpha $. We now focus on
upper bounding $ \alpha $.
The equation $ f(\alpha)= 0  $ is equivalent to
\begin{equation*}
	\log \alpha = \frac{\alpha-b}{a} - 1.
\end{equation*}
Now taking the exponential and reordering this is also equivalent to
\begin{equation*}
	\frac{-\alpha}{a} \exp\left(\frac{-\alpha}{a}\right) = \frac{-\exp \left( -\frac{a+b}{a} \right)}{a}.
\end{equation*}
Let
\begin{align*}
	g: (- \infty, -1]& \longrightarrow [-\frac{1}{e}, 0) \\
x &\longmapsto x e^x .
\end{align*}
The previous equation can be rewritten $ g\left(\frac{-\alpha}{a}\right) = - \frac{\exp \left( -\frac{a+b}{a} \right)}{a}  $. We define $ W_{-1}:[-\frac{1}{e},0) \longrightarrow (- \infty, 1] $ as the(functional) inverse of $g$. The function $g$ is the $ -1 $
branch of the Lambert W function.

We have that for any $ x \leq -1 $,
$ W_{-1}(x e^x) = x $ and that for any $ y\geq e $,  $ -W_{-1}(-\frac{1}{y}) \leq 2 \log(y) $.
	Since g is decreasing on its domain, $ W_{-1} $ is well-defined and decreasing. Moreover, for any $ x\leq -1 $ ,
	$ W_{-1}(g(x)) = x $ .
In particular, we have that $ \alpha = -a W_{-1}\left( - \frac{\exp \left( -\frac{a+b}{a} \right)}{a} \right) $. We will use
that formulation to find an upper bound on $ \alpha $.

We fix some $ y\geq e $. We have $ - 2 \log(y) \leq -1 $ hence $ W_{-1}\left(- 2 \log(y) e^{(- 2 \log(y))}\right) = - 2
\log(y) $, which means that $ 2 \log(y) = - W_{-1}(- \frac{1}{y^{*}}) $ where $ y^{*} = \frac{e^{(2 \log(y))}}{2
\log(y)} = \frac{y^2}{2 \log(y)} $.

Because of the elementary inequality $ 2 \log(x) \leq x $ for $ x>0 $, we conclude that $ y \leq y^{*} $.
Since $ y \mapsto -W_{-1}(-\frac{1}{y}) $ is an increasing function we finally have that for any $ y \geq e $
\begin{equation*}
	-W_{-1}\left(-\frac{1}{y}\right) \leq -W_{-1}\left(- \frac{1}{y^{*}}\right) = 2 \log(y).
\end{equation*}
Applying this to $ y = a \exp \left( \frac{a+b}{a} \right)\geq e $, we get
\begin{equation*}
	\alpha = -aW_{-1}\left(\frac{-1}{y}\right) \leq 2a \log(y) = 2a \log ea + 2b.
\end{equation*}
Since any $ t\geq  \alpha $ will satisfy $ f(t) \geq 0 $, this concludes our proof.

\end{proof}

\begin{lemma}
\label{lemma:sub_gaussian_martingale}
Let $ \theta \in \Theta $, then $ M_t = \exp(\Llik_t(\theta_0) - \Llik_t(\theta)) = \prod_{s=1}^t
\frac{p(Y_t|\theta,A_t)}{p(Y_t|\theta_0,A_t)} $ is a supermartingale with respect to the filtration $ (\F_t)_t $.
\end{lemma}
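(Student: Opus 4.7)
The plan is to show the one-step supermartingale property $\EEc{M_t}{\F_{t-1}} \leq M_{t-1}$ directly, using the explicit Gaussian form of the likelihood ratio and the conditional sub-Gaussianity of the noise. Since $M_t$ is a product, I write $M_t = M_{t-1} \cdot \frac{p(Y_t|\theta,A_t)}{p(Y_t|\theta_0,A_t)}$, and note that $M_{t-1}$ is non-negative and $\F_{t-1}$-measurable. Hence it suffices to prove that the conditional expectation of the single-step ratio is at most $1$.

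To handle the fact that $A_t$ need not be $\F_{t-1}$-measurable but only $\sigma(\F_{t-1},A_t)$-measurable, I would use the tower property and condition on the enlarged $\sigma$-algebra $\F_{t-1}' = \sigma(\F_{t-1},A_t)$. On this event, $A_t$ is fixed, and I write $Y_t = \siprod{\theta_0}{A_t} + \epsilon_t$ where $\epsilon_t$ is conditionally $1$-sub-Gaussian. Denoting $r = \siprod{\theta}{A_t}$ and $r_0 = \siprod{\theta_0}{A_t}$, a direct algebraic manipulation gives
\begin{equation*}
\frac{p(Y_t|\theta,A_t)}{p(Y_t|\theta_0,A_t)} = \exp\!\left(-\tfrac{1}{2}\bpa{(Y_t - r)^2 - (Y_t - r_0)^2}\right) = \exp\!\left(-\tfrac{1}{2}(r-r_0)^2 + (r-r_0)\epsilon_t\right).
\end{equation*}

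Taking the conditional expectation with respect to $\F_{t-1}'$ and using the conditional $1$-sub-Gaussianity of $\epsilon_t$,
\begin{equation*}
\EEc{\exp\bpa{(r-r_0)\epsilon_t}}{\F_{t-1}'} \leq \exp\!\left(\tfrac{1}{2}(r-r_0)^2\right),
\end{equation*}
so the two exponentials cancel and we get $\EEc{p(Y_t|\theta,A_t)/p(Y_t|\theta_0,A_t)}{\F_{t-1}'} \leq 1$. Applying the tower property then gives $\EEc{p(Y_t|\theta,A_t)/p(Y_t|\theta_0,A_t)}{\F_{t-1}} \leq 1$, which when multiplied by the $\F_{t-1}$-measurable quantity $M_{t-1}$ yields $\EEc{M_t}{\F_{t-1}} \leq M_{t-1}$, the supermartingale property. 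Non-negativity of $M_t$ and integrability (which follows since $\EE{M_t} \leq \EE{M_0} = 1$ by iterating the supermartingale property) close the argument.

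The only subtlety, which I would treat as the main obstacle, is being careful about which $\sigma$-algebra to condition on: $A_t$ is chosen using internal randomization and is not in general $\F_{t-1}$-measurable, so the step through the enlarged filtration $\F_{t-1}' = \sigma(\F_{t-1},A_t)$ is essential for invoking the conditional sub-Gaussian bound on $\epsilon_t$. Everything else is a routine expansion of Gaussian log-likelihoods.
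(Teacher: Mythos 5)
Your proof is correct and follows essentially the same route as the paper: condition on the enlarged $\sigma$-algebra $\sigma(\F_{t-1}, A_t)$, expand the Gaussian log-likelihood ratio as $\exp(-\tfrac{1}{2}(r-r_0)^2 + (r-r_0)\epsilon_t)$, apply the conditional $1$-sub-Gaussian moment-generating-function bound, and conclude via the tower rule. You are slightly more explicit than the paper about factoring out the $\F_{t-1}$-measurable $M_{t-1}$ and about integrability, but these are minor elaborations of the same argument.
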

\begin{proof}
We have 
\begin{align*}
	\EEc{\frac{p(Y_t|\theta, A_t)}{p(Y_t|\theta_0, A_t)}}{\F_{t-1}, A_t} &= \EEc{\exp \left( \frac{(\siprod{\theta_0}{A_t}-
	Y_t)^2 - (\siprod{\theta}{A_t}- Y_t ^ 2)}{2} \right)}{\F_{t-1}, A_t}\\
&= \EEc{\exp \left( \frac{\epsilon_t^2 - (\siprod{\theta - \theta_0}{A_t} - \epsilon_t)^2}{2} \right)}{\F_{t-1}, A_t}\\
&= \exp \left( - \frac{(\siprod{\theta - \theta_0}{A_t})^2}{2} \right)\EEc{\exp \left( \epsilon_t \siprod{\theta -
\theta_0}{A_t} \right)}{\F_{t-1}, A_t}\\
& \leq \exp\left( - \frac{(\siprod{\theta- \theta_0}{A_t})^2}{2} \right) \cdot \exp\left( \frac{(\siprod{\theta-
\theta_0}{A_t})^2}{2} \right)\\
&=1,
\end{align*}
where the inequality comes from the conditional subgaussianity of $ \epsilon_t $.
Finally, by the tower rule of conditional expectations
\begin{equation*}
	\EEc{\frac{p(Y_t|\theta,A_t)}{p(Y_t|\theta_0, A_t)}}{\F_{t-1}} =
\EEc{\EEc{\frac{p(Y_t|\theta,A_t)}{p(Y_t|\theta_0, A_t)}}{\F_{t-1}, A_t}}{\F_{t-1}}  \leq 1.
\end{equation*}

\end{proof}

\subsection{Proof of Proposition~\ref{prop:SIG_TIG}}
\label{app:SIG_TIG}
This is coming from the fact that the mean is the constant minimizing the mean squared error.
We remind the reader of the definition of the surrogate information gain and the true information gain for a policy $
\pi \in \Delta(\A) $ 
\begin{equation}
	\SIG_t(\pi ) = \frac{1}{2} \sum_{a \in \A} \pi(a) \int_{\Theta}(\siprod{\theta - \bar{\theta}(Q_t^{+})}{a} )^2 \, \mathrm{d}Q_t^+(\theta),
\end{equation}
where $ \bar{\theta}(Q_t^{+}) = \EEs{\theta}{\theta \sim Q_t^{+}}$ is the mean parameter under the optimistic posterior
$ Q_t^{+} $.

\begin{equation}
	\IG_t(\pi) = \frac{1}{2} \sum_{a \in \A} \pi(a) \int_{\Theta}(\siprod{\theta}{a} - \siprod{\theta_0}{a})^2 \,
	\mathrm{d}Q_t^{+}(\theta),
\end{equation}
Let's fix $ a \in \A $, we have that
\begin{align*}
	(\siprod{\theta- \theta_0}{a})^2 &= (\siprod{\theta - \bar{\theta}(Q_t^+) + \bar{\theta}(Q_t^+)- \theta_0}{a})^2 \\
					 &= (\siprod{\theta - \bar{\theta}(Q_t^+)}{a})^2 + 2\siprod{\theta-
					 \bar{\theta}(Q_t^+)}{a} \siprod{\bar{\theta}(Q_t^+) - \theta_0}{a} +
					 (\siprod{\bar{\theta}(Q_t^+)- \theta_0}{a})^2\\
					 &\geq (\siprod{\theta - \bar{\theta}(Q_t^+)}{a})^2 + 2\siprod{\theta-
					 \bar{\theta}(Q_t^{+})}{a} \siprod{\bar{\theta}(Q_t^{+}) - \theta_0}{a}
\end{align*}
Now using that $ \bar{\theta}(Q_t^{+}) = \int_{\Theta} \theta \, dQ_t^{+}(\theta) $ and integrating, we get

\begin{equation*}
	\int_\Theta (\siprod{\theta - \theta_0}{a})^2 \, \mathrm{d}Q_t^{+}(\theta) \geq \int_\Theta (\siprod{\theta -
	\bar{\theta}(Q_t^{+})}{a})^2 \, \mathrm{d}Q_t^{+}(\theta).	
\end{equation*}

Multiplying by $ \pi(a) $ and summing over actions, we get the claim of the lemma.

\subsection{Generalization of the AM-GM inequality}
\label{app:gen_am_gm}

Dealing with the generalized information ratio requires bounding the cubic root of products. While one could use
Hölder's inequality to deal directly with products, we find it more flexible to use a variational form of this inequality.
In all that follows, we let $ p>1 $ be a real number and $ q $ be such that $ \frac{1}{p} + \frac{1}{q} = 1 $. It is not
hard to check that $ q = \frac{p}{p-1} $.
We start by stating a direct consequence of the Fenchel-Young Inequality which can be seen as an extension of the AM-GM
inequality.
\begin{lemma}
\label{lemma:FY_xp}
	Let $ x,y \geq 0 $, then 
\begin{equation}
\label{eq:FY_xp}
	xy \leq \frac{x^p}{p} + \frac{y^q}{q}.
\end{equation}
With equality if and only if $ p x^{p-1} = y $
\begin{proof}
	One can check that the Fenchel dual of the function 
\begin{align*}
f:& \real^{+} \longrightarrow \real^{+} \\
&x \longmapsto \frac{x^p}{p} 
\end{align*}
is exactly $ f^{*}(y) = \frac{1}{q} y^q $ (for non-negative $y$). Then the Lemma is a direct consequence of the Fenchel Young
inequality and of its equality case.
\end{proof}

\end{lemma}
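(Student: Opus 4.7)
The plan is to derive this classic inequality (Young's inequality for products) directly from the Fenchel--Young inequality, which matches the convex-analytic style already exploited throughout the paper (e.g.\ in the proof of Lemma~\ref{lemma:FTRL_decomposition}).

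First, I would define $f : \real^+ \to \real^+$ by $f(x) = x^p/p$ and compute its Fenchel conjugate
$f^*(y) = \sup_{x \geq 0}\{xy - f(x)\}$ for $y \geq 0$. Since $f$ is convex and continuously differentiable on $\real^+$, the supremum is attained at the unique stationary point where $f'(x) = y$, namely $x = y^{1/(p-1)} = y^{q-1}$. Plugging this back in and using the identity $p(q-1) = q$ (which follows from $\frac{1}{p}+\frac{1}{q}=1$), a short computation gives
\begin{equation*}
    f^*(y) = y \cdot y^{q-1} - \tfrac{1}{p} y^{p(q-1)} = y^q - \tfrac{1}{p} y^q = \tfrac{1}{q} y^q.
\end{equation*}

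Second, I would invoke the Fenchel--Young inequality, which states that $xy \leq f(x) + f^*(y)$ for every $x,y$ in the respective domains, with equality if and only if $y \in \partial f(x)$. Combining with the formula for $f^*$ computed above yields the desired bound $xy \leq x^p/p + y^q/q$. For the equality case, since $f$ is differentiable on the interior of its domain, Fenchel--Young equality reduces to $y = f'(x) = x^{p-1}$, which characterizes when the bound is tight.

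The argument has essentially no difficulty: the only potentially delicate step is the exponent bookkeeping in the evaluation of $f^*$, where one must keep track of the conjugacy identity $p(q-1) = q$. Alternative routes of equal brevity include weighted AM--GM applied to $x^p$ and $y^q$ with weights $1/p$ and $1/q$, or the concavity of the logarithm via $\log(\tfrac{x^p}{p} + \tfrac{y^q}{q}) \geq \tfrac{1}{p}\log x^p + \tfrac{1}{q}\log y^q = \log(xy)$. I prefer the Fenchel--Young route because it is consistent with the dual-based tools used elsewhere in the paper, and because it makes the equality condition transparent via the subdifferential characterization.
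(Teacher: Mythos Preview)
Your proposal is correct and follows exactly the same route as the paper's proof: compute the Fenchel conjugate of $f(x)=x^p/p$ to be $f^*(y)=y^q/q$ and invoke the Fenchel--Young inequality, with your version simply filling in the calculation the paper omits. Note that your equality condition $y=f'(x)=x^{p-1}$ is the correct one; the ``$p x^{p-1}=y$'' appearing in the lemma statement looks like a typo, since $f'(x)=x^{p-1}$, not $px^{p-1}$.
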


Refining a bit this Lemma, we get the following variational form of the previous inequality :
\begin{lemma}
\label{lemma:Gen_AM_GM}
Let $ x,y \geq 0, \lambda > 0 $, then
\begin{equation}
\label{eq:Gen_AM_GM}
	\sqrt[p]{xy} \leq \frac{x}{\lambda} + c_p^{*} (\lambda y)^{\frac{1}{p-1}}
\end{equation}
where $ c_p^{*} = (p-1)\frac{1}{p} ^{\frac{p}{p-1}} $ with equality if and only if $ x=y=0 $ or $ \lambda = p
\frac{x^{\frac{p-1}{p}}}{y^{\frac{1}{p}}} $.
\end{lemma}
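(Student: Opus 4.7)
The plan is to reduce this variational inequality to the basic Fenchel--Young bound of Lemma~\ref{lemma:FY_xp} by an appropriate rescaling. I would apply Lemma~\ref{lemma:FY_xp} to the pair $a = (x/\alpha)^{1/p}$ and $b = (\alpha y)^{1/p}$, where $\alpha > 0$ is a scale parameter to be fixed later. With this substitution we have $ab = (xy)^{1/p} = \sqrt[p]{xy}$, so Lemma~\ref{lemma:FY_xp} yields
\begin{equation*}
\sqrt[p]{xy} \;\leq\; \frac{a^p}{p} + \frac{b^q}{q} \;=\; \frac{x}{p\alpha} + \frac{(\alpha y)^{1/(p-1)}}{q},
\end{equation*}
where we used $q = p/(p-1)$ and hence $q/p = 1/(p-1)$.

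Next I would choose $\alpha$ to align the first term with $x/\lambda$, i.e.\ set $\alpha = \lambda/p$. A short manipulation of exponents then identifies the coefficient of $(\lambda y)^{1/(p-1)}$ as $p^{-1/(p-1)}/q = (p-1)\, p^{-p/(p-1)} = c_p^{*}$, delivering the claimed inequality. The equality case is inherited from the Young-type equality in Lemma~\ref{lemma:FY_xp}: unwinding the substitution gives $\alpha^{p/(p-1)} = x/y^{1/(p-1)}$, i.e.\ $\lambda = p\alpha = p\, x^{(p-1)/p}/y^{1/p}$, while the degenerate case $x = y = 0$ is immediate.

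The proof presents no conceptual obstacle; the only real care is the exponent bookkeeping driven by the relation $q = p/(p-1)$. As a sanity check, the special case $p = 2$ yields $c_2^{*} = 1/4$ and the familiar weighted AM--GM bound $\sqrt{xy} \leq x/\lambda + \lambda y / 4$, with optimum at $\lambda = 2\sqrt{x/y}$, which matches the general equality condition $\lambda = p\, x^{(p-1)/p}/y^{1/p}$.
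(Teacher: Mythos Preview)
Your proposal is correct and is essentially the paper's own proof: the paper applies Lemma~\ref{lemma:FY_xp} to the pair $\sqrt[p]{px/\lambda}$ and $\sqrt[p]{\lambda y/p}$, which is exactly your substitution $a=(x/\alpha)^{1/p}$, $b=(\alpha y)^{1/p}$ with $\alpha=\lambda/p$ already plugged in. The exponent bookkeeping and the equality case you describe match the paper's argument.
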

\begin{proof}
	We apply the previous lemma to $ \sqrt[p]{\frac{p x}{\lambda}} $ and $ \sqrt[p]{\frac{\lambda y}{p}} $.
\end{proof}

In order to go from the variational form to the product form, we may use the following result.

\begin{lemma}
\label{lemma:reverse_AM_GM}
Let $ \alpha, \beta >0  $, then 
\begin{equation}
\label{eq:reverse_AM_GM}
	\inf_{\lambda >0} \frac{\alpha}{\lambda} + \beta \lambda^{\frac{1}{p-1}} = c_p \alpha^{\frac{1}{p}}
	\beta^{\frac{p-1}{p}},
\end{equation}
where $ c_p = p \frac{1}{p-1}^{\frac{p-1}{p}} $ satisfies $ c_p \cdot {c_p^*}^{\frac{p-1}{p}} =1 $, and the minimum is
reached at $ \lambda^{*} = (p-1)^{\frac{p-1}{p}} \frac{\alpha^{\frac{p-1}{p}}}{\beta^{\frac{p-1}{p}}} $.
	
\end{lemma}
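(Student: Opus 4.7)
The plan is to leverage Lemma~\ref{lemma:Gen_AM_GM} directly rather than redo a calculus optimization from scratch. Recall that Lemma~\ref{lemma:Gen_AM_GM} states that for all $x,y \ge 0$ and all $\lambda > 0$,
\begin{equation*}
\sqrt[p]{xy} \le \frac{x}{\lambda} + c_p^{*}(\lambda y)^{\frac{1}{p-1}},
\end{equation*}
with equality at a specific value of $\lambda$. The target statement is essentially the dual form: it says that the right-hand side, viewed as a function of $\lambda$ with $\alpha,\beta$ fixed, has infimum proportional to $\alpha^{1/p}\beta^{(p-1)/p}$.

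First, I would match variables. Setting $x = \alpha$ and choosing $y$ so that $c_p^{*} y^{\frac{1}{p-1}} = \beta$, i.e., $y = (\beta/c_p^{*})^{p-1}$, converts the right-hand side of Lemma~\ref{lemma:Gen_AM_GM} exactly into $\frac{\alpha}{\lambda} + \beta \lambda^{\frac{1}{p-1}}$. The left-hand side then becomes
\begin{equation*}
\sqrt[p]{\alpha \left(\beta/c_p^{*}\right)^{p-1}} = \frac{\alpha^{\frac{1}{p}} \beta^{\frac{p-1}{p}}}{(c_p^{*})^{\frac{p-1}{p}}}.
\end{equation*}
Using the identity $c_p \cdot (c_p^{*})^{\frac{p-1}{p}} = 1$ (which is a one-line algebraic check from the explicit formulas $c_p^{*} = (p-1)p^{-p/(p-1)}$ and $c_p = p(p-1)^{-(p-1)/p}$), this simplifies to $c_p \alpha^{\frac{1}{p}} \beta^{\frac{p-1}{p}}$. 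So the inequality in Lemma~\ref{lemma:Gen_AM_GM} becomes exactly
\begin{equation*}
c_p \alpha^{\frac{1}{p}} \beta^{\frac{p-1}{p}} \le \frac{\alpha}{\lambda} + \beta \lambda^{\frac{1}{p-1}} \quad \text{for all } \lambda>0,
\end{equation*}
which shows ``$\le$'' in the infimum.

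To get equality and identify the minimizer, I would use the equality case of Lemma~\ref{lemma:Gen_AM_GM}, which states the bound is tight when $\lambda = p x^{(p-1)/p} / y^{1/p}$. Substituting $x = \alpha$ and $y = (\beta/c_p^{*})^{p-1}$ and simplifying using once more the relation between $c_p$ and $c_p^{*}$ yields $\lambda^{*} = (p-1)^{\frac{p-1}{p}} \alpha^{\frac{p-1}{p}}/\beta^{\frac{p-1}{p}}$, matching the claim; since both terms of $f(\lambda) = \alpha/\lambda + \beta \lambda^{1/(p-1)}$ are continuous on $(0,\infty)$ and blow up as $\lambda \to 0^{+}$ or $\lambda \to \infty$, the infimum is attained at this critical point and equals $c_p \alpha^{1/p}\beta^{(p-1)/p}$.

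The only real obstacle is bookkeeping: the exponents $1/p$, $(p-1)/p$, and $1/(p-1)$ get shuffled around several times, and one has to track them carefully to confirm that the constants collapse as claimed. A direct alternative, if the bookkeeping via Lemma~\ref{lemma:Gen_AM_GM} feels opaque, is to differentiate $f(\lambda)$, solve $f'(\lambda^{*}) = 0$ by hand to recover the stated $\lambda^{*}$, substitute back, and observe that $f(\lambda^{*}) = \alpha^{1/p}\beta^{(p-1)/p} \bigl[(p-1)^{-(p-1)/p} + (p-1)^{1/p}\bigr]$, with the bracket factoring as $p(p-1)^{-(p-1)/p} = c_p$. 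Either route is elementary; the variational route is cleaner because it makes transparent that this lemma is simply the Fenchel conjugate of Lemma~\ref{lemma:Gen_AM_GM}.
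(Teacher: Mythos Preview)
Your proposal is correct and takes essentially the same approach as the paper: apply Lemma~\ref{lemma:Gen_AM_GM} with $x=\alpha$ and $y$ chosen so that the right-hand side matches $\frac{\alpha}{\lambda}+\beta\lambda^{1/(p-1)}$, then read off the infimum and minimizer from the equality case. You supply more of the bookkeeping (and the optional direct-calculus alternative), but the argument is the same.
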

\begin{proof}
	Applying the previous Lemma to $ x= \alpha $ and $ y = c_p^{\frac{p}{p-1}} \beta^{p-1} $ yields the result.
\end{proof}

\paragraph{Remark}
An alternative is to pick $ \lambda $ to make both terms equal resulting in the same result but with 2 as a leading
constant. Now 
\begin{align*}
	c_p &= p^{\frac{1}{p}} \frac{p}{p-1}^{\frac{p-1}{p}}\\
	    &= \exp{\left( \frac{1}{p} \log p + \frac{p-1}{p} \log \frac{p}{p-1} \right)}\\
	    &\leq \frac{1}{p}\cdot p + \frac{p-1}{p} \cdot \frac{p}{p-1}\\
	    &= 2.
\end{align*}
With equality if and only if $ p=2 $. So, the choice of $ c_p $ always yields a better leading constant. However, $ c_3 \simeq 1.88 $ so one could argue
that the gain is small. Since we will usually use Lemma~\ref{lemma:Gen_AM_GM}, $ c_p^{*} $ will naturally
appear and $ c_p $ will cancel it, ultimately making the leading constant as simple as possible.

\section{Experimental details}
\label{sec:experiment_details}

Here, we describe our implementation of the SOIDS algorithm in more detail, as well as the hyperparameters of all the methods used in our experiments. To run the SOIDS algorithm, one must minimise $\SIRs_t(\pi)$ w.r.t.\ $\pi$ in each round $t$. This is not straightforward, because $\SIRs_t(\pi)$ contains expectations w.r.t.\ the optimistic posterior $Q_t^+$. When we use the Spike-and-Slab prior in Appendix \ref{app:prior_comparator_bound}, we are not aware of any efficient method that can be used to maximise $\SIRs_t(\pi)$. Instead, we draw (approximate) samples $\theta^{(1)}, \dots, \theta^{(M)}$ from $Q_t^+$ to produce the estimates $\wt\Delta_t(\pi)$ and $\wt\IG_t(\pi)$ for the surrogate regret and the surrogate information respectively, where
\begin{equation*}
\wt\Delta_t(\pi) = \sum_{a \in \A}\pi(a)\frac{1}{M}\sum_{i=1}^{M}\Delta(a, \theta^{(i)}), \qquad \wt\IG_t(\pi) = \frac{1}{2}\sum_{a \in \A}\pi(a)\frac{1}{M}\sum_{i=1}^{M}\big(\siprod{\theta^{(i)} - \bar{\theta}_M}{a}\big)^2\,.
\end{equation*}

Here, $\bar{\theta}_M$ is the sample mean $\frac{1}{M}\sum_{i=1}^{M}\theta^{(i)}$. We then maximimse the approximate surrogate information ratio $\wt\IR_t^{(2)}(\pi)$, where
\begin{equation*}
\wt\IR_t^{(2)}(\pi) = \frac{(\wt\Delta_t(\pi))^2}{\wt\IG_t(\pi)}\,.
\end{equation*}

To draw the samples $\theta^{(1)}, \dots, \theta^{(M)}$, we use the empirical Bayesian sparse sampling procedure proposed by \citet{Hao_L_D21a}, which is designed to draw samples from the Bayesian posterior. To sample from the optimistic posterior, we incorporate the optimistic adjustment into the likelihood. This method replaces the theoretically sound spike-and-slab prior with a relaxation in which the ``spikes'' are Laplace distributions with small variance, and the ``slabs'' are Gaussian distributions with large variance. In particular, the density of this prior is
\begin{equation*}
q_1(\theta) = \sum_{\gamma \in \{0, 1\}^d}p(\gamma)\prod_{j=1}^{d}[\gamma_j\psi_1(\theta_j) + (1 - \gamma_j)\psi_0(\theta_j)]\,.
\end{equation*}

Here, $\psi_1(\theta)$ is the density function of a univariate Gaussian distribution, with mean 0 and variance $\rho_1$, and $\psi_0$ is the density function of a univariate Laplace distribution, with mean 0 and scale parameter $\rho_0$. $p(\gamma)$ is a product of Bernoulli distributions with mean $\beta$. In our experiments, we always use $\rho_1 = 10$, $\rho_0 = 0.1$ and $\beta = 0.1$. Also, we set the learning rates to $\eta = 1/2$ and $\lambda_t = \min(\frac{1}{2}, \frac{1}{10}\max(\sqrt{\frac{s\log(edt/s)}{dt}}, (\frac{\log(edt/s)}{t})^{2/3}))$.

Implementing the OTCS baseline exactly would require us to compute the means of the distributions played by an exponentially weighted average forecaster with a sparsity prior. These distributions are the same as the optimistic posterior, except $\lambda_t = 0$ (i.e.\ there is no optimistic adjustment). In our implementation of the OTCS baseline, we draw samples using the same empirical Bayesian sparse sampling procedure, and then replace the exact means with the sample means. We use the same choices for the parameters $\eta$, $\rho_1$, $\rho_0$ and $\beta$. We set the radii of the confidence sets to the values given in Theorem 4.7 of \citet{clerico2025confidence}

For the LinUCB baseline, we set the radii of the confidence sets to the values given in Theorem 2 of \citet{abbasi2011improved}. For the ESTC baseline, we set the exploration length $T_1$ to $50$ when $d=20$, $100$ when $d = 40$ and $d=100$. These values were chosen based on a small amount of trial and error. The theoretically motivated values in Theorem 4.2 of \citet{Hao_L_W20} are much larger than these values. Also for ESTC, we set the LASSO regularisation parameter to $\lambda = 4\sqrt{\log(d)/T_1}$, which is the value given in Theorem 4.2 of \citet{Hao_L_W20}.
\end{document}